\documentclass[a4paper,12pt]{article}

\usepackage{booktabs} 

\usepackage[utf8]{inputenc}
\usepackage[T1]{fontenc}
\usepackage{mathtools, amsthm, amsmath,amsfonts, dsfont, version}
\usepackage[linesnumbered,ruled]{algorithm2e}
\usepackage{colortbl}
\usepackage[normalem]{ulem}

\newcommand{\E}{\{0,1\}^n}
\newcommand{\N}{\mathbb{N}}

\newcommand{\R}{\mathbb{R}}
\newcommand{\M}{\mathcal{M}}

\newcommand{\eps}{\varepsilon}

\newcommand{\oea}{$(1 + 1)$~EA\xspace}

\newcommand{\ea}{$(\mu, \lambda)$~EA\xspace}
\newcommand{\oclea}{$(1, \lambda)$~EA\xspace}
\newcommand{\ml}{$(\mu + \lambda)$\xspace}

\newcommand{\onemax}{\textsc{OneMax}\xspace}

\DeclarePairedDelimiter\ceil{\lceil}{\rceil}

\newcommand{\abs}[1]{\left\lvert#1\right\rvert}

\DeclareMathOperator{\Bin}{Bin}
\DeclareMathOperator{\Ber}{Ber}
\DeclareMathOperator{\Var}{Var}

\newtheorem{theorem}{Theorem}
\newtheorem{lemma}{Lemma}
\newtheorem{definition}{Definition}
\newtheorem{corollary}{Corollary}

\begin{document}
\title{The Efficiency Threshold for the Offspring Population Size of the ($\mu$, $\lambda$) EA}

\author{Denis Antipov\\
ITMO University \\
St. Petersburg \\
Russia \\
\and
Benjamin Doerr\\
\raisebox{0mm}[0mm][0mm]{\'E}cole Polytechnique \\
 CNRS \\
Laboratoire d'Informatique (LIX)\\
Palaiseau\\
France \\
\and
Quentin Yang \\
\raisebox{0mm}[0mm][0mm]{\'E}cole Polytechnique \\
Palaiseau\\
France}

\maketitle

\begin{abstract}
  Understanding when evolutionary algorithms are efficient or not, and how they efficiently solve problems, is one of the central research tasks in evolutionary computation. In this work, we make progress in understanding the interplay between parent and offspring population size of the $(\mu,\lambda)$ EA. Previous works, roughly speaking, indicate that for $\lambda \ge (1+\eps) e \mu$, this EA easily optimizes the OneMax function, whereas an offspring population size $\lambda \le (1 -\eps) e \mu$ leads to an exponential runtime.

  Motivated also by the observation that in the efficient regime the $(\mu,\lambda)$ EA loses its ability to escape local optima,  we take a closer look into this phase transition. Among other results, we show that when $\mu \le n^{1/2 - c}$ for any constant $c > 0$, then for any $\lambda \le e \mu$ we have a super-polynomial runtime. However, if $\mu \ge n^{2/3 + c}$, then for any $\lambda \ge e \mu$, the runtime is polynomial. For the latter result we observe that the $(\mu,\lambda)$ EA profits from better individuals also because these, by creating slightly worse offspring, stabilize slightly sub-optimal sub-populations. While these first results close to the phase transition do not yet give a complete picture, they indicate that the boundary between efficient and super-polynomial is not just the line $\lambda = e \mu$, and that the reasons for efficiency or not are more complex than what was known so far.
\end{abstract}

\section{Introduction}

While the theory of evolutionary algorithms (EAs) has made considerable progress in the last 20 years, several topics remain little understood and pose problems to a rigorous analysis, among them non-trivial populations and non-elitist algorithms. As examples, we note that the asymptotically precise runtime of the \ml on the \onemax benchmark function was only determined very recently~\cite{AntipovDFH18}, whereas the runtime of the \ea on this simple function is not yet determined asymptotically precise. Consequently, we do not fully comprehend the working principles of populations and comma selection. To try to overcome this shortage, we continue the classic line of theoretical research of regarding simple test functions, gaining a rigorous understanding how simple EAs optimize these, and from this try to gain a broader understanding of certain working principles. In short, in this work we continue the existing research efforts of understanding how the \ea optimizes the \onemax function, though with different methods and with a higher degree of precision than before.

What is known about how the \ea optimizes \onemax is roughly the following. When the offspring population size $\lambda$ is at most $(1-\eps)e\mu$ for some positive constant $\eps$, then the expected runtime (measured by the number of fitness evaluations until an optimum is found) is exponential in~$n$~\cite{Lehre10}. When $\lambda \ge (1+\eps)e\mu$ and $\lambda \ge C \ln n$ with $C$ a sufficiently large constant, then the runtime becomes polynomial, and in fact, $O(n \lambda \log \lambda)$~\cite{DangL16algo}.

There is a good reason for these results. Let $x$ be a parent individual with high fitness, that is, $\onemax(x)$ is close to $n$ and thus $d := d(x) := n - \onemax(x)$ is small. When generating offspring from $x$ via standard-bit mutation with mutation rate $\frac 1n$, then with probability roughly $\frac 1 e$ the offspring has the same fitness as the parent, with probability $O(\frac dn)$ the offspring is better than the parent, and else it is worse. Consequently, when $d$ is small, the number of individuals with best fitness in the population, in expectation, increases per iteration by a factor of at least $(1+\eps)$ when $\lambda \ge (1+\eps)e\mu$ and it decreases by a factor of roughly $(1-\eps)$ when $\lambda \le (1-\eps)e\mu$. In the efficient case, the $(1+\eps)$ multiplicative increase of the number of top-individuals suffices to ensure that a single top individual has a constant chance to take over the whole population in $O(\log \mu)$ iterations. We note that a number of highly non-trivial arguments~\cite{Lehre10,DangL16algo} are necessary to transform these observations into rigorous proofs for the runtimes cited above.

While these results are mathematically non-trivial despite their intuitive explanations, they only discuss the easy situations where the number of top individuals is subject to a clear drift, either into the right or the wrong direction. These situations might be too extreme to lead to a full understanding of the population dynamics of this EA. Moreover, these are typically the situations in which using comma selection is not a good idea. For the case of negative, but also positive drift, that is, $\lambda \ge (1+\eps)e\mu$ and $\lambda$ sufficiently large, the main advantage of comma selection is absent. We recall that comma selection is used, among others, with the hope that by not keeping good parent individuals in the population, one can prevent premature convergence. If $\lambda \ge (1+\eps)e\mu$ and $\lambda$ is sufficiently large, then discarding the parent population does not help, since with high probability it reappears in the offspring population.

To be more precise, let us assume that we have a parent population that is converged to a local optimum. Then with $\lambda \ge (1+\eps)e\mu$, an expected number of $(1+\eps) \mu$ copies of this parent are generated as offspring. Since these are generated independently, with probability $1 - e^{-\Omega(\mu)}$ at least $\mu$ such copies are generated, which means that inferior offspring cannot enter the population. For this reason, the two regimes with clear drift are possibly not the most interesting ones for using an EA with comma selection.

\textbf{Our results:} To gain a deeper understanding of the population dynamics of EAs in the case where there is no clear drift, we regard settings with $\lambda$ closer to $e\mu$. We prove three results inside this phase transition region $\lambda = (1 \pm \eps) e \mu$.

We first show that the super-polynomial range already starts when $\lambda \le (1-\eps)e\mu$ for $\eps = \omega(n^{-1/2})$. To prove this result, we do not extend the general but technical negative-drift-in-populations theorem of~\cite{Lehre10} to smaller negative drifts, but instead use a basic drift argument. This approach avoids the use of family trees and branching processes and might thus be a light-weight alternative for similar analysis problems as well.

When $\mu$ is not overly large, namely $\mu \le n^{1/2-c}$ for an arbitrary small constant $c > 0$, then the weaker condition $\lambda \le e \mu$ suffices to lead to a super-polynomial runtime. Note that in this regime, we have essentially no drift in the sub-population of best individuals. The reason why the \ea still has difficulties to find the optimum is that in this no-drift regime, the number of best individuals performs an unbiased a random walk (with typical step sizes up to $\sqrt \mu$). When this walk reaches zero, no individual on this fitness level is left and the \ea, due to the limited population size, takes a non-trivial amount of time to re-generate such an individual. The time this walk takes to reach zero is roughly $O(\mu)$. Hence if $\mu \le n^{1/2-c}$ and the best fitness in the population is close to $n$, then the $O(\mu)$ iterations with $O(\lambda) = O(\mu)$ offspring generated are not enough to produce a strictly better individual. For this reason, we exhibit here a (slow, namely constant per $O(\mu)$ iterations) negative drift in the fitness of the best individual in the population. This negative drift translates into a long runtime via a negative drift theorem due to Hajek~\cite{Hajek82}.

When $\mu$ is slightly larger, namely at least $n^{2/3 + c}$ for an arbitrary constant $c>0$, then for all $\lambda \ge e \mu$, that is, again including settings with essentially no drift, we have a polynomial runtime of $O(n \lambda \log n)$, which means  $O(n \log n)$ iterations. This is, the same runtime guarantee as shown for the constant $(1+\eps)$ drift case in~\cite{DangL16algo}, but the reasons are different. Here, we have essentially a no-drift regime. Hence the number of individuals on the highest fitness level performs an unbiased random walk. Different from above, the larger population sizes implies that before this walk reaches zero, some individuals are generated on a higher level. This is not the immediate pathway to the optimum since these small sub-populations have a good chance of dying out quickly (they perform the same type of unbiased random walk, but starting close to zero). The reason why these climbers make a difference is that they stabilize the fitness level below them. We recall that such an individual, when chosen as parent, creates an equally fit offspring with probability roughly $\frac 1e$ . In addition, with probability $\frac 1e$ it creates an offspring on the next lower fitness level. These offspring create a positive drift in this level and hinder it from dying out after $O(\mu)$ iterations. Consequently, this lower level has ample time to create further climbers until one of them successfully take over the population.

\subsection*{Related Work}

For reasons of space, we shall not discuss the full literature on theoretical works on population-based and non-elitist EAs. We refer to the textbooks~\cite{AugerD11,Jansen13,NeumannW10} for a good overview of the field. Clearly visible is that the vast majority of the works in this field considers elitist EAs, and often, the \oea with trivial populations, whereas non-elitism appears only in a small number of works which, e.g., discuss the influence of different selection mechanisms. So we mention only two strongly related series of works.

The very general analyses of non-elitist EAs in~\cite{Lehre10,Lehre11,DangL16algo,CorusDEL18} give as special case the results for the \ea mentioned above. The downside of such a general machinery is that it gives the non-expert less understanding of how the \ea really solves a problem. This is particularly true for the general results for upper bounds~\cite{Lehre11,DangL16algo,CorusDEL18}, which are proven via an intricate potential function argument. Consequently, our insight that in a run of the \ea with $\lambda = (1+\eps) e \mu$, the best individual with constant probability takes over the whole population in $O(\log \mu)$ iterations is not easily derived from these works.

A threshold behavior for the \oclea was observed in~\cite{JagerskupperS07,NeumannOW09,RoweS14}. The latest of these works~\cite{RoweS14} shows that for $\lambda \ge \log_{\frac{e}{e-1}} n \approx 2.18 \ln n$ the \oclea optimizes \onemax in an expected number of $O(n \log n + \lambda n)$ fitness evaluations, whereas for $\lambda \le (1-\eps) \log_{\frac{e}{e-1}} n$, the runtime is $\exp(\Omega(n^{\eps/2}))$ with high probability.

\section{Preliminaries and Notation}

\subsection{Notation}
By the set of natural integers $\N$ we denote the set of non-negative integers $\{0, 1, 2,\dots\}$.

For any probability distribution $\mathcal{L}$ and random variable $X$, we write $X\sim\mathcal{L}$ to indicate that $X$ follows the law $\mathcal{L}$.
We denote the Bernoulli law of parameter $p\in[0,1]$ by $\Ber(p)$
and the binomial law with parameters $m\in\N$ and $p\in[0,1]$ by $\Bin(m,p)$.

An empty product (i.e. a product over an empty set) is always considered to be $1$, an empty sum is always $0$. The infimum of the empty set is $+\infty$.

\subsection{Problem Statement}

In this work we consider the optimization of $n$-dimensional pseudo-Boolean functions $\E \to \R$. In particular, we regard the \onemax function which returns the number of one-bits in its argument. 
We call the value $\onemax(x)$ \emph{the fitness of $x$} and for brevity we denote it by $f(x)$.


We analyze the performance of the \ea when optimizing pseudo-Boolean functions such as \onemax.
The \ea is a non-elitist evolutionary algorithm. It starts with a population that consists of $\mu$ random vectors from $\E$. Then it repeats the following cycle until some stopping criteria is met. The algorithm chooses an individual $x$ from the population uniformly at random and then creates its offspring by copying $x$ and flipping each bit independently with probability $\frac{1}{n}$. After obtaining $\lambda$ offspring it chooses the $\mu$ best (in terms of fitness) of them as the next population. The pseudo-code of the \ea is shown in Algorithm~\ref{alg:ea}.

\begin{algorithm}[h]
    Create a population $P_0$ of $\mu$ individuals by choosing each individual from $\E$	u.a.r.\;
    $t \gets 0$\;
    \While{not terminated}
        {
        \For{$i \in [1..\lambda]$}
            {$x_i \gets$ a copy of an individual chosen uniformly at random from the parent population $P_t$\;
            Flip each bit in $x_i$ independently with probability $\frac{1}{n}$\;
            }
        $t \gets t + 1$\;
        $P_t \gets \mu$ individuals with largest $f$-values among $x_1,\dots,x_\lambda$. Ties are broken randomly.
        }
    \caption{The \ea maximizing $f:\E\to\R$}
    \label{alg:ea}
\end{algorithm}

Every iteration of the outer loop is called a \emph{generation}. For $t\in\N$, we define $P_t$ as the parent population of the algorithm after generation $t$.
We denote by $\mathcal{M}$ the mutation operator over $\E$. It takes an argument $x\in\E$ and computes $\mathcal{M}(x)$ by flipping independently each bit of $x$ with a probability $\frac{1}{n}$.
For all individuals $x\in\E$, we define the difference in fitness
\[\delta_x:=f(\mathcal{M}x)-f(x).\]
Note that both $\M(x)$ and $\delta_x$ are random variables.

We call the \emph{runtime} of an optimization algorithm the number of evaluations of the target function until this algorithm finds an optimum. 

\subsection{Useful Tools}

\textbf{Transition probabilities.}
We have the following two estimates for the distribution of $\delta_x$.

\begin{lemma}
\label{lem:mutlaw}
Let $x$ be an individual of fitness $f(x) = n - d$. Then, for all $k \geq 1$, we have
\[\Pr(\delta_x=k)\leq \dbinom{d}{k}\left(\frac{1}{n}\right)^k.\]
\end{lemma}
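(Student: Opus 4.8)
The plan is to reduce the event $\{\delta_x = k\}$ to a purely combinatorial statement about which of the $d$ zero-bits of $x$ get flipped, and then conclude with a union bound.

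First I would fix notation. Let $Z \subseteq [1..n]$ be the set of positions in which $x$ has a zero-bit, so that $|Z| = d$, and let $F \subseteq Z$ denote the (random) set of these positions that the mutation operator $\M$ flips. Since $\M$ flips each bit independently with probability $\frac1n$, for every fixed $S \subseteq Z$ we have $\Pr(S \subseteq F) = n^{-|S|}$. The key observation is then that flipping a zero-bit raises the fitness by exactly $1$, flipping a one-bit lowers it by exactly $1$, and there are no other contributions, so $\delta_x = |F| - (\text{number of flipped one-bits}) \le |F|$. Hence, for $k \ge 1$, the event $\{\delta_x = k\}$ is contained in the event $\{|F| \ge k\}$, i.e., the event that at least $k$ of the $d$ zero-bits are flipped.

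It remains to bound $\Pr(|F| \ge k)$. The event $\{|F| \ge k\}$ holds if and only if there exists a size-$k$ subset $S \subseteq Z$ with $S \subseteq F$, so a union bound over all such subsets gives
\[
\Pr(\delta_x = k) \le \Pr(|F| \ge k) = \Pr\!\left(\bigcup_{\substack{S \subseteq Z \\ |S| = k}} \{S \subseteq F\}\right) \le \sum_{\substack{S \subseteq Z \\ |S| = k}} \Pr(S \subseteq F) = \binom{d}{k} \frac{1}{n^k},
\]
which is the claimed inequality.

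There is essentially no hard step here; the only points requiring a little care are that the reduction $\{\delta_x = k\} \subseteq \{|F| \ge k\}$ genuinely uses $k \ge 1$ (for $k = 0$ the bound is the trivial statement $\Pr(\delta_x = 0) \le 1$), and that the union bound overcounts the configurations in which strictly more than $k$ zero-bits flip — but this overcounting only weakens the estimate, so it is harmless.
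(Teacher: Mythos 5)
Your proof is correct: the reduction of $\{\delta_x = k\}$ for $k \ge 1$ to the event that at least $k$ of the $d$ zero-bits flip, followed by a union bound over size-$k$ subsets, is exactly the standard argument for this bound (the paper omits its own proof of this lemma, but this is the canonical route). No gaps.
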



\begin{lemma}
\label{lem:Pdelta_x=0}
Let $x$ be an individual of fitness $f(x)=n-d$. Then
\[\Pr(\delta_x=0)=\sum\limits_{k=0}^{\min\{d, n - d\}}\dbinom{d}{k}\dbinom{n-d}{k}\left(\frac{1}{n}\right)^{2k}\left(1-\frac{1}{n}\right)^{n-2k}.\]
\end{lemma}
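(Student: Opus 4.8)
The plan is to decompose the event $\{\delta_x = 0\}$ according to how many bits of each type the mutation operator flips. Since $f(x) = n-d$, the individual $x$ has exactly $n-d$ one-bits and $d$ zero-bits. When we apply $\M$, let $A$ denote the number of one-bits that are flipped (turning them into zeros) and $B$ the number of zero-bits that are flipped (turning them into ones). Then $f(\M x) = (n-d) - A + B$, so $\delta_x = B - A$, and therefore the event $\{\delta_x = 0\}$ coincides with the event $\{A = B\}$.

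Next I would compute $\Pr(A = B = k)$ for each integer $k \ge 0$. Because $\M$ flips each of the $n$ bits independently with probability $\frac 1n$, I can sum over the choice of which bits flip: there are $\binom{n-d}{k}$ ways to pick the $k$ one-bits that flip and $\binom{d}{k}$ ways to pick the $k$ zero-bits that flip, and for any fixed such choice the probability that exactly these $2k$ bits flip while the remaining $n-2k$ bits stay unchanged is $\left(\frac 1n\right)^{2k}\left(1-\frac 1n\right)^{n-2k}$. Summing over the (disjoint) choices gives $\Pr(A=B=k) = \binom{d}{k}\binom{n-d}{k}\left(\frac 1n\right)^{2k}\left(1-\frac 1n\right)^{n-2k}$.

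Finally, the events $\{A=B=k\}$ for $k \ge 0$ are pairwise disjoint and their union is exactly $\{\delta_x = 0\}$; moreover $k$ cannot exceed $d$ (there are only $d$ zero-bits) nor $n-d$ (there are only $n-d$ one-bits), so the only potentially nonzero terms are those with $0 \le k \le \min\{d, n-d\}$. Summing $\Pr(A=B=k)$ over this range yields the claimed identity.

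There is essentially no hard step here; this is a direct counting argument. The only points worth stating carefully are that the events $\{A=B=k\}$ indeed partition $\{\delta_x = 0\}$, and that the summation is correctly truncated at $\min\{d,n-d\}$ — larger $k$ would contribute a vanishing binomial coefficient anyway, so including them would change nothing, but giving the tight range keeps the formula clean.
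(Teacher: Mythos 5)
Your proof is correct and is the standard counting argument one would expect for this identity (the paper states the lemma without proof, treating it as a basic fact). Decomposing $\{\delta_x=0\}$ into the disjoint events that exactly $k$ zero-bits and exactly $k$ one-bits are flipped, and multiplying the number of such configurations by the probability $\left(\frac 1n\right)^{2k}\left(1-\frac 1n\right)^{n-2k}$ of any fixed configuration, gives exactly the claimed sum.
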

%

\textbf{Stochastic domination.}

For two real random variables $X$ and $Y$ we say that $Y$ \emph{stochastically dominates} $X$ if for all $k\in\R$ we have $\Pr[Y \geq k] \geq \Pr[X\geq k]$. See~\cite{Doe18b} for a more detailed description of this concept.
In that case, we use the notation $X \preceq Y$. We use this notion to argue and make precise that batter parents generate better offspring. The following result is from \cite{Witt13}.

\begin{lemma}
\label{stochastic}
Let $x,y\in\E$ such that $f(x)\leq f(y)$. Let $X = \mathcal{M}(x)$ and $Y = \mathcal{M}(y)$. Then $f(X) \preceq f(Y)$.
\end{lemma}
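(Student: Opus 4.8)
The plan is to establish a coupling of $X=\M(x)$ and $Y=\M(y)$ under which $X\le Y$ holds coordinate-wise with probability one; summing the coordinates then gives $f(X)\le f(Y)$ pointwise, and this immediately yields $\Pr[f(X)\ge k]\le\Pr[f(Y)\ge k]$ for every $k$, i.e.\ $f(X)\preceq f(Y)$.

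First I would reduce to a convenient pair of representatives. Since $\onemax$ is invariant under permutations of the bit positions and $\M$ commutes in distribution with such permutations, the law of $f(\M(x))$ depends on $x$ only through $a:=f(x)$; concretely, $f(\M(x))$ is distributed as $\Bin(a,1-\tfrac1n)+\Bin(n-a,\tfrac1n)$ with the two summands independent (each surviving one-bit, each flipped zero-bit). As stochastic domination depends only on the two laws, it therefore suffices to prove the claim for the nested representatives $x=1^a0^{n-a}$ and $y=1^b0^{n-b}$, where $a=f(x)\le b=f(y)$. These satisfy $x\le y$ coordinate-wise and differ exactly on the positions in $I:=\{a+1,\dots,b\}$.

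Next I would build the coupling. Sample mutually independent random variables: a flip indicator $c_j\sim\Ber(\tfrac1n)$ for each $j\notin I$, and a uniform $U_i$ on $[0,1]$ for each $i\in I$. For $j\notin I$ put $X_j=Y_j=x_j\oplus c_j$ (legitimate since $x_j=y_j$); for $i\in I$ put $X_i=\mathds{1}[U_i<\tfrac1n]$ and $Y_i=\mathds{1}[U_i<1-\tfrac1n]$. The per-bit marginals are exactly those of standard-bit mutation: a zero-bit of $x$ becomes a one with probability $\tfrac1n$, matching $X_i$; a one-bit of $y$ stays a one with probability $1-\tfrac1n$, matching $Y_i$; and the shared positions are treated by genuine $\tfrac1n$-flips. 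As the coordinates are produced independently, $X\sim\M(x)$ and $Y\sim\M(y)$. Because $\tfrac1n\le 1-\tfrac1n$ (here $n\ge 2$), we have $\{U_i<\tfrac1n\}\subseteq\{U_i<1-\tfrac1n\}$, hence $X_i\le Y_i$ for all $i\in I$ while $X_j=Y_j$ for $j\notin I$. Thus $X\le Y$ coordinate-wise, so $f(X)=\sum_i X_i\le\sum_i Y_i=f(Y)$ almost surely, which gives the assertion.

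The step needing the most care is the reduction to nested representatives: for arbitrary $x,y$ with $f(x)\le f(y)$ the supports need not be nested, so the per-bit coupling above does not apply directly, and one must first invoke permutation-invariance to replace $x,y$ by $1^a0^{n-a}$ and $1^b0^{n-b}$ (equivalently, one may argue directly on the two independent binomials, being careful that enlarging $a$ helps the $\Bin(a,1-\tfrac1n)$ term but hurts the $\Bin(n-a,\tfrac1n)$ term, so a naive term-wise comparison fails and a coupling is again the clean route). It is also worth noting the mild hypothesis $n\ge 2$: without it a monotone per-bit coupling — and in fact the statement itself — breaks down, as seen for $n=1$, $x=0$, $y=1$, where $\M(x)$ is deterministically $1$ and $\M(y)$ deterministically $0$.
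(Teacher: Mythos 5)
The paper does not prove this lemma at all; it simply cites it from Witt (2013), so there is no in-paper argument to compare against. Your self-contained proof is correct and is essentially the standard argument for this fact: reduce via permutation-invariance to the nested representatives $1^a0^{n-a}$ and $1^b0^{n-b}$, couple the shared coordinates by a common flip indicator and the differing coordinates by a common uniform so that $X\le Y$ coordinate-wise, and conclude via the coupling characterization of domination. Your remark that the statement genuinely requires $n\ge 2$ (it fails for $n=1$) is a valid, if practically irrelevant, caveat that the paper's citation glosses over.
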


We also use the following well-known fact.
\begin{lemma}
\label{lem:expectancyDomi}
Let $X$ and $Y$ be two random variables over $\N$ such that $X\preceq Y$. Then $E[X]\leq E[Y]$.
\end{lemma}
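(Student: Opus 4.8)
The plan is to derive both expectations from the standard tail-sum (``layer-cake'') identity for random variables supported on $\N$, and then to compare the resulting series term by term. Concretely, the first step is to establish that for any random variable $Z$ over $\N$,
\[
E[Z] = \sum_{k=1}^{\infty} \Pr[Z \geq k].
\]
This follows elementarily: write $E[Z] = \sum_{j=1}^{\infty} j \Pr[Z = j] = \sum_{j=1}^{\infty} \sum_{k=1}^{j} \Pr[Z = j]$, and since every term is nonnegative we may swap the order of summation to obtain $\sum_{k=1}^{\infty} \sum_{j=k}^{\infty} \Pr[Z=j] = \sum_{k=1}^{\infty} \Pr[Z \geq k]$. (Equivalently, this is Tonelli's theorem applied to the representation $Z = \sum_{k \geq 1} \mathbf{1}\{Z \geq k\}$.)

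The second step applies this identity to $X$ and to $Y$, giving $E[X] = \sum_{k\geq 1} \Pr[X \geq k]$ and $E[Y] = \sum_{k \geq 1} \Pr[Y \geq k]$. By the definition of stochastic domination, $X \preceq Y$ means $\Pr[X \geq k] \leq \Pr[Y \geq k]$ for all real $k$, in particular for all positive integers $k$. Hence the series for $E[X]$ is dominated term by term by the series for $E[Y]$, and summing the inequality over $k$ yields $E[X] \leq E[Y]$.

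There is essentially no obstacle here: the only point worth noting is that all quantities live in $[0,+\infty]$, so the chain of (in)equalities remains valid regardless of whether the expectations are finite, and no integrability hypothesis is needed because every summand is nonnegative. The lemma is thus a one-line consequence of the tail-sum formula once that formula is in place.
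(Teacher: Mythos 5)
Your proof is correct: the tail-sum identity $E[Z]=\sum_{k\geq 1}\Pr[Z\geq k]$ for $\N$-valued random variables, combined with the termwise inequality $\Pr[X\geq k]\leq\Pr[Y\geq k]$ from the definition of $X\preceq Y$, is the standard argument, and your remark that nonnegativity makes the interchange of summation and the comparison valid even when the expectations are infinite is exactly the right point to flag. The paper itself cites this lemma as a well-known fact and gives no proof, so there is nothing to compare against; your write-up would serve as a complete and appropriate justification.
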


A \emph{coupling} for two random variables $X$ and $Y$ is a pair of random variables $(\Tilde{X},\Tilde{Y})$ defined over the same probability space such that $X$ and $\Tilde{X}$ as well as $Y$ and $\Tilde{Y}$ follow the same law. The following result is well-known.

\begin{theorem}
\label{thm:coupling}
  Let $X$ and $Y$ be two random variables. Then the two following statements are equivalents.
  \begin{itemize}
      \item[1)] $X\preceq Y$.
      \item[2)] There exists a coupling $(\Tilde{X},\Tilde{Y})$ such that $\Tilde{X}\leq\Tilde{Y}$.
  \end{itemize}
\end{theorem}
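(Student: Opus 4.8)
The plan is to prove the two implications separately; the direction from a monotone coupling to stochastic domination is immediate, while the converse is the substantive part. For $2) \Rightarrow 1)$, suppose $(\tilde{X}, \tilde{Y})$ is a coupling with $\tilde{X} \le \tilde{Y}$ surely. Then for every $k \in \R$ the event $\{\tilde{X} \ge k\}$ is contained in $\{\tilde{Y} \ge k\}$, so $\Pr[\tilde{X} \ge k] \le \Pr[\tilde{Y} \ge k]$. Since $\tilde{X}$ has the law of $X$ and $\tilde{Y}$ the law of $Y$, this reads $\Pr[X \ge k] \le \Pr[Y \ge k]$ for all $k$, i.e. $X \preceq Y$.

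For $1) \Rightarrow 2)$ I would construct an explicit coupling from a single uniform variable using quantile functions. Let $F_X(t) = \Pr[X \le t]$ and $F_Y(t) = \Pr[Y \le t]$, and define the generalized inverses $F_X^{-1}(u) = \inf\{t \in \R : F_X(t) \ge u\}$ for $u \in (0,1)$, and likewise $F_Y^{-1}$. Let $U$ be uniform on $(0,1)$ and set $\tilde{X} := F_X^{-1}(U)$ and $\tilde{Y} := F_Y^{-1}(U)$ on the probability space carrying $U$. Two facts must be verified. First, the quantile transform gives the right marginals: one checks the equivalence $F_X^{-1}(u) \le t \iff u \le F_X(t)$ (using monotonicity and right-continuity of $F_X$), whence $\Pr[\tilde{X} \le t] = \Pr[U \le F_X(t)] = F_X(t)$, so $\tilde{X} \sim X$, and symmetrically $\tilde{Y} \sim Y$. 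Second, $X \preceq Y$ is equivalent, after passing to complements and taking limits, to the pointwise inequality $F_X(t) \ge F_Y(t)$ for all $t \in \R$; this yields $\{t : F_Y(t) \ge u\} \subseteq \{t : F_X(t) \ge u\}$, so the infimum defining $F_X^{-1}(u)$ is no larger than the one defining $F_Y^{-1}(u)$, i.e. $F_X^{-1}(u) \le F_Y^{-1}(u)$ for every $u$. In particular $\tilde{X} = F_X^{-1}(U) \le F_Y^{-1}(U) = \tilde{Y}$ surely, which establishes $2)$.

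The main obstacle is the careful bookkeeping in the direction $1) \Rightarrow 2)$: establishing the equivalence $F_X^{-1}(u) \le t \iff u \le F_X(t)$ (handling boundary values of $u$ and right-continuity of the CDF), translating $X \preceq Y$ into $F_X \ge F_Y$ without an off-by-one between strict and non-strict inequalities, and deducing the ordering of the generalized inverses. For the applications in this paper all random variables take values in $\N$, so one could alternatively give a direct finite construction --- for each $k$, let $\tilde{X} = k$ exactly when $U$ falls in $[F_X(k-1), F_X(k))$, and similarly for $\tilde{Y}$, then check monotonicity level by level using $F_X \ge F_Y$ --- but the quantile argument covers this case uniformly and is cleaner to state.
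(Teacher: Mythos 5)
Your proof is correct: the easy direction via the event inclusion $\{\tilde X\ge k\}\subseteq\{\tilde Y\ge k\}$ and the converse via the quantile (inverse-CDF) coupling driven by a single uniform variable are exactly the standard argument for this well-known fact, and you correctly identify the two points that need care (the equivalence $F_X^{-1}(u)\le t\iff u\le F_X(t)$ via right-continuity, and translating $X\preceq Y$ into $F_X\ge F_Y$ by complementation and a limit). The paper states this theorem without proof, citing it as well-known, so there is nothing to compare against; your argument is the one the paper implicitly relies on.
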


%


\textbf{Binomial distributions.}

We exploit the following estimate for binomial distributions.

\begin{theorem}
\label{exexp}
Let $m\geq 1,\frac{1}{m} < p \leq 1$ and $X\sim\Bin(m,p)$. Then

\[\Pr(X\geq E[X])>\frac{1}{4}.\]
\end{theorem}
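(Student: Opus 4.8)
The plan is to reduce the statement to a clean extremal problem over the pair $(m,p)$. Write $\mu := E[X] = mp$ and $k := \lceil \mu \rceil$. Since $p > 1/m$ we have $\mu > 1$, hence $k \ge 2$, and the claim to be proved is $\Pr(X \ge k) > \tfrac14$.

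First I would use monotonicity in the success probability. For fixed $m$ and fixed integer $k$, the map $p \mapsto \Pr(\Bin(m,p) \ge k)$ is a polynomial in $p$, hence continuous, and it is non-decreasing --- in fact strictly increasing on $(0,1)$ whenever $1 \le k \le m$ --- by the standard coupling representing the $i$-th trial as $\mathbf{1}[U_i \le p]$ for i.i.d.\ uniform $U_i$. The set of $p$ with $\lceil mp \rceil = k$ is the half-open interval $\bigl((k-1)/m,\, k/m\bigr]$, which is non-empty only if $m \ge k$; on it the infimum of $\Pr(\Bin(m,p) \ge k)$ equals the left limit $\Pr(\Bin(m,(k-1)/m) \ge k)$ and is not attained. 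Note that $\Bin(m,(k-1)/m)$ has the \emph{integer} mean $k-1$. Hence it suffices to prove
\[
  \Pr\bigl(\Bin(m,(k-1)/m) \ge k\bigr) \ \ge\ \tfrac14 \qquad\text{for all integers } m \ge k \ge 2 ,
\]
since strict monotonicity in $p$ then yields the strict inequality $\Pr(X \ge k) > \tfrac14$ for the original $X$ (the boundary case $p=1$ being trivial). Writing $\ell := k-1 \ge 1$, the next step is the monotonicity claim that
\[
  m \ \longmapsto\ \Pr\bigl(\Bin(m,\ell/m) \ge \ell+1\bigr)
\]
is non-decreasing on $\{\ell+1, \ell+2, \dots\}$ (it increases to $\Pr(\mathrm{Pois}(\ell) \ge \ell+1)$). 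Granting this, the minimum over $m \ge k$ is attained at $m = k$, where the value is $\Pr(\Bin(k,(k-1)/k) = k) = (1-1/k)^k$; and since $(1-1/k)^k$ is increasing in $k \ge 2$ with minimum $(1/2)^2 = \tfrac14$ at $k = 2$, the bound follows.

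The hard part is the monotonicity-in-$m$ claim: the parametrization $\Bin(m,\ell/m)$ changes both the number and the bias of the trials, so there is no single coupling giving a stochastic ordering --- indeed $\Pr(\Bin(m,\ell/m) \ge 1)$ is \emph{decreasing} in $m$, so the phenomenon is specific to the level $\ell+1$. I would establish it either by induction on $m$ from the explicit binomial sum together with Pascal's rule, or via the Beta-integral identity $\Pr(\Bin(m,p) \ge j) = j\binom{m}{j}\int_0^p t^{j-1}(1-t)^{m-j}\,\mathrm{d}t$, where the substitution $t = s/m$ rewrites the quantity as an increasing prefactor $\tfrac{1}{\ell!}\prod_{i=0}^{\ell}(1 - i/m)$ times $\int_0^\ell s^\ell (1-s/m)^{m-\ell-1}\,\mathrm{d}s$, reducing the claim to a routine (if slightly tedious) comparison of consecutive values. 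Alternatively this step can be bypassed altogether by invoking the tight lower bound of Greenberg and Mohri, of which the present statement is precisely an instance; the remaining ingredients --- the continuity and strict monotonicity of $p \mapsto \Pr(\Bin(m,p) \ge k)$, and the monotonicity of $(1-1/k)^k$ --- are standard.
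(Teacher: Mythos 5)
Your reduction is sound and matches the shape of the known tight bound: setting $k=\lceil mp\rceil\ge 2$, using the strict monotonicity of $p\mapsto\Pr(\Bin(m,p)\ge k)$ to pass to the left endpoint $p=(k-1)/m$ of the interval $\bigl((k-1)/m,k/m\bigr]$, and noting that $(1-1/k)^k$ is increasing with value $\tfrac14$ at $k=2$ correctly isolates the extremal case $m=2$, $p\to\tfrac12^+$. Note, however, that the paper does not prove this theorem at all: it simply cites Greenberg and Mohri \cite{GreenbergM14}. So your closing fallback (invoke their result) is exactly what the paper does, and is the only part of your proposal that is fully watertight as written.

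As a self-contained proof, the argument has a genuine gap at its central step: the claim that $m\mapsto\Pr(\Bin(m,\ell/m)\ge\ell+1)$ is non-decreasing for $m\ge\ell+1$ carries essentially all of the difficulty, and neither of your sketches establishes it. The induction-plus-Pascal route is not spelled out, and it is unclear what recursion you intend, since the success probability $\ell/m$ changes with $m$, so Pascal's rule alone does not relate consecutive values. The Beta-integral route does not ``reduce to a routine comparison'': after the substitution $t=s/m$ the prefactor $\tfrac{1}{\ell!}\prod_{i=0}^{\ell}(1-i/m)$ is indeed increasing in $m$, but the remaining integral $\int_0^\ell s^\ell(1-s/m)^{m-\ell-1}\,\mathrm{d}s$ is \emph{decreasing} in $m$ (already for $\ell=1$ it runs $\tfrac12,\tfrac{7}{18},\dots\to 1-2/e$), so the claim is a genuine competition between an increasing and a decreasing factor and needs a quantitative argument --- essentially an Anderson--Samuels-type binomial-versus-Poisson tail comparison. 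You yourself observe that the naive coupling intuition fails at level $1$, which is precisely the warning that this step cannot be waved through. Either carry out that comparison in full, or do as the paper does and cite \cite{GreenbergM14}, in which case the rest of your argument is unnecessary.
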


See \cite{GreenbergM14} for a proof of this result.

\begin{theorem}
\label{log1plusBin}
There exists a constant $S_{min}$ such that if $X\sim\Bin(n,p)$ with $np\geq S_{min}$, then we have
\[E[\ln(1+X)]\geq \ln(1+np)-\frac{11}{12}\frac{(1-p)}{np}.\]
\end{theorem}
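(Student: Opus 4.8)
The plan is to expand $\ln(1+X)$ around its mean $\mu:=np=E[X]$ and to control the loss caused by the concavity of $\ln(1+\cdot)$. By Taylor's theorem with Lagrange remainder, for every $z>-1$ there is some $\zeta$ between $0$ and $z$ with $\ln(1+z)=z-\frac{z^2}{2(1+\zeta)^2}$; since $1+\zeta\ge\min\{1,1+z\}>0$, this yields the pointwise inequality $\ln(1+z)\ge z-\frac{z^2}{2c^2}$ for every $c$ with $0<c\le\min\{1,1+z\}$. I would apply this with $z=\frac{X-\mu}{1+\mu}$, so that $1+z=\frac{1+X}{1+\mu}\ge\frac1{1+\mu}>0$ and $\ln(1+X)-\ln(1+\mu)=\ln(1+z)$. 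Since Jensen's inequality already gives $E[\ln(1+X)]\le\ln(1+\mu)$, the task is to bound the concavity deficit by $\frac{11}{12}\frac{1-p}{np}$.

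Fix an absolute constant $a\in(\sqrt{6/11},1)$, say $a=\frac45$, and split the expectation according to whether $X\ge a\mu$ or not. On the event $\{X\ge a\mu\}$ one may take $c=\frac{1+a\mu}{1+\mu}\in(0,1)$, which gives
\[
\ln(1+X)-\ln(1+\mu)\ \ge\ \frac{X-\mu}{1+\mu}-\frac{(X-\mu)^2}{2(1+a\mu)^2}.
\]
Taking expectations over this event, the linear part contributes $\frac1{1+\mu}E[(X-\mu)\mathbf{1}_{X\ge a\mu}]=\frac1{1+\mu}E[(\mu-X)\mathbf{1}_{X<a\mu}]\ge0$ (using $E[X-\mu]=0$ and $X<\mu$ on the complementary event), while the quadratic part is at least $-\frac{1}{2(1+a\mu)^2}E[(X-\mu)^2]=-\frac{np(1-p)}{2(1+a\mu)^2}\ge-\frac{1-p}{2a^2np}$, using $(1+a\mu)^2\ge a^2(np)^2$. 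For $a=\frac45$ the constant $\frac1{2a^2}=\frac{25}{32}$ is strictly below $\frac{11}{12}$, leaving a positive budget $\frac{11}{12}-\frac{25}{32}=\frac{13}{96}$ for the tail.

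On the tail event $\{X<a\mu\}$ I would use only the trivial bound $\ln(1+X)-\ln(1+\mu)=\ln\frac{1+X}{1+\mu}\ge-\ln(1+\mu)$, so that $E[(\ln(1+X)-\ln(1+\mu))\mathbf{1}_{X<a\mu}]\ge-\ln(1+np)\Pr(X<a\mu)$. Combining the two parts,
\[
E[\ln(1+X)]\ \ge\ \ln(1+np)-\frac{1-p}{2a^2np}-\ln(1+np)\Pr(X<a\mu),
\]
so it remains to show $\ln(1+np)\Pr(X<a\mu)\le\frac{13}{96}\cdot\frac{1-p}{np}$ once $np\ge S_{\min}$ for a large enough absolute constant $S_{\min}$.

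This last estimate is the only real difficulty, and it comes from the regime $p\to1$, where $\frac{1-p}{np}$ is tiny: a single generic concentration bound does not suffice, and one must use that a small value of $1-p$ forces $X$ to concentrate near $n$ so sharply that $\Pr(X<a\mu)$ itself carries a factor $(1-p)^{\Theta(np)}$. Concretely, I would split on $q:=1-p$. If $q\ge q_0$ for a small absolute constant $q_0$, a multiplicative Chernoff bound gives $\Pr(X<a\mu)\le e^{-(1-a)^2np/2}$, which is below $\frac{13}{96}\cdot\frac{q_0}{np}$, hence below $\frac{13}{96}\cdot\frac{1-p}{np}$, once $np\ge S_{\min}$. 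If $q<q_0$, write $\Pr(X<a\mu)=\Pr(\Bin(n,q)\ge\ell)$ with $\ell:=\lceil n-a\mu\rceil\ge(1-a)np$, and bound it by the union bound $\binom n\ell q^\ell\le(enq/\ell)^\ell\le\left(\tfrac{e}{1-a}q\right)^{(1-a)np}$ (using $\binom n\ell\le(en/\ell)^\ell$, $nq/\ell\le\frac{q}{1-a}$, and $\frac{e}{1-a}q\le1$); then $\ln(1+np)\Pr(X<a\mu)$ divided by $\frac{q}{np}$ is at most $np\ln(1+np)\left(\tfrac{e}{1-a}\right)^{(1-a)np}q^{(1-a)np-1}$, which is increasing in $q$ on $(0,q_0]$ and hence maximal at $q_0$; choosing $q_0$ so that $\tfrac{e}{1-a}q_0\le\tfrac12$ makes this quantity decay geometrically in $np$, so it lies below $\frac{13}{96}$ once $np\ge S_{\min}$. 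Taking $S_{\min}$ large enough to validate all the ``$np$ large'' steps above finishes the proof.
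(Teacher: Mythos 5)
Your proof is correct, and while it shares the paper's overall skeleton (center at $np$, expand $\ln(1+\cdot)$ to second order, control the quadratic term by the variance $np(1-p)$, and treat the lower tail separately), it differs in both technical steps and is in fact more careful. For the expansion, the paper bounds the remainder $R(x)=\ln(1+x)-x+\tfrac{x^2}{2}$ from below by $-\tfrac{\delta^3}{3(1-\delta)^3}$ on the event $X\ge(1-\delta)np$ with a $p$- and $n$-dependent cutoff $\delta=\frac{(1-p)^{1/3}}{(1-p)^{1/3}+(np)^{1/3}}$, arriving at the budget $\tfrac12+\tfrac13+\tfrac1{12}=\tfrac{11}{12}$; you instead use a Lagrange-form second-order remainder with a fixed cutoff $a=\tfrac45$, paying $\tfrac{25}{32}$ for the quadratic term and leaving $\tfrac{13}{96}$ for the tail. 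The more substantive difference is the tail: the paper bounds it by $\ln(1+np)\exp(-\tfrac{\delta^2}{2}np)$ via a single multiplicative Chernoff bound and asserts this is at most $\tfrac{1-p}{12np}$ for $np\ge S_{\min}$, but as you correctly identify, this is the delicate point — when $1-p$ decays faster than any power of $\tfrac{1}{np}$ (e.g.\ $1-p=e^{-np}$), the exponent $\delta^2 np\approx \tfrac14(1-p)^{2/3}(np)^{1/3}$ tends to zero and the generic Chernoff bound no longer beats $\tfrac{1-p}{12np}$, so the paper's argument as written has a gap there. Your split on $q=1-p$, with the union bound $\binom{n}{\ell}q^{\ell}\le(\tfrac{e}{1-a}q)^{(1-a)np}$ supplying the necessary factor of $q$ in the small-$q$ regime, closes exactly this hole. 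So your route is not only valid but repairs the one genuinely fragile step of the paper's own proof.
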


We omit the proof for reasons of space\footnote{The reviewer can find all proofs that were omitted for reasons of space in the optional appendix. We shall make a full version of this work available at the arXiv preprint server when the double-blind reviewing period is over.}.

\textbf{Martingales.}

Recall that a \emph{martingale} with respect to the filtration $\mathcal{F}$ is a stochastic process $M$ such that, for all $n\in\N$, we have $E[M_{n+1}\mid \mathcal{F}_n]=M_n$.

\begin{lemma}\label{lem:variation}
  Let $\lambda \ge e\mu$. Let $X_t \sim \min\{\mu, \Bin(\lambda, \frac{X_{t - 1}}{e\mu})\}$. Then for all $t \in \N$ and all $\Delta > 0$, the probability that for some $\tau \in [1..t]$ we have $X_\tau < X_0  - \Delta$ is at most $\frac{tX_0}{\Delta^2}$.
\end{lemma}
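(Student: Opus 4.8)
The plan is to show that the process $X_t$ is a supermartingale and then apply a maximal inequality for supermartingales (essentially Doob's inequality, or a second-moment argument) to control the downward deviation. The key observation is that conditioned on $X_{t-1}$, the variable $\Bin(\lambda, \frac{X_{t-1}}{e\mu})$ has expectation $\frac{\lambda}{e\mu} X_{t-1} \ge X_{t-1}$ since $\lambda \ge e\mu$; taking the minimum with $\mu$ can only decrease this, but since we only care about downward deviations from $X_0$, I want a cleaner handle. So first I would define the stopped/truncated companion process $Y_t := \min\{X_t, X_0\}$ (or work directly with $X_t$ but only track the event of going below $X_0 - \Delta$): as long as $X_{t-1} \le X_0$, we have $E[X_t \mid X_{t-1}] \ge X_{t-1}$, i.e.\ the process is a submartingale up until it first exceeds $X_0$; and once it exceeds $X_0$ it is irrelevant for the event in question. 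Hence $-Y_t$ (with $Y_t = \min\{X_t, X_0\}$) is a supermartingale: $E[Y_t \mid \mathcal{F}_{t-1}] \ge \min\{E[X_t \mid \mathcal{F}_{t-1}], X_0\} \ge \min\{X_{t-1}, X_0\} = Y_{t-1}$, using Jensen for the $\min$ and that $\min\{\mu, \cdot\}$ only matters when the argument is at least $\mu \ge X_{t-1}$.

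Next I would bound the probability that $Y_\tau < X_0 - \Delta$ for some $\tau \in [1..t]$ by an $L^2$ maximal inequality. Write $Y_\tau = X_0 + \sum_{s=1}^{\tau} D_s$ where $D_s = Y_s - Y_{s-1}$ has $E[D_s \mid \mathcal{F}_{s-1}] \ge 0$. The event $\min_{\tau \le t} Y_\tau < X_0 - \Delta$ is the event that the submartingale $Y_\tau$ (started at $X_0$) drops by more than $\Delta$. Applying the standard maximal inequality for nonnegative submartingales — or, to get exactly the claimed bound $\frac{tX_0}{\Delta^2}$, a variance computation — the probability is at most $\frac{\sum_{s=1}^t E[\Var(D_s \mid \mathcal{F}_{s-1})]}{\Delta^2}$ by Kolmogorov-type reasoning for the martingale part. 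So the real work is bounding the one-step conditional variance: $\Var(X_s \mid X_{s-1}) \le \Var(\Bin(\lambda, \tfrac{X_{s-1}}{e\mu})) = \lambda \cdot \tfrac{X_{s-1}}{e\mu}(1 - \tfrac{X_{s-1}}{e\mu}) \le \lambda \cdot \tfrac{X_{s-1}}{e\mu}$. On the event $X_{s-1} \le X_0$ this is at most $\tfrac{\lambda}{e\mu} X_0$. Here I would need $\lambda \le$ something; but actually if $\lambda$ is huge the min with $\mu$ kicks in and the variance of $\min\{\mu, \Bin\}$ is much smaller than that of the raw binomial — truncation at $\mu$ caps the variance at $O(\mu \cdot \Pr[\Bin < \mu])$, which one can show is also $O(X_0)$-ish. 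Combining, each step contributes variance $O(X_0)$, so over $t$ steps the total is $O(tX_0)$, and the maximal inequality gives probability $O(tX_0/\Delta^2)$; a careful choice of constants yields exactly $\frac{tX_0}{\Delta^2}$.

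The main obstacle I anticipate is getting the constant exactly $1$ (rather than some larger absolute constant) in the bound $\frac{tX_0}{\Delta^2}$, which requires handling the $\min\{\mu,\cdot\}$ truncation carefully so that the one-step variance is genuinely at most $X_{s-1}$ (and hence at most $X_0$ on the relevant event). The binomial variance $\frac{\lambda}{e\mu} p(1-p)$ with $p = \frac{X_{s-1}}{e\mu}$ equals $X_{s-1}(1-p)\cdot\frac{\lambda}{e\mu}\cdot\frac{1}{X_{s-1}}\cdot X_{s-1}$ — wait, more simply it is $\lambda p(1-p) = \frac{\lambda}{e\mu}X_{s-1}(1-\frac{X_{s-1}}{e\mu})$, which for $\lambda = e\mu$ is exactly $X_{s-1}(1 - \frac{X_{s-1}}{e\mu}) \le X_{s-1}$; for larger $\lambda$ one leans on the $\mu$-truncation to recover the same bound. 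Once the per-step variance is $\le X_{s-1} \le X_0$ on the event that we have not yet exceeded $X_0$, the Kolmogorov maximal inequality for the $L^2$-bounded martingale part closes the argument with the stated constant. I would also need to double-check the direction of the inequality in the maximal inequality and that the submartingale property is not spoiled on the iterations after $X$ first exceeds $X_0$ — which is exactly why replacing $X_t$ by $Y_t = \min\{X_t, X_0\}$ at the outset is the clean move.
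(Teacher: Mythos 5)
Your overall strategy --- compare $X_t$ to a martingale-like process, bound the accumulated second moment by $tX_0$, and convert this into the tail bound via a Chebyshev/Kolmogorov argument --- is exactly the spirit of the paper's proof (which uses Doob's decomposition to get $\Var(M_\tau)\le \sum_k E[M_k]=\tau X_0$ and then plain Chebyshev on a process that is frozen once it drops below $X_0-\Delta$, so no maximal inequality is needed). However, the step you yourself flag as the ``main obstacle'' is a genuine gap, not just a matter of care with constants. For $\lambda$ strictly larger than $e\mu$, the one-step conditional variance of $\Bin(\lambda,\frac{X_{s-1}}{e\mu})$ is $\frac{\lambda}{e\mu}X_{s-1}(1-\frac{X_{s-1}}{e\mu})$, which can be much larger than $X_{s-1}$, and the $\min\{\mu,\cdot\}$ truncation does \emph{not} rescue you: if $\mu$ is large and $X_{s-1}$ is small, the cap at $\mu$ never binds and the raw variance survives. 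What actually saves the argument is your second truncation at $X_0$ combined with the fact that the binomial's mean $\frac{\lambda}{e\mu}X_{s-1}$ exceeds $X_{s-1}$, so the mass below $X_0$ is pushed into a tail; turning that into the clean per-step bound $\Var(Y_s\mid\mathcal{F}_{s-1})\le Y_{s-1}$ with constant exactly $1$ requires a real computation that your sketch does not supply.

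The paper sidesteps this entirely with one small but decisive trick you are missing: instead of comparing to $\Bin(\lambda,\frac{M_\tau}{e\mu})$, it defines the comparison process by $M_{\tau+1}\sim\Bin(\lambda,\frac{M_\tau}{\lambda})$. Since $\frac{M_\tau}{\lambda}\le\frac{M_\tau}{e\mu}$, this process is stochastically dominated by $X$ (in the relevant range, with a coupling and a restart argument to handle excursions above $\mu$), it is an \emph{exact} martingale so $E[M_k]=X_0$ for all $k$, and its one-step conditional variance is $M_k(1-\frac{M_k}{\lambda})\le M_k$ with no case distinction on $\lambda$. Summing gives $\Var(M_t)\le tX_0$ on the nose, and freezing $M$ once it falls to $X_0-\Delta$ makes the ``for some $\tau\in[1..t]$'' event identical to a fixed-time event, so Chebyshev alone finishes. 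I would recommend adopting this substitution of the success probability; with it, your write-up closes without the unproved variance claim.
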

We omit the proof for reasons of space.

\textbf{Drift analysis.}

The application of the additive drift theorem~\cite{HeY01} can be difficult because it requires the analyzed process to be non-negative. For this reason we introduce the following lemma, which is more adapted to the processes studied in this paper.

\begin{lemma}\label{lem:dirty-trick} 
Let $\lambda \geq e\mu$. 
Let $X_t$ and $\Delta_t$ be some random processes such that for all $t \in \N$ we have $\Delta_t \geq \Delta_{\min}$ for some $\Delta_{\min} \in ]0, \lambda[$ and $X_{t + 1} \sim \min\{\mu, \Bin(\lambda, \frac{X_t + \Delta_t}{e\mu})\}$.
Let $T(X')$ be the first moment in time when $X_T \ge X'$ for some $X'$ that is at least $\max\{18\ln\frac{2\lambda}{\Delta_{\min}}, 48\}$, but not greater than $\frac{\mu}{2}$.
Then we have
\[
E[T(X')] \leq \max\left\{24, \frac{4X' - 2X_0}{\Delta_{\min}} \right\}.
\]
\end{lemma}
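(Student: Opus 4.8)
The plan is to track the process $X_t$ until it reaches $X'$ and to bound the expected hitting time $T(X')$ by an additive-drift argument applied not directly to $X_t$ (which can jump down and is awkward near $0$), but to a carefully chosen, bounded, non-negative potential. Since $X_{t+1} \sim \min\{\mu, \Bin(\lambda, \frac{X_t + \Delta_t}{e\mu})\}$ and $\lambda \ge e\mu$, conditioned on $X_t = x$ with $x + \Delta_t \le \mu$ (which holds while $x < X' \le \mu/2$ and $\Delta_t$ is not too large, a point to be checked) the unclipped binomial has mean $\frac{\lambda}{e\mu}(x + \Delta_t) \ge x + \Delta_t \ge x + \Delta_{\min}$, so there is an upward drift of at least $\Delta_{\min}$ in expectation before clipping. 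The clipping at $\mu$ only helps (it can only push the value up toward $X' \le \mu/2$ from below, and once we are at or above $X'$ we stop). So on the event that we have not yet reached $X'$, $E[X_{t+1} \mid X_t] \ge X_t + \Delta_{\min}$, i.e. $X_t - \Delta_{\min} t$ is a submartingale up to time $T(X')$.

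The first key step is therefore to make this drift statement rigorous: show that for $t < T(X')$ we have $E[X_{t+1} - X_t \mid \mathcal F_t] \ge \Delta_{\min}$. This needs the observation that $X_t < X' \le \mu/2$, so even if $\Delta_t$ is as large as, say, $\mu/2$, the argument $\frac{X_t+\Delta_t}{e\mu}$ of the binomial is at most $\frac{1}{e} < 1$, so the binomial is well-defined; and that $\min\{\mu,\cdot\}$ applied to a random variable stochastically dominates nothing smaller, so the clipping can only increase the expectation. Combining, $E[X_{t+1}\mid \mathcal F_t] \ge \min\{\mu,\ E[\Bin(\lambda,\tfrac{X_t+\Delta_t}{e\mu})]\}$ is not quite what we want (min and expectation do not commute), so instead I would argue via Lemma~\ref{exexp} / a direct lower bound: with constant probability the unclipped binomial is at least its mean $\ge X_t + \Delta_{\min}$, hence after clipping at $\mu \ge 2X' > X_t + \Delta_{\min}$ it is still at least $X_t + \Delta_{\min}$ with that probability, and on the complement it is at least $0$; more carefully one shows the expectation of the clipped variable is at least $X_t + \Delta_{\min}$ whenever $X_t + \Delta_{\min} \le \mu$, because $E[\min\{\mu, B\}] \ge \min\{\mu, E[B]\}$ fails in general but $E[\min\{\mu,B\}] \ge E[B] - E[(B-\mu)^+]$ and the overshoot term is negligible when $E[B] \le \mu/2 + \text{const}$ and $B$ is concentrated — this is exactly the kind of estimate where the hypothesis $X' \le \mu/2$ is used. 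The threshold $X' \ge \max\{18\ln\frac{2\lambda}{\Delta_{\min}}, 48\}$ is what guarantees the binomial is concentrated enough (relative standard deviation $O(1/\sqrt{X'})$) that the overshoot past $\mu$ is a lower-order correction and the net drift stays at least, say, $\Delta_{\min}$ (possibly after absorbing a factor into the constants $24$, $4$ in the bound).

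The second step is the drift-to-hitting-time conversion. Here I would use the variant additive drift / optional stopping argument: let $T = T(X') \wedge t$; from the submartingale property, $E[X_{T}] \ge X_0 + \Delta_{\min} E[T]$. Now $X_T \le X'$ is false at the stopping time (we have $X_T \ge X'$ when $T = T(X')$), but $X_T$ can overshoot $X'$ by a binomial step; using concentration again, $E[X_{T(X')}] \le X' + O(\sqrt{X'}) \le 2X'$ (here once more the lower bound $X' \ge 48$ controls the relative overshoot). Rearranging gives $E[T(X')] \le \frac{2X' - X_0}{\Delta_{\min}}$ up to constants, which matches the claimed $\frac{4X' - 2X_0}{\Delta_{\min}}$ after loosening constants; the separate $\max\{24,\cdot\}$ term handles the degenerate case where $X_0$ is already close to $X'$ so that the linear bound would be too small to absorb the overshoot, i.e. one also needs $E[T(X')] = O(1)$ unconditionally to get past the last $O(\sqrt{X'})$ gap, which follows because from any state below $X'$ there is a constant probability (again Lemma~\ref{exexp}) of jumping to $\mu \ge X'$ or at least making $\Omega(X')$ progress in one step once $X_t = \Omega(X')$.

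The main obstacle I anticipate is controlling the overshoot past the cap $\mu$ simultaneously with the overshoot past the target $X'$, and doing so uniformly over the adversarially chosen $\Delta_t$ (we only know $\Delta_t \ge \Delta_{\min}$, not an upper bound). If $\Delta_t$ could be as large as $\Theta(\mu)$, then a single step could overshoot $X'$ massively — but that only helps the hitting time, so the real worry is the *lower* bound on the drift when $X_t + \Delta_t$ is large: we must ensure $\frac{X_t+\Delta_t}{e\mu}$ stays a valid (i.e. $\le 1$) probability and that clipping doesn't destroy the drift. Since $X_t < X' \le \mu/2$ but $\Delta_t$ is unbounded, I would need to split into the case $X_t + \Delta_t \le \mu$ (the generic case, handled as above) and the case $X_t + \Delta_t > \mu$, where $\frac{X_t+\Delta_t}{e\mu} > \frac1e$ and the unclipped mean exceeds $\mu$, so the clipped variable concentrates near $\mu \ge 2X' > X_t$ and the step is upward by at least $X' \ge \Delta_{\min}$ with overwhelming probability — in that subcase we in fact reach or exceed $X'$ in one step, which is even better. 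Making these two regimes fit together into a single clean ``drift $\ge \Delta_{\min}$'' statement, with all the concentration slack absorbed into the stated constants, is the part that will require the most care.
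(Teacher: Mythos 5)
Your instincts are right --- the overshoot past the target and the clipping at $\mu$ are indeed the crux --- but the concrete route you commit to in your ``second step'' has a genuine gap. You propose optional stopping on $X_t$ itself, i.e.\ $E[T]\le \frac{E[X_{T(X')}]-X_0}{\Delta_{\min}}$, and then claim $E[X_{T(X')}]\le X'+O(\sqrt{X'})\le 2X'$ ``by concentration.'' This overshoot bound is false in general: the lemma gives no upper bound on $\Delta_t$, so at the step before hitting, $\frac{X_{T-1}+\Delta_{T-1}}{e\mu}$ can be close to $1$ and $X_{T}$ lands at (or near) $\mu$, making the overshoot $\Theta(\mu)$, not $O(\sqrt{X'})$; the condition $X'\ge 48$ does nothing here, and your bound degrades to $\frac{\mu-X_0}{\Delta_{\min}}$, which is far weaker than the claim. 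A second, related problem: the per-step statement $E[X_{t+1}-X_t\mid\mathcal F_t]\ge\Delta_{\min}$ for the $\mu$-clipped process also fails when $X_t+\Delta_t$ is close to $\mu$ and $\Delta_{\min}=o(\sqrt{\mu})$, since $E[\min\{\mu,B\}]=E[B]-E[(B-\mu)^+]$ and the overshoot term is $\Theta(\sqrt{\mu})$ there (in that regime $T$ is actually tiny, but your submartingale inequality does not see that).

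The paper resolves both issues at once by doing what your opening sentence promises but your argument never executes: it runs the additive drift theorem on the truncated potential $\Phi(X)=2X'-X$ for $X<X'$ and $\Phi(X)=0$ for $X\ge X'$. Since $X'\le\frac{\mu}{2}$, the clipping at $\mu$ becomes invisible to $\Phi$, and the credit for overshooting is capped at $2X'$, so no overshoot bound on $X_T$ is ever needed. The price is that the drift of $\Phi$ must be re-derived: comparing with the unclipped $\tilde X_{t+1}\sim\Bin(\lambda,\frac{X_t+\Delta_t}{e\mu})$ one gets
\begin{equation*}
E[\Phi(X_t)-\Phi(X_{t+1})\mid X_t]\ \ge\ \Delta_t-\lambda\Pr[\tilde X_{t+1}>2X'],
\end{equation*}
and the loss term is killed by a Chernoff bound together with the hypothesis $X'\ge 18\ln\frac{2\lambda}{\Delta_{\min}}$ when $\Delta_t\le\frac{X'}{2}$; when $\Delta_t\ge\frac{X'}{2}$ one instead uses the bound that a binomial exceeds its mean with probability $>\frac14$ (Theorem~\ref{exexp}) to get drift $\ge\frac{X'}{12}$. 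This yields drift at least $\min\{\frac12\Delta_{\min},\frac{1}{12}X'\}$ and, since $\Phi(X_0)\le 2X'-X_0$, exactly the stated $\max\{24,\frac{4X'-2X_0}{\Delta_{\min}}\}$. So the missing idea is not a new estimate but the specific choice of $\Phi$; without it, the two regimes you identify do not ``fit together'' into the claimed bound.
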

We omit the proof for reasons of space.



\section{Lower Bounds for $\lambda\leq(1-\varepsilon)\mu \lowercase{e}$ with $\varepsilon=\omega\left(\frac{1}{\sqrt{n}}\right)$.}

In this section, we show that when $\lambda\leq(1-\varepsilon)\mu e$ for some $\varepsilon=\omega\left(\frac{1}{\sqrt{n}}\right)$, then the runtime of the \ea on \onemax is super-polynomial. More precisely, our analysis reproves the exponential runtime shown in~\cite{Lehre10} for constant $\varepsilon$ and it enlarges the range for which a super-polynomial runtime is proven to $\varepsilon=\omega\left(\frac{1}{\sqrt{n}}\right)$.

\begin{theorem} 
The following two statements hold about the expected runtime of the $(\mu,\lambda)$ EA on the $n$-dimensional \onemax function.
\label{thm:leq1minusEps}
\begin{enumerate}
    \item If there exists a constant $\varepsilon\in]0,1[$ such that $\lambda\leq(1-\varepsilon)\mu e$, then the expected runtime is exponential in $n$.
    \item If there exists $\varepsilon=\omega\left(\frac{1}{\sqrt{n}}\right)$ such that $\lambda\leq(1-\varepsilon)\mu e$, then the expected runtime is super-polynomial in $n$.
\end{enumerate}
\end{theorem}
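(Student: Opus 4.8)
The plan is to control the population through the potential
\[
  \Phi_t := \sum_{x \in P_t} \alpha^{f(x)} \, \mathbf{1}\{f(x) \ge n - D\},
\]
an exponential weighting of the fitness restricted to the ``fit'' individuals in the top band $[n-D,n]$, where $\alpha > 1$ and $D \in \N$ are parameters we will eventually choose as $\alpha = \Theta(1/\eps)$ and $D = \Theta(\eps^2 n)$. The exponential weighting is essential: a fitter parent is worth more but is also harder to have, which is what gives $\Phi_t$ a chance of being a supermartingale-with-drift, whereas the plain count of best-level individuals need not be. Since $P_{t+1}$ is a sub-multiset of the $\lambda$ offspring $x_1,\dots,x_\lambda$ and $u\mapsto\alpha^u$ is increasing, we have $\Phi_{t+1}\le\sum_{i=1}^\lambda\alpha^{f(x_i)}\mathbf{1}\{f(x_i)\ge n-D\}$, and since each $x_i=\M(p_i)$ with $p_i$ uniform in $P_t$,
\[
  E[\Phi_{t+1}\mid P_t] \le \frac{\lambda}{\mu}\sum_{p\in P_t} E\bigl[\alpha^{f(\M p)}\mathbf{1}\{f(\M p)\ge n-D\}\bigr].
\]
The heart of the argument is to bound this by $\rho\,\Phi_t + c$ with $\rho < 1$ bounded away from $1$ and a small additive ``source'' term $c$.

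I split the sum over whether $p$ is fit ($f(p)\ge n-D$) or not. For a fit parent with $f(p)=n-d$, $d\le D$, I write $\delta_p=f(\M p)-f(p)$ as $B_1-B_2$ with $B_1\sim\Bin(d,1/n)$, $B_2\sim\Bin(n-d,1/n)$ independent, so that $E[\alpha^{\delta_p}] = (1+\tfrac{\alpha-1}{n})^d(1-\tfrac{1-1/\alpha}{n})^{n-d} \le \exp\bigl(-(1-\tfrac1\alpha)+\tfrac Dn(\alpha-\tfrac1\alpha)\bigr)$; since $f(\M p)$ can only be larger than the good offspring we actually want, the fit part is at most $\rho\Phi_t$ with $\rho=\tfrac\lambda\mu\max_{\text{fit }p}E[\alpha^{\delta_p}] \le (1-\eps)e\cdot\exp\bigl(-(1-\tfrac1\alpha)+\tfrac Dn(\alpha-\tfrac1\alpha)\bigr)$. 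Because $(1-\eps)e\cdot e^{-(1-1/\alpha)}=(1-\eps)e^{1/\alpha}$ exceeds $1$ for every finite $\alpha$, we are forced to take $\alpha=\Theta(1/\eps)$ large, and then $D=\Theta(\eps^2 n)$ small enough that $\tfrac Dn(\alpha-\tfrac1\alpha)=O(\eps)$; a short computation then yields $\rho\le e^{-\Omega(\eps)}$, so in particular $1-\rho\ge\Omega(\eps)$. For an unfit parent with $f(p)=n-d$, $d\ge D+1$, the offspring becomes fit only by gaining at least $d-D$ fitness, so by Lemma~\ref{lem:mutlaw} one has $E[\alpha^{f(\M p)}\mathbf{1}\{f(\M p)\ge n-D\}] \le \alpha^{n-d}\sum_{k\ge d-D}\binom dk\alpha^k n^{-k}$; an elementary estimate, splitting on whether $d$ is below or above $2D$, shows this is at most $\alpha^{n-D}$ for every admissible $d$. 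Hence the unfit part contributes at most $c:=\lambda\,\alpha^{n-D}$, giving $E[\Phi_{t+1}\mid P_t]\le\rho\Phi_t+c$.

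From here the argument is standard. Iterating gives $E[\Phi_t]\le\rho^t E[\Phi_0]+\tfrac{c}{1-\rho}$. By a Chernoff bound, with probability $1-2^{-\Omega(n)}$ every individual of $P_0$ has fitness below $n-D$ (as $n-D\ge(1-o(1))n\gg n/2$), so $\Phi_0=0$; and the case of super-polynomial $\mu$ is trivial, since then the first generation alone costs super-polynomially many evaluations. Together with $1-\rho\ge\Omega(\eps)$ this gives $E[\Phi_t]\le O(\tfrac\lambda\eps)\,\alpha^{n-D}=O(\tfrac\lambda\eps)\,\alpha^{-D}\alpha^n$ uniformly in $t$. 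The crucial point is that $\alpha^{-D}=\exp(-D\ln\alpha)$ with $D\ln\alpha=\Omega(\eps^2 n\ln(1/\eps))$, and the hypothesis $\eps=\omega(n^{-1/2})$ is exactly what makes $\eps^2 n\ln(1/\eps)=\omega(\log n)$ (checked separately for $\eps$ bounded below by a constant and for $\eps=o(1)$, where one also uses $\ln(1/\eps)=\Omega(\ln n)$ once $\eps^2 n$ is small). Hence, $\lambda$ being polynomial and $1/\eps\le\sqrt n$, we get $E[\Phi_t]\le n^{-\omega(1)}\alpha^n$ uniformly in $t$. Since the \ea has found the optimum by generation $t$ only if some $P_\tau$, $\tau\le t$, contains the all-ones string, i.e. only if $\Phi_\tau\ge\alpha^n$, Markov's inequality and a union bound over $\tau\in[0..T]$ show that for a suitable $T=n^{\omega(1)}$ the optimum is not found within $T$ generations with probability $1-o(1)$; hence the expected runtime is $\Omega(T)=n^{\omega(1)}$, which is statement~2. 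When $\eps$ is a constant, $D\ln\alpha=\Theta(n)$, so the same reasoning allows $T=2^{\Theta(n)}$ and yields the exponential bound of statement~1.

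The main obstacle is the tension between the two parameter choices: the contraction factor $\rho$ is below $1$ only if $\alpha$ grows like $1/\eps$, but a large $\alpha$ inflates the additive source term $c=\lambda\alpha^{n-D}$, which must nevertheless remain super-polynomially below the target $\alpha^n$; this succeeds precisely because $D$ may be taken as large as $\Theta(\eps^2 n)$ while still keeping $\rho<1$, and it is the quantity $\eps^2 n\ln(1/\eps)=\omega(\log n)$ where the assumption $\eps=\omega(n^{-1/2})$ enters. A secondary, purely computational nuisance is the estimate of $E[\alpha^{f(\M p)}\mathbf{1}\{f(\M p)\ge n-D\}]$ for unfit parents of arbitrary, possibly very low, fitness.
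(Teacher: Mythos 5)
Your proof is correct and is built on the same central construction as the paper's: an exponential potential $\sum_x \alpha^{f(x)}$ truncated at a fitness threshold $\Theta(\eps^2 n)$ below the optimum, with base $\alpha = \Theta(1/\eps)$ (the paper's $g$ with $\tau = 4e/\eps$ and $n - f_0 \approx n/\tau^2$ is your $\Phi$ up to the normalization $\alpha^{-f_0}$), together with the observation that the selected population's potential is dominated by that of all $\lambda$ offspring, so that selection never has to be analyzed. Your per-offspring estimate $E[\alpha^{\delta_p}] = (1+\tfrac{\alpha-1}{n})^d(1-\tfrac{1-1/\alpha}{n})^{n-d}$, obtained as a product of two binomial generating functions, is a cleaner route to the paper's Theorem~\ref{potentialCap} than its three-way split on the sign of $\delta_x$, and your bound $\alpha^{n-D}$ for unfit parents plays exactly the role of Lemma~\ref{betterNeglected}. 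The genuine divergence is in the last step: the paper deliberately discards the contraction, keeping only $E[g(P_{t+1})] \le (1-\eps^2)g(P_t) + 2\lambda \le g(P_t) + 2\lambda$, and invokes the lower-bound version of the additive drift theorem (after the $Z_t = \max\{0, \tau^{n-f_0} - g(P_t)\}$ truncation) to conclude $E[T] \ge \tau^{n-f_0}/(4\lambda)$; you instead retain the contraction factor $\rho \le e^{-\Omega(\eps)}$, iterate to the stationary bound $\sup_t E[\Phi_t] = O(\lambda\alpha^{n-D}/\eps)$, and finish with Markov plus a union bound over generations. Your route costs a little extra bookkeeping (the uniform-in-$t$ bound, the union bound, the two-case check that $\eps^2 n \ln(1/\eps) = \omega(\log n)$) but yields a with-high-probability lower bound rather than only a bound in expectation, and it avoids the state-space hygiene that the additive drift theorem demands; the resulting thresholds ($\exp(\Omega(\eps^2 n \ln(1/\eps)))$) coincide. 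One cosmetic slip: your claim that $(1-\eps)e^{1/\alpha} > 1$ ``for every finite $\alpha$'' holds only for $\alpha \lesssim 1/\eps$ (equivalently, for every fixed $\alpha$ when $\eps = o(1)$), but the conclusion you draw from it, namely that $\alpha$ must be taken of order $1/\eps$, is correct and the subsequent computation is unaffected.
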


To prove this result we use the lower bound version of the additive drift theorem with the potential function made precise in Definition~\ref{potentialG}. The potential of the population is the sum of the potentials of the individuals. The potential of an individual, roughly speaking, is exponential in its fitness. Due to this drastically increasing potential, we can estimate the potential of the next population via the potential of all offspring, including those who do not survive. By this, we circumvent the usually difficult analysis of the effects of selection.

Exploiting that fitness gains are rare when close to the optimum, we show that this potential has an expected increase (``drift'') of  at most $2\lambda$ per iteration. Again exploiting the strong growth of this potential function, we see that the potential difference of the initial population and any population containing the optimum is large, which gives the desired lower bound via the additive drift theorem.
%
%

\begin{definition}[Potential function]
\label{potentialG}
Let $\tau=\frac{4e}{\varepsilon}$,
\[\alpha=1-\frac{1}{\tau}\ln\left(1+\frac{1}{\tau}\right)\in\left]\tfrac 34,1\right[\]
and $f_0=\ceil{\alpha n}$. The potential function $g$ is defined over $\E$ by
\[
g(x)=\left\{
    \begin{array}{ll}
        \tau^{f(x)-f_0} & \mbox{if } f(x)\geq f_0, \\
        0 & \mbox{otherwise}.
    \end{array}
\right.
\]
For a population $P$, we define

\[g(P)=\sum\limits_{x\in P}g(x).\]
\end{definition}

The following key lemma estimates the drift of the potential in each generation.

\begin{lemma}
\label{lem:final6}
For all $t$, we have $E[g(P_{t+1})]\leq g(P_t)+2\lambda$.
\end{lemma}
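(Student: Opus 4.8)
The plan is to dispense with the analysis of truncation selection by bounding the potential of the whole next population by the total potential of \emph{all} $\lambda$ offspring, and then to estimate the expected potential contributed by a single offspring of each fixed parent. Since $g$ is nonnegative and $P_{t+1}$ is a sub-multiset of $\{x_1,\dots,x_\lambda\}$ of size $\mu\le\lambda$, we have $g(P_{t+1})\le\sum_{i=1}^{\lambda}g(x_i)$ pointwise, so $E[g(P_{t+1})\mid P_t]\le\sum_{i=1}^{\lambda}E[g(x_i)\mid P_t]$. Each offspring $x_i$ is distributed as $\M(y)$ with $y$ drawn uniformly from $P_t$, whence $E[g(x_i)\mid P_t]=\frac1\mu\sum_{y\in P_t}E[g(\M(y))]$ and thus $E[g(P_{t+1})\mid P_t]\le\frac\lambda\mu\sum_{y\in P_t}E[g(\M(y))]$. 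It therefore suffices to show that $\sum_{y\in P_t}\bigl(\tfrac\lambda\mu E[g(\M(y))]-g(y)\bigr)\le 2\lambda$, which I would do by treating parents with $f(y)\ge f_0$ and parents with $f(y)<f_0$ separately.

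For a parent $y$ with $f(y)\ge f_0$, write $d=n-f(y)\le n-f_0$ and $E[g(\M(y))]=\sum_{k\ge f_0-f(y)}\tau^{f(y)+k-f_0}\Pr(\delta_y=k)$. Since $f_0-f(y)\le 0$ and all terms are nonnegative, extending the sum to all integers $k$ gives $E[g(\M(y))]\le\tau^{f(y)-f_0}\sum_k\tau^k\Pr(\delta_y=k)=g(y)\,E[\tau^{\delta_y}]$. Writing $\delta_y$ as the difference of the two independent variables $\Bin(d,\frac1n)$ and $\Bin(n-d,\frac1n)$ yields $E[\tau^{\delta_y}]=\bigl(1+\tfrac{\tau-1}{n}\bigr)^{d}\bigl(1-\tfrac{1-1/\tau}{n}\bigr)^{n-d}$, which is increasing in $d$; using $d\le n-f_0\le(1-\alpha)n$ together with $1-\alpha=\frac1\tau\ln(1+\frac1\tau)$, one simplifies the exponents to obtain $E[\tau^{\delta_y}]\le e^{(\tau-1)(1-\alpha)-(1-1/\tau)\alpha}\le e^{2/\tau-1}$. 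Since $\frac\lambda\mu\le(1-\eps)e$ and, by the choice $\tau=\frac{4e}{\eps}$, we have $(1-\eps)e\cdot e^{2/\tau-1}=(1-\eps)e^{\eps/(2e)}\le 1$ (take logarithms: $\ln(1-\eps)+\tfrac{\eps}{2e}\le-\eps(1-\tfrac1{2e})<0$), it follows that $\frac\lambda\mu E[g(\M(y))]\le g(y)$, so each such summand is at most $0$.

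For a parent $y$ with $f(y)<f_0$ we have $g(y)=0$, and I would invoke Lemma~\ref{lem:mutlaw}. With $m:=f_0-f(y)\ge 1$ and $d=n-f(y)$,
\[
E[g(\M(y))]=\sum_{k\ge m}\tau^{k-m}\Pr(\delta_y=k)\le\tau^{-m}\sum_{k\ge 0}\binom{d}{k}\Bigl(\tfrac\tau n\Bigr)^{k}=\tau^{-m}\Bigl(1+\tfrac\tau n\Bigr)^{d}.
\]
Since $d=(n-f_0)+m$ and $\tau\,\frac{n-f_0}{n}\le\tau(1-\alpha)=\ln(1+\frac1\tau)$, we get $\bigl(1+\tfrac\tau n\bigr)^{d}\le(1+\tfrac1\tau)e^{\tau m/n}$, hence $E[g(\M(y))]\le(1+\tfrac1\tau)\bigl(\tfrac{e^{\tau/n}}{\tau}\bigr)^{m}$. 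As the statement is asymptotic we may assume $n$ large, and since $\eps=\omega(n^{-1/2})$ gives $\tau=o(\sqrt n)$ and hence $\tau/n\to0$, we have $e^{\tau/n}\le 2$, so $\bigl(\tfrac{e^{\tau/n}}{\tau}\bigr)^{m}\le\bigl(\tfrac2\tau\bigr)^{m}\le\tfrac2\tau$ for $m\ge 1$ and therefore $E[g(\M(y))]\le(1+\tfrac1\tau)\tfrac2\tau\le\tfrac4\tau$. As at most $\mu$ parents satisfy $f(y)<f_0$, their total contribution to $\frac\lambda\mu\sum_{y\in P_t}E[g(\M(y))]$ is at most $\frac\lambda\mu\cdot\mu\cdot\tfrac4\tau=\tfrac{4\lambda}{\tau}\le\lambda$, because $\tau\ge 4e>4$. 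Adding the non-positive contribution of the parents with $f(y)\ge f_0$ yields $\sum_{y\in P_t}\bigl(\tfrac\lambda\mu E[g(\M(y))]-g(y)\bigr)\le\lambda\le 2\lambda$, which is the claim.

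The step I expect to be most delicate is the constant bookkeeping in the high-fitness case: one must verify that the bound on $E[\tau^{\delta_y}]$ really comes out at most $\mu/\lambda$ (not merely below $1$), which is exactly what pins down the values $\tau=4e/\eps$ and $\alpha=1-\frac1\tau\ln(1+\frac1\tau)$; and one must be careful with the ceiling in $f_0=\lceil\alpha n\rceil$ — only the inequality $f_0\ge\alpha n$ is used, which happens to be the convenient direction in both subcases — and with the regime $\eps=\omega(n^{-1/2})$, which is precisely what makes $\tau/n$ tend to $0$. All remaining steps are routine manipulations of binomial moment generating functions, together with the observation that nonnegativity of $g$ alone suffices to replace $P_{t+1}$ by the full multiset of offspring.
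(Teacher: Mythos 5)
Your proof is correct, and its outer skeleton coincides with the paper's: both bound $g(P_{t+1})$ by the total potential of all $\lambda$ offspring (so that selection never has to be analyzed), reduce to the per-parent quantity $E[g(\M x)]$, and split parents at the threshold $f_0$. Where you genuinely diverge is in the key estimate for parents with $f(x)\ge f_0$. The paper (Theorem~\ref{potentialCap}) decomposes $g(\M x)$ according to the sign of $\delta_x$ and bounds the three pieces separately, invoking the exact expression for $\Pr(\delta_x=0)$ from Lemma~\ref{lem:Pdelta_x=0} and the ad hoc estimate $(1-\alpha)\sum_{k\ge1}(k!)^{-2}\le \tfrac{2}{\tau}$, arriving at $E[g(\M x)]\le\tfrac1e(1+\eps)g(x)$, which is then combined with $\lambda\le(1-\eps)e\mu$ via $(1-\eps)(1+\eps)\le1$. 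You instead bound $E[g(\M x)]\le g(x)\,E[\tau^{\delta_x}]$ and evaluate the moment generating function exactly as a product of two binomial MGFs, $\bigl(1+\tfrac{\tau-1}{n}\bigr)^{d}\bigl(1-\tfrac{1-1/\tau}{n}\bigr)^{n-d}$; monotonicity in $d$ and the identity $\tau(1-\alpha)=\ln(1+\tfrac1\tau)$ then yield $E[\tau^{\delta_x}]\le e^{2/\tau-1}$, i.e.\ the slightly sharper factor $\tfrac1e e^{\eps/(2e)}$ in place of $\tfrac1e(1+\eps)$. This is a cleaner, more unified computation: it dispenses with Lemma~\ref{lem:Pdelta_x=0} and the three-case analysis entirely, and the closing inequality $(1-\eps)e^{\eps/(2e)}\le1$ is immediate from $\ln(1-\eps)\le-\eps$. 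For parents below $f_0$ your argument is essentially that of Lemma~\ref{betterNeglected} (Lemma~\ref{lem:mutlaw} plus the binomial theorem), except that you retain the factor $\tau^{-m}$ and obtain the stronger per-parent bound $\tfrac4\tau$ instead of the constant $2$; either version suffices to absorb this contribution into the $+2\lambda$ term. The constant bookkeeping you flag as delicate does check out, including the reduction of your exponent bound to $(1-u^2)\ln(1+u)\le u$ with $u=\tfrac1\tau$, and the use of $\eps=\omega(n^{-1/2})$ to guarantee $\tau/n\to0$ in the low-fitness case.
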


To prove this result, we first compute the expected fitness of an offspring of a search point of fitness at least $f_0$. The proof is omitted for reasons of space.

\begin{theorem}
\label{potentialCap}
For any individual $x$ of fitness $f(x)\geq f_0$, we have

\[E[g(\mathcal{M}x)]\leq\frac{1}{e}(1+\varepsilon)g(x).\]
\end{theorem}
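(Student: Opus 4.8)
The goal is to bound $E[g(\mathcal{M}x)]$ for an individual $x$ with fitness $f(x) = n - d \ge f_0$, i.e., $d \le n - f_0 \le (1-\alpha)n < n/4$. The plan is to split the expectation according to the fitness change $\delta_x$ of the offspring: decreases, equality, and the (rare) increases.

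First I would write
\[
E[g(\mathcal{M}x)] = \sum_{k \in \Z} \Pr(\delta_x = k)\, g(\mathcal{M}x \mid \delta_x = k),
\]
and observe that since $\tau > 1$, $g$ is nondecreasing in fitness and multiplicative: if $f(x) \ge f_0$ and $f(x)+k \ge f_0$ then the contribution is $\tau^k g(x)$; if $f(x)+k < f_0$ it is $0 \le \tau^{k} g(x)$ (here $\tau^k < 1$). Hence in all cases the conditional value of $g(\mathcal{M}x)$ is at most $\tau^{\delta_x} g(x)$, and so
\[
E[g(\mathcal{M}x)] \le g(x)\, E[\tau^{\delta_x}].
\]
So it suffices to show $E[\tau^{\delta_x}] \le \frac{1}{e}(1+\eps)$.

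Now I would bound $E[\tau^{\delta_x}] = \sum_{k} \Pr(\delta_x = k)\tau^k$ by splitting at $k = 0$. For $k \le 0$ we have $\tau^k \le 1$, so $\sum_{k \le -1}\Pr(\delta_x = k)\tau^k \le \Pr(\delta_x \le -1)$, and together with the $k=0$ term this gives $\sum_{k \le 0}\Pr(\delta_x = k)\tau^k \le \Pr(\delta_x \le 0) \le 1 - \Pr(\delta_x \ge 1)$. Actually a cleaner route: write $E[\tau^{\delta_x}] \le \Pr(\delta_x \le 0) + \sum_{k \ge 1}\Pr(\delta_x = k)\tau^k$. For the positive part, apply Lemma~\ref{lem:mutlaw}: $\Pr(\delta_x = k) \le \binom{d}{k}n^{-k} \le \frac{d^k}{k!}n^{-k} = \frac{1}{k!}(d/n)^k$, so
\[
\sum_{k\ge 1}\Pr(\delta_x = k)\tau^k \le \sum_{k \ge 1}\frac{1}{k!}\left(\frac{\tau d}{n}\right)^k = e^{\tau d/n} - 1.
\]
For the non-positive part I need a lower bound on $\Pr(\delta_x \ge 1)$ so as to get an upper bound on $\Pr(\delta_x \le 0) = 1 - \Pr(\delta_x \ge 1)$ — but this direction is awkward since $\Pr(\delta_x \ge 1)$ can be tiny when $d$ is small. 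Better: bound $\Pr(\delta_x \le 0) \le \Pr(\delta_x = 0) + \Pr(\delta_x \le -1)$ and use $\Pr(\delta_x = 0) \le \frac 1e (1+o(1))$-type estimate, or simply bound $\Pr(\delta_x \le 0) \le 1$ but that is too lossy. The right move is: $E[\tau^{\delta_x}] = E[\tau^{\delta_x}\mathbf{1}_{\delta_x \le 0}] + E[\tau^{\delta_x}\mathbf{1}_{\delta_x \ge 1}]$; on $\{\delta_x \le 0\}$ we have $\tau^{\delta_x}\le \tau^0 = 1$ only gives $\le \Pr(\delta_x \le 0)$, and we want this $\le \frac1e$ roughly. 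Since $\Pr(\delta_x = 0) \approx 1/e$ and $\Pr(\delta_x \le -1) \approx 1 - 2/e$, this is false. So I must keep the factor $\tau^{\delta_x} < 1$ for the negative part, i.e., use that decreases are heavily discounted.

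The cleanest correct approach: note $E[\tau^{\delta_x}] \le E[\tau^{\delta'}]$ where $\delta'$ stochastically dominates $\delta_x$ in the upward direction but we control downward moves — actually I would directly split
\[
E[\tau^{\delta_x}] = \Pr(\delta_x = 0) + \sum_{k\ge 1}\Pr(\delta_x=k)\tau^k + \sum_{k\le -1}\Pr(\delta_x=k)\tau^{k}.
\]
Bound the last sum by $\Pr(\delta_x \le -1)\cdot \tau^{-1}$ (since $\tau^k \le \tau^{-1}$ for $k \le -1$). Then with $p_0 := \Pr(\delta_x=0)$, $p_- := \Pr(\delta_x \le -1)$, $p_+ := \Pr(\delta_x \ge 1)$, $p_0 + p_- + p_+ = 1$, we get $E[\tau^{\delta_x}] \le p_0 + \tau^{-1}p_- + (e^{\tau d/n}-1)$. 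The dominant term $p_0$ must be shown to be at most about $\frac 1e$; using Lemma~\ref{lem:Pdelta_x=0} one shows $p_0 \le (1-1/n)^{n-2}\cdot(\text{something}) \le \frac{1}{e}(1 + O(d/n))$ — this is the standard estimate that the probability of no effective bit flip is roughly $1/e$. Combining, $E[\tau^{\delta_x}] \le \frac1e(1 + O(d/n)) + \tau^{-1} + e^{\tau d/n} - 1$, and since $d/n \le 1-\alpha$ with $\alpha$ chosen exactly so that $\tau(1-\alpha) = \ln(1+1/\tau)$, one has $e^{\tau d/n} - 1 \le e^{\tau(1-\alpha)} - 1 = 1/\tau$; so the error terms total roughly $2/\tau + O(1/e \cdot (1-\alpha)) = \frac{2\eps}{4e} + \dots$, which the definition $\tau = 4e/\eps$ is tuned to make at most $\frac{\eps}{e}$. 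This yields $E[\tau^{\delta_x}] \le \frac1e(1+\eps)$ as claimed.

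The main obstacle is getting the constants to actually fit: one must be careful that $p_0 \le \frac1e(1 + c\cdot d/n)$ with an explicit $c$, that $p_- \tau^{-1}$ and $e^{\tau d/n}-1$ are each bounded cleanly, and that the choice $\tau = 4e/\eps$ together with $d/n \le 1 - \alpha = \frac1\tau \ln(1+\frac1\tau)$ makes the sum of all slack terms at most $\eps/e$. The role of the somewhat mysterious definition of $\alpha$ is precisely to make $e^{\tau d/n} - 1 \le 1/\tau$ exactly at the worst case $d = n - f_0$, so I would verify that $f_0 = \lceil \alpha n\rceil$ indeed gives $d \le (1-\alpha)n$ and that this bound propagates through. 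Everything else is routine: monotonicity and multiplicativity of $g$, Lemma~\ref{lem:mutlaw} for the upward tail, and Lemma~\ref{lem:Pdelta_x=0} (or a direct binomial estimate) for $\Pr(\delta_x = 0)$.
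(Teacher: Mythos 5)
Your proposal is correct and follows essentially the same route as the paper's proof: the same three-way split over $\delta_x>0$, $\delta_x=0$, $\delta_x<0$, the bound $e^{\tau d/n}-1\le 1/\tau$ from the definition of $\alpha$ via Lemma~\ref{lem:mutlaw}, the crude $\tau^{-1}$ bound on the negative part, and Lemma~\ref{lem:Pdelta_x=0} giving $\Pr(\delta_x=0)\le \frac1e+O(1/\tau)$, with $\tau=4e/\eps$ tuned so the slack terms sum to $\eps/e$. The constant bookkeeping you flag as the remaining obstacle is exactly what the paper carries out (each of the three error terms is at most $2/\tau$, totalling $4/\tau=\eps/e$), so there is no gap.
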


Since Theorem~\ref{potentialCap} applies when $f(x)\geq f_0$, we now show that the expected potential of an offspring of a search point of fitness at most $f_0-1$ is at most constant.

\begin{lemma}
\label{betterNeglected}
If $n$ is large enough, for all individuals $x$ such that $f(x)<f_0$, we have
\[E[g(\mathcal{M}x)]\leq 2.\]
\end{lemma}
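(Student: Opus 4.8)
The plan is to bound $E[g(\mathcal{M}x)]$ for an individual $x$ with $f(x) = n - d$ where $d > n - f_0$ (i.e. $f(x) < f_0$), by splitting according to how many fitness levels the offspring gains. Since $g$ vanishes below $f_0$, only offspring that reach fitness at least $f_0$ contribute, and to do so they must gain at least $k_0 := f_0 - f(x) \ge 1$ fitness levels. So the first step is to write
\[
E[g(\mathcal{M}x)] = \sum_{k \ge k_0} \Pr(\delta_x = k)\, \tau^{f(x) + k - f_0} = \sum_{k \ge k_0} \Pr(\delta_x = k)\, \tau^{k - k_0}.
\]
Next I would apply Lemma~\ref{lem:mutlaw}, which gives $\Pr(\delta_x = k) \le \binom{d}{k} n^{-k} \le \frac{1}{k!}(d/n)^k \le \frac{1}{k!}$, using $d \le n$. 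Substituting and reindexing with $j = k - k_0 \ge 0$ yields
\[
E[g(\mathcal{M}x)] \le \sum_{k \ge k_0} \frac{1}{k!}\, \tau^{k - k_0} = \sum_{j \ge 0} \frac{\tau^{j}}{(j + k_0)!} \le \frac{1}{k_0!}\sum_{j \ge 0} \frac{\tau^{j}}{j!} = \frac{e^{\tau}}{k_0!}.
\]

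Now the point is that $k_0 \ge 1$ always, but that alone is not enough since $e^\tau$ is a large constant (recall $\tau = 4e/\varepsilon$ can be large when $\varepsilon$ is small). The key observation is that $f_0 = \lceil \alpha n\rceil$ with $\alpha \in (3/4, 1)$, so when $f(x) < f_0$ but $f(x)$ is, say, only slightly below $f_0$, then $k_0$ is small; however for the bound to be useful I should instead note that we only need $E[g(\mathcal{M}x)] \le 2$, and distinguish cases. If $k_0$ is large enough that $k_0! \ge e^\tau / 2$, we are immediately done. If $k_0$ is small (a constant-bounded quantity depending on $\tau$), then $f(x) \ge f_0 - k_0$ is very close to $f_0 \ge \alpha n = \Omega(n)$, so $d = n - f(x) \le n - \alpha n + k_0 = (1-\alpha)n + k_0$; more importantly, for such $x$ a sharper estimate than $\Pr(\delta_x = k) \le 1/k!$ is available, namely $\Pr(\delta_x = k) \le \frac{1}{k!}(d/n)^k$ with $d/n \le 1 - \alpha + o(1) < 1/4$. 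Then
\[
E[g(\mathcal{M}x)] \le \sum_{j \ge 0} \frac{(d/n)^{k_0 + j}\, \tau^{j}}{(k_0 + j)!} \le \frac{(d/n)^{k_0}}{k_0!} \sum_{j\ge0}\frac{(\tau d/n)^j}{j!} = \frac{(d/n)^{k_0}}{k_0!}\, e^{\tau d/n}.
\]
The cleanest route, which I would actually pursue, is to observe that by definition $\alpha = 1 - \frac{1}{\tau}\ln(1 + \frac{1}{\tau})$, hence $1 - \alpha = \frac{1}{\tau}\ln(1 + \frac1\tau)$, so $\tau(1-\alpha) = \ln(1+\frac1\tau)$ and $e^{\tau(1-\alpha)} = 1 + \frac1\tau$. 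Using $d/n \le 1 - \alpha + k_0/n \le 1 - \alpha + o(1)$ and $k_0 \ge 1$, this gives
\[
E[g(\mathcal{M}x)] \le (d/n)\, e^{\tau d/n} \le \big((1-\alpha) + o(1)\big)\big(1 + \tfrac1\tau + o(1)\big) \le (1-\alpha)\big(1 + \tfrac1\tau\big) + o(1) < \tfrac14 \cdot 2 + o(1) < 2
\]
for $n$ large enough, since $1 - \alpha < 1/4$ and $1 + 1/\tau \le 2$. (One has to be a little careful about whether to pull out $(d/n)^{k_0}$ or just $(d/n)^1$; pulling out only one factor of $d/n$ and bounding the remaining sum by $e^{\tau d/n}$ is the simplest and is what the displayed chain does.)

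The main obstacle is controlling the constant $e^\tau$ in the crude bound when $\tau$ is large (small $\varepsilon$): the naive estimate $E[g(\mathcal{M}x)] \le e^\tau/k_0!$ is useless on its own. The resolution is precisely the special choice of $f_0 = \lceil \alpha n\rceil$: the definition of $\alpha$ is calibrated so that $e^{\tau(1-\alpha)} = 1 + 1/\tau$, which means the factor $\tau$ appearing in each extra fitness level of $g$ is exactly compensated by the factor $d/n \le 1-\alpha$ loss in mutation probability per level, keeping the geometric sum convergent with value close to $1-\alpha < 1/4$. I would therefore organize the proof around establishing $d/n \le 1 - \alpha + o(1)$ for $f(x) < f_0$ (immediate from $f(x) \le f_0 - 1$ and $f_0 \le \alpha n + 1$), then the geometric-sum computation above, then plugging in $e^{\tau(1-\alpha)} = 1 + 1/\tau$ and $1-\alpha < 1/4$, $1/\tau \le 1$ to conclude $E[g(\mathcal{M}x)] \le 2$ for $n$ large.
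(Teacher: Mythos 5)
Your proof is correct and rests on the same two pillars as the paper's: the expansion of $E[g(\mathcal{M}x)]$ over the fitness gain $k$ together with Lemma~\ref{lem:mutlaw}, and the calibration of $\alpha$ so that $e^{\tau(1-\alpha)}=1+\frac1\tau$. Where you diverge is in how the resulting sum is tamed. The paper keeps the full binomial coefficient, extends the sum over all $k$ to obtain exactly $\tau^{-(f_0-f(x))}\bigl(1+\frac{\tau}{n}\bigr)^{n-f(x)}$ via the binomial theorem, and then observes that this expression is monotonically increasing in $f(x)$ (because $\frac{1+\tau/n}{\tau}<1$), so the worst case is $f(x)=f_0$ and one lands directly on $\bigl(1+\frac{\tau}{n}\bigr)^{(1-\alpha)n}\le e^{\tau(1-\alpha)}=1+\frac1\tau\le 2$ with no case distinction and no further asymptotic assumptions. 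You instead use $\binom{d}{k}n^{-k}\le\frac{1}{k!}(d/n)^k$, the factorial inequality $(j+k_0)!\ge j!\,k_0!$, and a split on $k_0=f_0-f(x)$: the crude bound $e^\tau/k_0!$ for large $k_0$, and the refined bound $\frac{(d/n)^{k_0}}{k_0!}e^{\tau d/n}$ for small $k_0$. This works, but note that your small-$k_0$ branch needs $k_0=O(\tau)$ to conclude $d/n\le 1-\alpha+o(1)$, and the step $e^{\tau d/n}\le 1+\frac1\tau+o(1)$ additionally requires $\tau k_0/n=O(\tau^2/n)=o(1)$, i.e.\ $\varepsilon=\omega(n^{-1/2})$ — true under the hypotheses of Theorem~\ref{thm:leq1minusEps}, but an extra dependence you should state explicitly, and one the paper's monotonicity argument avoids entirely. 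In exchange, your version makes transparent the mechanism (each extra level costs a factor $\tau$ in potential but gains a factor $d/n\approx 1-\alpha$ in probability), which the paper's slicker algebra somewhat hides.
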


Now we prove Lemma~\ref{lem:final6} using Theorem~\ref{potentialCap} and Lemma~\ref{betterNeglected}.

\begin{proof}[Proof of Lemma~\ref{lem:final6}]
If $U_t$ is a random individual chosen uniformly from $P_t$, then
\begin{align*}
    E[g(\mathcal{M}U_t)]&=\sum\limits_{x\in P_t}\frac{E[g(\mathcal{M}x)]}{\mu}\leq\sum\limits_{x\in P_t}\frac{\frac{1}{e}(1+\varepsilon)g(x)+2}{\mu}\\
    &=\frac{\frac{1}{e}(1+\varepsilon)g(P_t)}{\mu}+2.
\end{align*}

Let $\Tilde{P}_{t+1}$ be the set of the $\lambda$ offspring generated from $P_t$. Since $P_{t+1}\subset\Tilde{P}_{t+1}$, we have
\[E[g(P_{t+1})]\leq E[g(\Tilde{P}_{t+1})]\leq\frac{\lambda}{e\mu}(1+\varepsilon)g(P_t)+2\lambda.\]
We recall $\lambda\leq(1-\varepsilon)\mu e$, so that
\[\frac{\lambda}{e\mu}(1+\varepsilon)g(P_t)+2\lambda\leq (1-\varepsilon^2)g(P_t)+2\lambda\leq g(P_t)+2\lambda.\]
\end{proof}

To use the additive drift theorem we need a positive potential function that is equal to zero when the process is terminated. To define such a potential we note that if the algorithm has found the optimum $x^*$, then $g(P_t)\geq g(x^*)=\tau^{n-f_0}$. Thanks to this property, it is sufficient to show that the expected time for the potential to reach $\tau^{n-f_0}$ is exponential. This leads us to define $Z_t$, for all generation $t$, by
\[Z_t:=\max\{0,\tau^{n-f_0}-g(P_t)\}.\]

We also define $T':=\inf\{t\geq0 \mid Z_t=0\}$ and $\mathcal{S}$ by the common state space of all $Z_t$. Note that if $T$ is the expected runtime of the algorithm, we have $E[T']\leq E[T]$.

At this point we aim to know how $Z_t$ changes between two generations. As $\tau^{n-f_0}-g(P_t)$ is either negative or equal to $Z_{t+1}$, according to Lemma~\ref{lem:final6}, for all $t\geq0$ and for all $s\in\mathcal{S}\backslash\{0\}$, we have
\begin{align*}
    E[Z_t-Z_{t+1}\mid Z_t=s]&\leq E[g(P_{t+1})-g(P_t)\mid Z_t=s] \leq 2\lambda.
\end{align*}

Having verified the assumptions of the additive drift theorem, it remains to compute the initial potential $E[Z_0]$ via a simple Chernoff bound argument.

\begin{lemma}
\label{Ez0}
If $n$ is large enough, and if $\mu$ is sub-exponential in $n$, we have
\[E[Z_0]\geq \frac{1}{2}\tau^{n-f_0}.\]
\end{lemma}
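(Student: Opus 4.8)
The plan is to show that with high probability the initial random population $P_0$ contains no individual of fitness at least $f_0 = \lceil \alpha n \rceil$, so that $g(P_0) = 0$ and hence $Z_0 = \tau^{n - f_0}$ deterministically on this event, which is more than enough.

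\textbf{Step 1: One individual is far from the optimum.} A single individual $x \in P_0$ is a uniformly random bit string, so $f(x) \sim \Bin(n, \tfrac12)$. Recall $\alpha = 1 - \tfrac1\tau \ln(1 + \tfrac1\tau) \in ]\tfrac34, 1[$, so $f_0 \ge \tfrac34 n > \tfrac12 n$, and we want to bound $\Pr[f(x) \ge f_0]$ from above. A standard Chernoff bound for the binomial upper tail gives
\[
\Pr[f(x) \ge f_0] = \Pr\!\left[\Bin(n, \tfrac12) \ge \alpha n\right] \le \exp(-c n)
\]
for some constant $c = c(\alpha) > 0$ (for instance $c = 2(\alpha - \tfrac12)^2$ via Hoeffding, which is positive since $\alpha > \tfrac12$; note $\alpha$ is bounded below by $\tfrac34$ independently of $\eps$, so $c$ is an absolute constant).

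\textbf{Step 2: Union bound over the population.} Taking a union bound over the $\mu$ individuals of $P_0$,
\[
\Pr[\exists x \in P_0 : f(x) \ge f_0] \le \mu \exp(-cn).
\]
Since $\mu$ is assumed sub-exponential in $n$, i.e. $\mu = \exp(o(n))$, this probability is $\exp(-cn + o(n)) = o(1)$, and in particular is at most $\tfrac12$ once $n$ is large enough.

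\textbf{Step 3: Conclude.} On the complementary event, which has probability at least $\tfrac12$, every $x \in P_0$ has $f(x) < f_0$, so $g(x) = 0$ for all such $x$ and therefore $g(P_0) = 0$. Consequently $Z_0 = \max\{0, \tau^{n - f_0} - 0\} = \tau^{n - f_0}$ on this event. Hence
\[
E[Z_0] \ge \Pr[g(P_0) = 0] \cdot \tau^{n - f_0} \ge \tfrac12 \tau^{n - f_0},
\]
as claimed. The only mildly delicate point is making sure the Chernoff exponent $c$ really is an absolute constant and does not degrade as $\eps \to 0$; this is fine because $\alpha$ is pinned in $]\tfrac34, 1[$ uniformly in $\eps$, so the tail bound is uniform and the sub-exponential bound on $\mu$ does the rest.
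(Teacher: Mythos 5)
Your proof is correct and follows essentially the same approach as the paper: both rest on a Chernoff bound showing that a uniformly random individual has fitness at least $f_0 \ge \tfrac34 n$ only with probability $e^{-\Omega(n)}$, combined with the sub-exponential bound on $\mu$. The only cosmetic difference is that you use a union bound to show $\Pr[g(P_0)>0]\le\tfrac12$ and condition on the complementary event, whereas the paper bounds $E[g(P_0)]\le\mu\,\tau^{n-f_0}e^{-n/12}$ directly by linearity of expectation; both yield the claim.
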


Now we can prove the main result of this subsection.

\begin{proof}[Proof of Theorem~\ref{thm:leq1minusEps}]
If $\mu$ is super-polynomial, the expected runtime is also super-polynomial so we can assume that $\mu$ and $\lambda$ are at most polynomial. Recall that $\alpha=1-\frac{1}{\tau}\ln\left(1+\frac{1}{\tau}\right)$, that $\tau=\frac{4e}{\varepsilon}$ and that $f_0=\ceil{\alpha n}$.
By the additive drift theorem and Lemma~\ref{Ez0}, we have
\begin{align*}
    E[T]&\geq E[T']\geq\frac{1}{2}\frac{\tau^{n-f_0}}{2\lambda}\geq\frac{1}{4\tau\lambda}\exp\left(\frac{n}{\tau}\ln\left(1+\frac{1}{\tau}\right)\ln \tau\right)\\
    &\geq\frac{1}{4\tau\lambda}\exp\left(\frac{n}{\tau}\left(\frac{1}{\tau}-\frac{1}{2\tau^2}\right)\ln \tau\right)\\
    &\geq \frac{1}{4\tau\lambda}\exp\left(\frac{n}{2\tau^2}\ln \tau\right)=\frac{\varepsilon}{16e\lambda}\exp\left(\frac{n\varepsilon^2}{32e^2}\ln\frac{4e}{\varepsilon}\right).
\end{align*}
If $\varepsilon$ is a constant, then the expected runtime is exponential. If $\varepsilon=\omega\left(\frac{1}{\sqrt{n}}\right)$, two cases arise.

If $\varepsilon\geq n^{-\frac{1}{4}}$, we have
\[E[T]\geq \frac{1}{16e\lambda n^{\frac{1}{4}}}\exp\left(\frac{\sqrt{n}}{32e^2}\right),\]

whereas for $\varepsilon\leq n^{-\frac{1}{4}}$, we have
\[E[T]\geq\frac{1}{16e\lambda\sqrt{n}}\exp\left(\omega(1)\ln n\right).\]

In both cases the expected runtime is super-polynomial.
\end{proof}

\section{The Runtime when $\lambda\leq\mu \lowercase{e}$.}

\subsection{The Irrationality of $e$ and its Consequences on the Runtime.}

In this subsection, we assume that
\begin{equation}
\label{eq:lambdaleqemu1}
\left\lbrace
\begin{aligned}
    &\lambda\leq\mu e,\\
    &\mu\rightarrow\infty\textrm{ when }n\rightarrow\infty.
\end{aligned}
\right.
\end{equation}
The purpose of this subsection is to show that, under some conditions over $\mu$, the expected runtime of the algorithm is super-polynomial. Note that as $e$ is irrational, $\mu e-\lambda>0$. Thus, with $\varepsilon=\frac{\mu e-\lambda}{\mu e}$ we have $\lambda\leq(1-\varepsilon)\mu e$ so that if $\varepsilon=\omega\left(\frac{1}{\sqrt{n}}\right)$
we can apply Theorem~\ref{thm:leq1minusEps}. Our aim is to show the following theorem as a direct consequence of Theorem~\ref{thm:leq1minusEps}.

\begin{theorem}
\label{quater}
If the conditions $(\ref{eq:lambdaleqemu1})$ are met and if there exists a constant $c\in]0,\frac{1}{4}[$ such that $\mu\leq n^{\frac{1}{4}-c}$, then the expected runtime of the $(\mu,\lambda)$ EA on $n$-dimensional \onemax is super-polynomial in $n$.
\end{theorem}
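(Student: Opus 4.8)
The plan is to obtain Theorem~\ref{quater} as an immediate corollary of part~2 of Theorem~\ref{thm:leq1minusEps}. Set $\varepsilon := \frac{\mu e-\lambda}{\mu e}$, so that $\lambda=(1-\varepsilon)\mu e$ and, since $e$ is irrational and $\mu,\lambda\in\N$, we have $\varepsilon>0$. It therefore suffices to prove the quantitative statement $\varepsilon=\omega(n^{-1/2})$: then $\lambda\le(1-\varepsilon)\mu e$ with $\varepsilon=\omega(n^{-1/2})$, and Theorem~\ref{thm:leq1minusEps}(2) yields a super-polynomial expected runtime. Note that $\mu\le n^{1/4-c}$ is in particular sub-exponential in $n$, so all side conditions of that theorem (and of the lemmas feeding it) are satisfied.

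To bound $\varepsilon$ from below I would use a quantitative irrationality property of $e$. Since $\lambda\in\N$ and $\lambda\le\mu e\notin\Z$, we have $\lambda\le\lfloor\mu e\rfloor$ and hence
\[
\mu e-\lambda\;\ge\;\mu e-\lfloor\mu e\rfloor\;=\;\mu\left|\,e-\tfrac{\lfloor\mu e\rfloor}{\mu}\,\right|.
\]
Now I would invoke the classical fact that $e$ has irrationality measure $2$: for every constant $\varepsilon'>0$ there is a constant $C_{\varepsilon'}>0$ such that $|e-p/q|\ge C_{\varepsilon'}\,q^{-2-\varepsilon'}$ for all integers $p$ and $q\ge1$ (this follows from the regular continued fraction expansion of $e$, whose partial quotients grow only linearly, so that the denominators grow exponentially). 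Applying this with $q=\mu$ gives $\mu e-\lambda\ge C_{\varepsilon'}\,\mu^{-1-\varepsilon'}$, and therefore
\[
\varepsilon\;=\;\frac{\mu e-\lambda}{\mu e}\;\ge\;\frac{C_{\varepsilon'}}{e}\,\mu^{-2-\varepsilon'}.
\]

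It remains to tune $\varepsilon'$ to the constant $c$. With $\mu\le n^{1/4-c}$ the last display gives $\varepsilon\ge\frac{C_{\varepsilon'}}{e}\,n^{-(2+\varepsilon')(1/4-c)}$, so that $\varepsilon\sqrt n\ge\frac{C_{\varepsilon'}}{e}\,n^{2c-\varepsilon'(1/4-c)}$; choosing the constant $\varepsilon':=\frac{4c}{1-4c}>0$ makes the exponent of $n$ equal to $c>0$, whence $\varepsilon\sqrt n\to\infty$, i.e.\ $\varepsilon=\omega(n^{-1/2})$, and Theorem~\ref{thm:leq1minusEps}(2) concludes. I expect the only non-routine ingredient to be this near-optimal Diophantine behaviour of $e$: it is exactly what lets the exponent of $n$ reach (essentially) $1/4$, whereas a cruder input such as irrationality measure $3$ would only cover $\mu=o(n^{1/6})$. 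Everything else is bookkeeping; in particular no case distinction is needed, since $\lambda\le\lfloor\mu e\rfloor$ holds unconditionally.
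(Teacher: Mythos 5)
Your proposal is correct and follows essentially the same route as the paper: define $\varepsilon=\frac{\mu e-\lambda}{\mu e}$, use the fact that $e$ has irrationality exponent $2$ to bound $\left|e-\frac{\lambda}{\mu}\right|\geq\mu^{-(2+\varepsilon')}$ for a suitably small constant $\varepsilon'>0$, conclude $\varepsilon=\omega(n^{-1/2})$ from $\mu\leq n^{1/4-c}$, and invoke Theorem~\ref{thm:leq1minusEps}(2). The only cosmetic difference is that you phrase the Diophantine input as a uniform lower bound with a constant $C_{\varepsilon'}$ (and a specific choice $\varepsilon'=\frac{4c}{1-4c}$), whereas the paper uses the equivalent finiteness of the exceptional set with exponent $d=2+c$ together with $\mu\to\infty$.
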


For this purpose, we define the irrationality exponent.

\begin{definition}
\label{irrationalityMeasure}
Let $x\in\mathbb{R}$. We say that $x$ has an approximation of degree $d>0$ if the set of integers
\[\left\{p,q\in\N\textrm{ such that }\abs{x-\frac{p}{q}} \leq \frac{1}{q^d}\right\}\]
is infinite. The \emph{irrationality exponent} of $x$ is the least upper bound of the reals $d>0$ such that $x$ has an approximation of degree $d$.
\end{definition}

A cultural fact is that the exponent of irrationality of $e$ is known, and is given by the following theorem.

\begin{theorem}
\label{e2}
The irrationality exponent of $e$ is $2$.
\end{theorem}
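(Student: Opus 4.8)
The plan is to show the irrationality exponent of $e$ equals $2$ by proving separately that it is at least $2$ and at most $2$. The first inequality is quick and holds for every irrational number; the second is the real content, and I would derive it from Euler's explicit continued fraction expansion of $e$ together with the classical theory of best rational approximations.

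For the lower bound I would invoke Dirichlet's approximation theorem: for every $Q\in\N$ there are integers $p$ and $q$ with $1\le q\le Q$ and $\abs{e-\frac pq}<\frac{1}{qQ}\le\frac{1}{q^2}$. Since $e$ is irrational, letting $Q\to\infty$ yields infinitely many distinct fractions $\frac pq$ satisfying $\abs{e-\frac pq}\le\frac 1{q^2}$ (otherwise a single such fraction would equal $e$). Hence $e$ has an approximation of degree $2$, and a fortiori of every degree $d\le 2$, so the irrationality exponent is at least $2$.

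For the upper bound I would start from Euler's classical result that $e=[2;1,2,1,1,4,1,1,6,1,1,8,\dots]$, i.e.\ that the partial quotients satisfy $a_0=2$, $a_{3k-1}=2k$ for every $k\ge 1$, and $a_m=1$ for every other index $m\ge 1$; in particular $a_m=O(m)$. (A short self-contained proof of this expansion is available in the expository literature and in standard texts on continued fractions.) Let $\frac{p_m}{q_m}$ denote the $m$-th convergent. From $q_m=a_mq_{m-1}+q_{m-2}\ge q_{m-1}+q_{m-2}\ge 2q_{m-2}$ and $q_0=q_1=1$ we get $q_m\ge 2^{\floor{m/2}}$, so $\ln q_m=\Omega(m)$. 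Now fix any rational $\frac pq$ with $q\ge 1$ and let $n$ be the largest index with $q_n\le q$, so that $q<q_{n+1}$. Since each convergent is a best approximation of the second kind among all denominators smaller than $q_{n+1}$, and since $\abs{e-\frac{p_n}{q_n}}\ge\frac{1}{q_n(q_n+q_{n+1})}$, we obtain $\abs{qe-p}\ge\abs{q_ne-p_n}\ge\frac{1}{2q_{n+1}}$. Finally $q_{n+1}=a_{n+1}q_n+q_{n-1}\le(a_{n+1}+1)q_n=O(n)\cdot q$, and since $q\ge q_n\ge 2^{\floor{n/2}}$ forces $n=O(\ln q)$, we conclude $q_{n+1}=O(q\ln q)$ and hence $\abs{e-\frac pq}\ge\frac{c}{q^2\ln q}$ for an absolute constant $c>0$.

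To finish, fix any real $d>2$. The inequality $\abs{e-\frac pq}\le\frac1{q^d}$ would force $q^{d-2}\le c^{-1}\ln q$, which holds for only finitely many $q$; for each such $q$ only finitely many $p$ qualify, and once a reduced solution $\frac{p'}{q'}$ is fixed, a non-reduced multiple $\frac{gp'}{gq'}$ can satisfy the inequality for only finitely many $g$. Thus only finitely many pairs $(p,q)$ are solutions, so $e$ has no approximation of degree $d$; as this holds for every $d>2$, the irrationality exponent is at most $2$, and combined with the lower bound it equals $2$. The main obstacle is simply importing the non-elementary input --- Euler's continued fraction of $e$ and the best-approximation estimates; the rest is bookkeeping. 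If a self-contained argument is preferred, one can instead construct explicit Hermite-type simultaneous rational approximations to $e$ and $e^{-1}$ from the integrals $\int_0^1 x^m(1-x)^m e^{x}\,\dx$ and their companions, extracting from them both the good approximations needed for the lower bound and a matching lower bound $\abs{qe-p}\gg(q\ln q)^{-1}$ for the upper bound, at the cost of more computation.
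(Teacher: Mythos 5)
Your outline is correct, but note that the paper does not actually prove this statement in-house: its ``proof'' is a one-line citation of \cite{Dav78}, so there is no internal argument to compare against. Your reconstruction is the standard one. The lower bound via Dirichlet's theorem is immediate (and holds for every irrational). The upper bound via Euler's expansion $e=[2;1,2,1,1,4,1,1,6,\dots]$, the doubling bound $q_m\ge 2^{\floor{m/2}}$, and the law of best approximation correctly yields $\abs{e-\frac{p}{q}}\ge \frac{c}{q^{2}\ln q}$, which rules out an approximation of any degree $d>2$; your bookkeeping for non-reduced pairs $(p,q)$, which the paper's Definition~\ref{irrationalityMeasure} does require, is also handled. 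This is in fact the same underlying mechanism as in the cited reference: Davis's Theorem~1 refines exactly this continued-fraction computation to a sharp asymptotic of order $\frac{\ln\ln q}{q^{2}\ln q}$ for $\abs{e-\frac{p}{q}}$, of which ``irrationality exponent $=2$'' is an immediate corollary. What your version buys is a short, essentially self-contained argument proving exactly what the paper needs and no more; what the citation buys is offloading the one genuinely non-elementary ingredient, namely Euler's continued fraction of $e$, whose proof (via the Hermite-type integrals you mention, or Euler's original argument) is the only part of your sketch that is not routine and would have to be either cited precisely or written out.
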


\begin{proof}
See \cite[Theorem 1]{Dav78} for a proof of this result.
\end{proof}

A direct consequence of Theorem~\ref{e2} and Theorem~\ref{thm:leq1minusEps} is Theorem~\ref{quater}.
\begin{proof}[Proof of Theorem~\ref{quater}]
Let $c\in]0,\frac{1}{4}[$ such that $\mu\leq n^{\frac{1}{4}-c}$. We denote $\varepsilon=\frac{\mu e-\lambda}{\mu e}=\frac{1}{e}\abs{e-\frac{\lambda}{\mu}}$. Let $d=2+c$. Note that $d>2$ so, according to Theorem~\ref{e2}, the set defined in Definition~\ref{irrationalityMeasure} is finite. Because of condition (\ref{eq:lambdaleqemu1}), $\mu\rightarrow\infty$ when $n\rightarrow\infty$ so if $n$ is large enough, the couple $\lambda,\mu$ is not in the set. In other words
\[\abs{e-\frac{\lambda}{\mu}}\geq \frac{1}{\mu^d}\geq \frac{1}{n^{(\frac{1}{4}-c)(2+c)}}=\frac{n^{\frac{7c}{4}+c^2}}{\sqrt{n}}=\omega\left(\frac{1}{\sqrt{n}}\right).\]
Now, by Theorem~\ref{thm:leq1minusEps}, the expected runtime of the \ea on $n$-dimensional \onemax is super-polynomial in $n$.
\end{proof}

\subsection{The Super-polynomial Runtime for Low $\mu$.}

Now, we assume that the following conditions are met.

\begin{equation}
\label{eq:lambdaleqemu}
\left\lbrace
\begin{aligned}
    &\lambda\leq\mu e,\\
    &\textrm{for any } \varepsilon=\omega\left(\frac{1}{\sqrt{n}}\right),\textrm{ if $n$ is large enough then }\lambda\geq(1-\varepsilon)\mu e,\\
    &\textrm{there exists a constant } c\in]0,\frac{1}{2}[\textrm{ such that }\mu\leq n^{\frac{1}{2}-c},\\
    &\mu\rightarrow\infty\textrm{ when }n\rightarrow\infty.
\end{aligned}
\right.
\end{equation}

Our goal is to show the following theorem.

\begin{theorem}
\label{half}
If the conditions $(\ref{eq:lambdaleqemu})$ are met, then the expected runtime of the $(\mu,\lambda)$ EA on $n$-dimensional \onemax is super-polynomial in $n$.
\end{theorem}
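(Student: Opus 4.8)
\textbf{Proof plan for Theorem~\ref{half}.}
The plan is to exhibit a slow negative drift in the best fitness of the population and then invoke Hajek's negative drift theorem~\cite{Hajek82}. The key quantity is the number $X_t$ of individuals in $P_t$ that have the current best fitness $b_t := \max_{x \in P_t} f(x)$. The heuristic is that, as long as $b_t = n - d$ with $d$ small (say $d \le n^{c/2}$, so that fitness-improving mutations are extremely rare, of probability $O(d/n)$), the expected number of copies of a best individual among $\lambda$ offspring is $\lambda \Pr(\delta_x = 0) \le \mu e \cdot \frac 1e (1 + o(1)) = \mu(1+o(1))$, and after capping at $\mu$ this gives essentially no drift in $X_t$. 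Consequently $X_t$ performs (close to) an unbiased random walk with step size of order $\sqrt{X_t} \le \sqrt{\mu}$, and by Lemma~\ref{lem:variation} it hits $0$ within $O(\mu)$ generations with constant probability. Once $X_t = 0$, the best fitness strictly drops, and this happens before any strictly-better individual is produced: the probability of creating an individual of fitness $> b_t$ in one generation is at most $\lambda \cdot O(d/n) = O(\mu d / n)$, so over $O(\mu)$ generations the total such probability is $O(\mu^2 d/n) = O(n^{1-2c} \cdot n^{c/2}/n) = O(n^{-3c/2}) = o(1)$ when $d \le n^{c/2}$. Hence with probability $1 - o(1)$ each ``epoch'' of $O(\mu)$ generations decreases $b_t$ by at least $1$ rather than increasing it, which is a negative drift on $d_t = n - b_t$ measured over blocks of $O(\mu)$ generations.

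The concrete steps I would carry out are as follows. First, fix a target distance, e.g. show the algorithm must pass through a population whose best fitness lies in the band $[n - n^{c/2}, n - \tfrac12 n^{c/2}]$ (or argue directly about the last such band before the optimum), since with overwhelming probability the random initial population has all individuals at fitness $\le n/2 + O(\sqrt{n \log n})$, far below this band. Second, within this band, set up the random walk for $X_t$: condition on $b_t$ staying fixed, use Lemma~\ref{lem:Pdelta_x=0} and Lemma~\ref{lem:mutlaw} together with Lemma~\ref{stochastic}/Lemma~\ref{lem:expectancyDomi} to show $X_{t+1}$ is stochastically dominated by (and also dominates) a capped binomial with mean $X_t(1 \pm o(1))$, i.e. that $X_t$ is sandwiched between processes of the type in Lemma~\ref{lem:variation}. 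Third, use Lemma~\ref{lem:variation} with $\Delta = \Theta(\sqrt{\mu})$ and $t = \Theta(\mu)$ to conclude that, started from any $X_0 \le \mu$, the walk reaches $0$ within $O(\mu)$ generations with probability bounded below by a constant (choosing the hidden constants so that $tX_0/\Delta^2 < 1$ forces, after a geometric-phases argument, a constant success probability of hitting $0$; one can also restart: the expected time to hit $0$ is $O(\mu)$). Fourth, bound the failure event — that some strictly better individual appears during this $O(\mu)$-generation window — by $O(\mu^2 d / n) = o(1)$ as computed above, using $\mu \le n^{1/2 - c}$ and $d \le n^{c/2}$. Fifth, combine: in one epoch of $O(\mu)$ generations, with probability $1 - o(1)$, the best fitness decreases; this yields $E[\Delta b_t \mid \mathcal{F}] \le -\tfrac12$ (say) per epoch while in the band, i.e. a negative drift for the best fitness, or equivalently a positive drift away from the optimum. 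Sixth, translate the per-epoch negative drift plus bounded (sub-Gaussian, since step sizes are $O(\sqrt\mu)$ per generation and $O(\mu \cdot \sqrt\mu)$ per epoch — actually one should work at the generation level with exponentially decaying tails from the binomial) change into an exponential lower bound on the number of epochs to cross the band of width $\Theta(n^{c/2})$, via Hajek's theorem; since each epoch costs $\Theta(\mu \lambda) = \Theta(\mu^2)$ fitness evaluations, the runtime is $\exp(\Omega(n^{c/2})) \cdot \mathrm{poly}(n)$, which is super-polynomial (indeed, if one is a bit more careful with the band width one gets a cleaner bound; super-polynomial suffices here).

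The main obstacle, I expect, is making the ``no-drift'' claim for $X_t$ rigorous enough to feed into a negative drift theorem, since the drift of $X_t$ is not exactly zero but $o(1)$ in each direction depending on lower-order terms, and one must ensure the walk genuinely hits $0$ rather than, e.g., drifting slowly upward and getting pinned at the cap $\mu$ for a long time before $X_t$ can ever return near $0$. This is exactly what Lemma~\ref{lem:variation} is designed to handle — it gives a universal $O(tX_0/\Delta^2)$ bound on ever dropping by $\Delta$ regardless of the (capped) mean — so the argument is: start an epoch, note $X_0 \le \mu$, and within $t = K\mu$ generations with $\Delta = \sqrt{\mu}/2$ the probability of never having dropped by $\Delta$ is at most $KX_0/\Delta^2 \cdot \mu \cdot \tfrac{1}{\mu}$... here one must be slightly careful and instead iterate: drop by $\sqrt\mu$, then by another $\sqrt\mu$, etc., each taking $O(\sqrt\mu)$ generations in expectation by the lemma (since to drop by $\Delta = \sqrt\mu$ from a point of height $\le \sqrt\mu$ needs $t$ with $t \cdot \sqrt\mu / \mu \ge$ const, i.e. $t = O(\sqrt\mu)$), so $O(\sqrt\mu)$ phases of $O(\sqrt\mu)$ generations gives $O(\mu)$ total — this is the delicate bookkeeping. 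A secondary technical point is handling the coupling between the $X_t$ walk and the event ``no better individual appears,'' but since the latter has probability $o(1)$ over the whole window, a union bound suffices and no delicate coupling is needed. Everything else (Chernoff bounds for the initial population, the transition probability estimates near the optimum) is routine given the lemmas already stated.
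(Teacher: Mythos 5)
Your overall architecture matches the paper's: track the number $X_t$ of top-level individuals, argue it behaves like a (nearly) unbiased random walk, show the top level is lost within $O(\mu)$ generations with constant probability while no improvement occurs, and then feed the resulting per-epoch negative drift of the best fitness into Hajek's theorem over a band of width $n^{\Theta(1)}$. However, there is a genuine gap at the central step: you invoke Lemma~\ref{lem:variation} to conclude that $X_t$ \emph{reaches} $0$ within $O(\mu)$ generations. That lemma is a one-sided stability bound — it says the probability that the walk \emph{drops} by $\Delta$ within $t$ steps is \emph{at most} $tX_0/\Delta^2$ — and it is used in the paper only in Section~\ref{sec:poly-large-mu}, precisely to show that the population does \emph{not} lose individuals too quickly. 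It gives no lower bound on the probability of a drop and no upper bound on the hitting time of $0$; your proposed iteration (``each drop of $\sqrt{\mu}$ takes $O(\sqrt{\mu})$ generations in expectation by the lemma'') inverts the inequality in the wrong direction and would not go through. Moreover, hitting exactly $0$ is a separate issue: from $X_t=s$ the one-step probability of extinction is about $e^{-s}$, which is only constant once $s$ is constant, so one must first argue the walk reaches a constant level.

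The paper closes this gap with a different device: the concave potential $h(P_t)=X_t(\ln\mu-\ln X_t+2)$, whose expected one-step decrease is a positive constant whenever $X_t\ge S$ (Theorem~\ref{boundedTopLevel}, relying on the second-order estimate $E[\ln(1+X)]\ge\ln(1+np)-\tfrac{11}{12}\tfrac{1-p}{np}$ of Theorem~\ref{log1plusBin}); since $h(P_0)\le 2\mu$, the additive drift theorem gives expected time $O(\mu)$ until $X_t<S$, after which all remaining top-level individuals die in one generation with constant probability $\approx e^{-S}$. You would need to supply an argument of this kind (or an optional-stopping/second-moment hitting-time argument for the martingale-like walk) to make your step three rigorous. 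Two smaller points: (i) for the ``unbiased walk of copies'' model of $X_t$ to be valid you must also exclude refills of the top level from parents of fitness $f_{\mathrm{top}}-1$ (the paper's event $N_L$, condition 1), not only strict improvements — the union bound works out to the same order, but it must be stated; (ii) the drift of $X_t$ is not exactly zero, and the paper's Lemma~\ref{lem:PrifN_l} carefully establishes $\lambda p_n\le s$ (using the second condition of~\eqref{eq:lambdaleqemu}) so that the walk is genuinely non-increasing in expectation, which your sandwiching sketch should make explicit. The remaining steps (Chernoff for the initial population, the transition bounds of Lemma~\ref{lem:mutlaw}, the variable-length epochs feeding Hajek's theorem) align with the paper and are fine.
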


For this purpose, we define the \emph{top level} $f_{\mathrm{top}}(t)$ at the generation $t$ as the best fitness among the population. Namely, $f_{\mathrm{top}}(t):=\max\{f(x), x\in P_t\}$. In this subsection, $X_t$ is the number of individuals of fitness $f_{\mathrm{top}}$ after the $t$-th generation and we write
\[h(P_t):=X_t\left(\ln\mu-\ln X_t+2\right).\]
We show that if $X_t$ is larger than some constant and if $f_{\mathrm{top}}$ is high, then $h$ has a constant drift towards $0$. By the additive drift theorem, we conclude that the algorithm has a constant probability to lose its top level in $O(\mu)$ generation. After this, we use the negative drift theorem on the top level itself to conclude.

To make sure that the top level decreases when $h$ reaches $0$, we assume that no good mutation occurs during $L$ generations in a row. To be more precise, for any $L \in \N$ we define $N_L$ as an event when during $L$ consecutive generations, the following two conditions are met.
\begin{enumerate}
    \item[1)] For all individuals $x$ of fitness $f\leq f_{\mathrm{top}}-1$ we have $f(\mathcal{M}x) < f_{\mathrm{top}}$,
    \item[2)] for all individuals $x$ of fitness $f_{\mathrm{top}}$, we have $f(\mathcal{M}x)\leq f_{\mathrm{top}}$ with $f(\mathcal{M}x) = f_{\mathrm{top}}$ if and only if $\mathcal{M}x = x$.
\end{enumerate}
We first compute the probability of $N_L$ then assume that $N_L$ is satisfied and deduce the actual drift.

\begin{lemma}
\label{lem:Pr(N_L)}
Assume that the conditions $(\ref{eq:lambdaleqemu})$ are met. Let $D=n^c$ and $L\in\N$. Then if $f_{\mathrm{top}}\geq n-D+1$ and if $n$ is large enough, we have
\[\Pr(N_L)\geq 1-\frac{eL}{\sqrt{n}}.\]
Moreover, the bounding $\Pr(N_1)\geq1-\frac{e}{\sqrt{n}}$ does not depend on the individuals chosen as the parents during the first generation.
\end{lemma}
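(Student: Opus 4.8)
The plan is to bound, in a single generation, the probability that either condition~1) or condition~2) fails, and then take a union bound over the $L$ generations. Condition~1) fails in a given generation if some offspring is created from a parent $x$ with $f(x) \le f_{\mathrm{top}} - 1$ and $f(\mathcal{M}x) \ge f_{\mathrm{top}}$, i.e.\ the mutation gains at least two fitness levels (since $f(x) \le f_{\mathrm{top}}-1$ and we need to reach $f_{\mathrm{top}}$, so $\delta_x \ge 1$ suffices only if $f(x) = f_{\mathrm{top}}-1$; actually reaching $f_{\mathrm{top}}$ from level $\le f_{\mathrm{top}}-1$ needs $\delta_x \ge f_{\mathrm{top}} - f(x) \ge 1$). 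Hmm, let me reconsider: the dangerous event for condition~1) is really that a parent on level exactly $f_{\mathrm{top}}-1$ produces a strictly improving offspring, or a parent on a lower level jumps up by the corresponding amount. Condition~2) fails if some parent $x$ on the top level produces an offspring with $f(\mathcal{M}x) > f_{\mathrm{top}}$ (a strict improvement) or with $f(\mathcal{M}x) = f_{\mathrm{top}}$ but $\mathcal{M}x \ne x$ (a ``neutral'' non-trivial move). I would handle the strict-improvement part using Lemma~\ref{lem:mutlaw}: for a parent $x$ of fitness $n - d'$ with $d' \le D = n^c$, we have $\Pr(\delta_x \ge 1) \le \sum_{k \ge 1} \binom{d'}{k} n^{-k} \le d'/n \cdot (1 + o(1)) \le D/n \cdot (1+o(1))$. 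For condition~2)'s neutral-but-nontrivial part, a parent on level $f_{\mathrm{top}} = n - d$ with $d \le D$ produces $\mathcal{M}x \ne x$ with $f(\mathcal{M}x) = f_{\mathrm{top}}$ only if at least one zero-bit and one one-bit are flipped; by Lemma~\ref{lem:Pdelta_x=0} the probability of this is $\sum_{k \ge 1} \binom{d}{k}\binom{n-d}{k} n^{-2k}(1-1/n)^{n-2k} \le \binom{d}{1}\binom{n-d}{1} n^{-2}(1+o(1)) \le d/n \cdot (1+o(1)) \le D/n \cdot (1+o(1))$.

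Combining: in a single generation, each of the $\lambda$ offspring, regardless of which parent it comes from (all parents lie on levels $\ge$ something, but more importantly for the dangerous transitions the relevant parents are on levels $\le f_{\mathrm{top}} \le n$, and the only way to violate a condition involves a parent on a level with $d' \le D$ since $f_{\mathrm{top}} \ge n - D + 1$ forces every individual that could cause a violation to be within $D$ of the optimum), contributes a ``bad'' probability of at most $2D/n \cdot (1+o(1))$ — roughly $D/n$ from a strict improvement plus $D/n$ from a nontrivial neutral move. By a union bound over the $\lambda$ offspring, a single generation is bad with probability at most $\lambda \cdot 2D/n \cdot (1+o(1))$. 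Now I substitute the quantitative bounds: $\lambda \le \mu e \le e \cdot n^{1/2 - c}$ and $D = n^c$, so $\lambda \cdot 2D/n \le 2e \cdot n^{1/2-c} \cdot n^c / n = 2e/\sqrt n \cdot(1+o(1))$. I would need to be a little careful with constants to land exactly on $e/\sqrt n$ rather than $2e/\sqrt n$; the slack must come from a sharper accounting — e.g.\ noticing that the ``strict improvement from top level'' and ``strict improvement from level $f_{\mathrm{top}}-1$'' events overlap in how they are charged, or that the $\binom{n-d}{1}n^{-2}$ term is really $\le 1/n$ not $D/n$, or that $(1-1/n)^{n-2} < 1/e$ shaves a factor. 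So the per-generation bad probability is at most $e/(n^{1/2})$ after a careful constant chase, hence by a union bound over $L$ generations $\Pr(N_L) \ge 1 - eL/\sqrt n$.

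The main obstacle I anticipate is getting the constant exactly right: the statement claims $\Pr(N_L) \ge 1 - eL/\sqrt n$, and a naive union bound over both failure modes and over all $\lambda$ offspring tends to produce a factor like $2e$ or more. Squeezing this down requires exploiting that $(1-1/n)^{n-2k} \le e^{-1}(1+o(1))$ in Lemma~\ref{lem:Pdelta_x=0}, that $\lambda/(e\mu) \le 1$ exactly (the irrationality-free inequality $\lambda \le \mu e$), and that the higher-order terms $k \ge 2$ in both Lemma~\ref{lem:mutlaw} and Lemma~\ref{lem:Pdelta_x=0} are of order $D^2/n^2 = n^{2c-2} = o(1/\sqrt n)$ since $c < 1/2$, so they can be absorbed into the $(1+o(1))$. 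The final remark that $\Pr(N_1) \ge 1 - e/\sqrt n$ is independent of the parents chosen in the first generation follows because the bound on each offspring's bad probability was derived uniformly over all possible parents (it only used $d' \le D$, which holds for every individual once $f_{\mathrm{top}} \ge n - D + 1$), so no averaging over the random parent choice was ever needed.
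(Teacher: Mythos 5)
Your overall structure (a union bound over the $\lambda L$ offspring, each ``bad'' with probability $O(D/n)$, giving $O(D\lambda L/n) = O(eL/\sqrt n)$) is the same as the paper's, but as you yourself note, your accounting only yields $1 - 2eL/\sqrt n$, and none of the three fixes you float is carried out or clearly correct. The missing idea is a single clean observation that the paper uses: for any parent $x$, \emph{both} conditions of $N_1$ are automatically satisfied for the offspring $\mathcal{M}x$ on the event that none of the zero-bits of $x$ is flipped (then $f(\mathcal Mx)\le f(x)$, with equality iff $\mathcal Mx=x$). Since every relevant parent has at most $D$ zero-bits, this event has probability at least $(1-\tfrac1n)^D \ge 1-\tfrac Dn$, so each offspring is bad with probability at most $D/n$ --- with no separate charge for the ``strict improvement'' and ``neutral nontrivial'' modes, because both are sub-events of ``some zero-bit flipped.'' Your decomposition charges each offspring for both modes at $\approx D/n$ each, which is where the spurious factor $2$ comes from; note also that a given offspring has a single parent, so only one of your two failure modes can even apply to it. With the containment observation the bound $D\lambda L/n \le n^c \cdot e n^{1/2-c}\cdot L/n = eL/\sqrt n$ falls out exactly, with no $(1+o(1))$ to absorb.

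A second, smaller issue: your justification of the uniformity over parents (``$f_{\mathrm{top}}\ge n-D+1$ forces every individual that could cause a violation to be within $D$ of the optimum'') is not right as stated --- a parent far below $f_{\mathrm{top}}$ \emph{can} in principle produce a top-level offspring; it is just even less likely to, which one gets from the stochastic domination Lemma~\ref{stochastic} (this is how the paper reduces to a population lying in the two best levels) or by checking directly via Lemma~\ref{lem:mutlaw} that the bound is maximized for parents at level $f_{\mathrm{top}}-1$. The conclusion you draw (the per-offspring bound is uniform in the parent, hence the $N_1$ bound is independent of the parent choices) is correct, but it needs this monotonicity argument rather than the claim that distant parents cannot violate the conditions.
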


\begin{proof}
Assume that, at generation $t$, the top level is at least $n-D+1$. Due to Lemma~\ref{stochastic}, we can assume that the whole population is in the two best fitness levels. Consequently, all that follows does not depend on the individuals picked as the parents during the first phase of the generation. For all individuals $x\in P_t$, $\M$ flips independently each bit of $x$, so there is a probability of at least $(1-\frac{1}{n})^D$ that none of the wrong bits of $x$ is mutated. Consequently, if we look at the first generation, there is a probability of at least $(1-\frac{1}{n})^{D\lambda}$ that the condition $N_1$ is satisfied. By iteration we show that
\[\Pr(N_L)\geq \left(1-\frac{1}{n}\right)^{D\lambda L}.\]
Hence, by Bernoulli's inequality we have
\begin{align*}
    \Pr(N_L)&\geq 1-\frac{D\lambda L}{n}\geq1-\frac{eL}{\sqrt{n}}.
\end{align*}
\end{proof}

Lemma~\ref{lem:Pr(N_L)} shows that $N_1$ is relatively likely. The following result refers to the law of $X_{t+1}$ when $N_1$ is satisfied.

\begin{lemma}
\label{lem:PrifN_l}
Assume that the conditions $(\ref{eq:lambdaleqemu})$ are met. Let $s\geq1$ and assume $X_t=s$. If condition $N_1$ is met, there exists a sequence $(p_n)_\N$ such that for all $n$, we have $\lambda p_n\leq s$ with $\lambda p_n\underset{n\to+\infty}{\longrightarrow}s$ uniformly and $X_{t+1}\sim\min\{\Bin(\lambda,p_n),\mu\}$.
\end{lemma}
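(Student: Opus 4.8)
The plan is to read off the conditional law of $X_{t+1}$ given $N_1$ from the mutual independence of the $\lambda$ offspring, to identify the resulting parameter $p_n$, and then to estimate it. Throughout we work in the regime $f_{\mathrm{top}}\ge n-D+1$ with $D=n^c$ in which this lemma is used (cf.\ Lemma~\ref{lem:Pr(N_L)}), and, exactly as in the proof of that lemma, Lemma~\ref{stochastic} lets us assume the whole population lies on the two best fitness levels, so that every parent has at most $n^c$ zero-bits.

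On the event $N_1$, an offspring lies on level $f_{\mathrm{top}}$ if and only if its parent is one of the $s$ individuals of fitness $f_{\mathrm{top}}$ \emph{and} the mutation operator flips no bit at all: the first condition defining $N_1$ forbids an offspring of a worse parent to reach level $f_{\mathrm{top}}$, and the second says that an offspring of a top parent lies on level $f_{\mathrm{top}}$ only when it is an identical copy. For $i\in[1..\lambda]$ let $C_i$ be the event that offspring $i$ obeys the two conditions of $N_1$, and let $E_i\subseteq C_i$ be the sub-event that offspring $i$ lies on level $f_{\mathrm{top}}$. The $\lambda$ offspring being produced independently, the events $C_i$ are independent, $N_1=\bigcap_i C_i$, and conditioning on $N_1$ keeps the offspring independent while replacing $\Pr(E_i)$ by $\Pr(E_i\mid C_i)=\Pr(E_i)/\Pr(C_i)$. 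Since $\Pr(E_i)=\frac s\mu(1-\tfrac1n)^n$ and $\pi:=\Pr(C_i)$ do not depend on $i$, the number of level-$f_{\mathrm{top}}$ offspring given $N_1$ is therefore $\Bin(\lambda,p_n)$ with
\[
p_n:=\frac{s\,(1-1/n)^n}{\mu\,\pi}.
\]
On $N_1$ these offspring are the fittest, so selection retains all of them when there are at most $\mu$ of them, and exactly $\mu$ otherwise; hence $X_{t+1}\sim\min\{\Bin(\lambda,p_n),\mu\}$ given $N_1$ (with the convention, consistent with the drift analysis, that $X$ keeps counting on the level $f_{\mathrm{top}}(t)$, so $X_{t+1}=0$ means this level has been vacated).

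It then remains to control $\pi$ and to conclude. For a top parent $x$ we have $\Pr(C_i\mid x)=\Pr(\delta_x\le-1)+(1-\tfrac1n)^n$, and Lemma~\ref{lem:mutlaw} gives $\Pr(\delta_x\ge1)=O(n^{c-1})$ while Lemma~\ref{lem:Pdelta_x=0} gives $\Pr(\delta_x=0)=(1-\tfrac1n)^n+O(n^{c-1})$, so $\Pr(C_i\mid x)=1-O(n^{c-1})$. For a parent $x$ one level below the top, $\Pr(C_i\mid x)=\Pr(\delta_x\le0)=1-\Pr(\delta_x\ge1)=1-O(n^{c-1})$ by the same bound. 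Averaging over the parent yields $\pi=1-O(n^{c-1})$, uniformly over $s$ and over the population. Now $\lambda p_n=\frac\lambda\mu\cdot\frac{(1-1/n)^n}{\pi}\,s$, and from $(1-\tfrac1n)^n\le\tfrac1e$ and $\frac\lambda\mu\le e$ one only obtains $\frac\lambda\mu(1-\tfrac1n)^n\le1$, i.e.\ $\lambda p_n\le s/\pi$, not yet $\lambda p_n\le s$. The missing slack is supplied by the irrationality of $e$: by Theorem~\ref{e2}, together with $\mu\le n^{1/2-c}$ and $\mu\to\infty$, one has $1-\frac{\lambda}{\mu e}\ge\frac{1}{e\,\mu^{2+\delta}}$ for a sufficiently small constant $\delta=\delta(c)>0$ and all large $n$, and here $\mu^{-(2+\delta)}=\omega(n^{c-1})$. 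Hence $\frac\lambda\mu(1-\tfrac1n)^n=\frac{\lambda}{\mu e}\cdot e(1-\tfrac1n)^n\le 1-\omega(n^{c-1})\le\pi$ for $n$ large, the gap being wide enough to absorb both $1-\pi=O(n^{c-1})$ and the $\Theta(1/n)$ by which $e(1-\tfrac1n)^n$ falls short of $1$; therefore $\lambda p_n\le s$. For the convergence, the second line of $(\ref{eq:lambdaleqemu})$ yields $\frac{\lambda}{\mu e}\to1$, and together with $(1-\tfrac1n)^n\to\tfrac1e$ and $\pi\to1$ this gives $\lambda p_n/s\to1$; since moreover $\mu\,(1-\lambda p_n/s)\to0$ (again using $\mu\le n^{1/2-c}$) while $1\le s\le\mu$, we get $\sup_s|s-\lambda p_n|\to0$, i.e.\ the convergence is uniform.

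The step I expect to be the main obstacle is this last comparison: one must check that the extremely small positive number $1-\frac{\lambda}{\mu e}$, which the hypotheses force below every $\omega(n^{-1/2})$, is nonetheless $\omega(n^{c-1})$ and hence large enough to dominate the error $1-\pi$ produced by the unavoidable beneficial mutations. This is precisely the place where the bound $\mu\le n^{1/2-c}$ and the fact that $e$ has irrationality exponent $2$ have to be balanced against each other.
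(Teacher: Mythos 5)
Your proof is correct and follows the same overall strategy as the paper: condition on $N_1$, use the independence of the $\lambda$ offspring-generation phases to identify the conditional law of the pre-selection count as $\Bin(\lambda,p_n)$, and then sandwich $\lambda p_n$ between $s-o(1)$ and $s$ using $\lambda\le e\mu$ on one side and the near-tightness condition in~(\ref{eq:lambdaleqemu}) together with $s\le\mu\le n^{1/2-c}$ on the other. The one place where you genuinely diverge is the upper bound $\lambda p_n\le s$. The paper sets $p_n=(1-\tfrac1n)^n\Pr(A\mid N_1)$ and gets $\lambda p_n\le s$ from $(1-\tfrac1n)^n\le\tfrac1e$, $\lambda\le e\mu$ and (implicitly) $\Pr(A\mid N_1)\le\Pr(A)=\tfrac s\mu$; your $p_n=\tfrac{s}{\mu}(1-\tfrac1n)^n/\pi$ is the exact conditional success probability $\Pr(E_i\mid C_i)$, and the factor $1/\pi>1$ then really does force you to find extra slack, which you correctly extract from the irrationality exponent of $e$ (Theorem~\ref{e2}) balanced against $\mu\le n^{1/2-c}$ so that $1-\tfrac{\lambda}{\mu e}=\omega(n^{c-1})$ dominates $1-\pi=O(n^{c-1})$. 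This is a legitimate and in fact more carefully justified route: the paper's shortcut rests on the unproven (though plausible) monotonicity $\Pr(N_1\mid A)\le\Pr(N_1)$, whereas your argument replaces it by a quantitative Diophantine bound that the paper otherwise reserves for Theorem~\ref{quater}. The price is that you obtain $\lambda p_n\le s$ only for $n$ large enough rather than literally ``for all $n$'' as the lemma states, but since every downstream use of this inequality (Theorem~\ref{boundedTopLevel}, Corollary~\ref{loseifN}) is asymptotic, this is immaterial. Two cosmetic remarks: the $\Theta(1/n)$ deficit of $e(1-\tfrac1n)^n$ works in your favour rather than needing to be ``absorbed''; and your bound $\pi=1-O(n^{c-1})$ via Lemmas~\ref{lem:mutlaw} and~\ref{lem:Pdelta_x=0} is fine and in fact does not even require the two-level reduction, since parents further below the top violate condition 1) of $N_1$ with even smaller probability.
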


\begin{proof}
In this proof we consider that the condition $N_1$ is met. We divide a generation into $\lambda$ independent phases, each phase consisting on the choice of an individual from $P_t$ and its mutation. Let $A$ be the event where an individual of fitness $f_{\mathrm{top}}$ is picked as a parent during the first phase and let $\Tilde{X}_{t+1}$ be the number of individuals in the top level before the selection.

Due to the condition $N_1$, the set of $\Tilde{X}_{t+1}$ individuals in the top level is exclusively made up of copies from the $X_t$ individuals in the previous generation. Consequently, if $p_n = (1-\frac{1}{n})^n\Pr(A\mid N_1)$ we have $\Tilde{X}_{t+1}\sim\Bin(\lambda,p_n)$. By Bayes' formula we have
\[\Pr(A\mid N_1)=\Pr(A)\frac{\Pr(N_1\mid A)}{P(N_1)}\geq\frac{s}{\mu}\Pr(N_1\mid A)\]
By Lemma~\ref{lem:Pr(N_L)}, we have $\Pr(N_1\mid A)\geq\left(1-\frac{e}{\sqrt{n}}\right)$, therefore
\[\Pr(A\mid N_1)\geq\frac{s}{\mu}\left(1-\frac{e}{\sqrt{n}}\right).\]
Moreover, since $x \le \ln(1 + x)$, by conditions~\eqref{eq:lambdaleqemu}, for all $n$ we have $\lambda p_n\leq s$. Finally, apply conditions (\ref{eq:lambdaleqemu}) with $\varepsilon=\frac{\ln n}{\sqrt{n}}$. As $s\leq\mu$ we have
\begin{align*}
    \lambda p_n &\geq \frac{\lambda s}{\mu}\left(1-\frac{e}{\sqrt{n}}\right)\left(1-\frac{1}{n}\right)^n\geq es\left(1-\frac{\ln n}{\sqrt{n}}\right)\left(1-\frac{e}{\sqrt{n}}\right)\left(1-\frac{1}{n}\right)^n\\
    &= s -\frac{s\ln n}{\sqrt{n}}+o\left(\frac{s\ln n}{\sqrt{n}}\right)\geq s-\frac{\ln n}{n^c}+o\left(\frac{\ln n}{n^c}\right).
\end{align*}
Therefore $\lambda p_n \underset{n\to+\infty}{\longrightarrow}s$ uniformly. Since $X_{t+1}\sim\min\{\Tilde{X}_{t+1},\mu\}$, $p_n$ is the desired sequence.
\end{proof}

The following theorem gives an interesting result about the drift.

\begin{theorem}
\label{boundedTopLevel}
If the conditions $(\ref{eq:lambdaleqemu})$ hold, there exists a constant $S\in\N$ and a constant $\beta\leq\frac{24e}{e-2}$ such that if the condition $N_1$ is satisfied and if $X_t\geq S$,
\[E[h(P_t)-h(P_{t+1})\mid P_t]\geq \frac{1}{\beta}.\]
\end{theorem}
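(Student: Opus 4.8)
Throughout, condition on $P_t$ with $X_t = s \ge S$ and on the event $N_1$, and write $\phi(x) := x(\ln\mu - \ln x + 2)$ for $x > 0$ and $\phi(0) := 0$, so that $h(P_t) = \phi(X_t)$. Differentiating, $\phi'(x) = \ln(\mu/x) + 1$ and $\phi''(x) = -1/x < 0$; thus $\phi$ is concave and strictly increasing on $(0, e\mu]$. Since $\lambda \le e\mu$, $\phi$ is in fact increasing on the whole of $[0,\lambda]$ — it is precisely for this that the additive constant $+2$ in the definition of $h$ is there, and this monotonicity is used twice below.

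By Lemma~\ref{lem:PrifN_l}, under our conditioning $X_{t+1} \sim \min\{Y,\mu\}$ with $Y \sim \Bin(\lambda, p_n)$ and $\lambda p_n \le s$. The plan is to first reduce the claim to a lower bound on a Jensen gap. Since $\phi$ is increasing on $[0,\lambda]$ and $Y \le \lambda$, pointwise $\phi(\min\{Y,\mu\}) \le \phi(Y)$, so $E[h(P_{t+1}) \mid P_t] \le E[\phi(Y)]$. From $\lambda p_n \le s$ we get $Y \preceq Y' := \Bin(\lambda, s/\lambda)$ (domination of binomials with equal trial number, via Theorem~\ref{thm:coupling} on the coordinates), hence $\phi(Y) \preceq \phi(Y')$ by monotonicity and $E[\phi(Y)] \le E[\phi(Y')]$ by Lemma~\ref{lem:expectancyDomi}. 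As $E[Y'] = s$,
\[ E[h(P_t) - h(P_{t+1}) \mid P_t] \;\ge\; \phi(s) - E[\phi(Y')] \;=\; \phi(E[Y']) - E[\phi(Y')], \]
a nonnegative Jensen gap of the concave $\phi$ (equivalently, of the convex $y \mapsto y\ln y$, the $\ln\mu$- and linear terms cancelling in this centred difference). It remains to bound it below by an absolute constant; here $Y'\sim\Bin(\lambda,p')$ with $p' = s/\lambda$, and for $n$ large we have $p' \le \mu/\lambda < 1/2$ by conditions~\eqref{eq:lambdaleqemu}, while $\nu := E[Y'] = s \ge S$.

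To bound the gap I would use second-order information. For each integer $k \ge 1$, Taylor's theorem with Lagrange remainder gives $\phi(\nu) - \phi(k) + \phi'(\nu)(k - \nu) = \tfrac{1}{2\xi_k}(k-\nu)^2$ for some $\xi_k$ between $k$ and $\nu$; the $k = 0$ term equals $\nu$. All these are nonnegative by concavity, and their $\Pr(Y' = k)$-weighted sum is exactly $\phi(\nu) - E[\phi(Y')]$. Keeping only $1 \le k \le 2\nu$, where $\xi_k \le 2\nu$, gives
\[ \phi(\nu) - E[\phi(Y')] \;\ge\; \frac{1}{4\nu}\Bigl(\Var(Y') - \nu^2\Pr(Y' = 0) - \sum_{k > 2\nu} k^2 \Pr(Y' = k)\Bigr). \]
Now $\Var(Y') = \nu(1-p') \ge \nu/2$; $\nu^2\Pr(Y'=0) = \nu^2(1-p')^\lambda \le \nu^2 e^{-\nu}$, which is $\le \nu/16$ once $\nu$ exceeds a small absolute constant; and for the far tail I would use the Chernoff bound $\Pr(Y' \ge k) \le e^{-\nu}(e\nu/k)^k$ (valid for $k \ge \nu$, uniformly in $\lambda$), summed over dyadic blocks $2^j\nu < k \le 2^{j+1}\nu$, which yields $\sum_{k > 2\nu} k^2\Pr(Y'=k) = O\bigl(\nu^2 (e/4)^\nu\bigr) \le \nu/16$ for all $\nu \ge S$ with $S$ a large enough absolute constant. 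Hence $\phi(\nu) - E[\phi(Y')] \ge \tfrac{1}{4\nu}(\nu/2 - \nu/16 - \nu/16) = \tfrac{3}{32} \ge \tfrac{e-2}{24e}$, so the theorem holds with $\beta = \tfrac{24e}{e-2}$ (indeed with any $\beta \ge 32/3$).

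The one genuinely technical point is the far-tail estimate $\sum_{k>2\nu}k^2\Pr(Y'=k) = O(\nu^2(e/4)^\nu)$: one must use the Poisson-form Chernoff bound, with its $e^{-\nu}$ factor, since the cruder bound $\Pr(Y'\ge k) \le (e\nu/k)^k$ already exceeds $1$ for $k \ge 2\nu$ and is useless against the quadratic weight. The remaining ingredients — the stochastic-domination reduction, the monotonicity and concavity of $\phi$, and the Taylor computation — are routine.
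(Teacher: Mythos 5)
Your proof is correct, and up to the reduction it follows the paper: monotonicity of $\phi$ on $[0,\lambda]$ (via $\lambda\le e\mu$), the bound $X_{t+1}\preceq\Bin(\lambda,p_n)$ from Lemma~\ref{lem:PrifN_l}, and the cancellation of the linear part of $h$ so that only a Jensen gap of the convex $x\mapsto x\ln x$ remains are all present in the paper's argument as well. Where you genuinely diverge is in how that gap is bounded below. The paper keeps $B\sim\Bin(\lambda,p_n)$ with mean $\lambda p_n$ only approximately $s$, applies the size-bias identity $E[B\ln B]=\lambda p_n\,E[\ln(1+\Bin(\lambda-1,p_n))]$, invokes the bespoke estimate of Theorem~\ref{log1plusBin}, and then has to carry the uniform convergence $\lambda p_n\to s$ and an auxiliary $\varepsilon$ through the comparison of $\lambda p_n\ln(\lambda p_n)$ with $s\ln s$. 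You instead pass (by domination and monotonicity once more) to $Y'\sim\Bin(\lambda,s/\lambda)$ with mean exactly $s$, which eliminates all the $\varepsilon$-bookkeeping, and then bound the centred Jensen gap by a second-order Taylor expansion: the variance $\nu(1-p')\ge\nu/2$ supplies the drift, while the $k=0$ atom and the far tail are absorbed by elementary Chernoff estimates. Your route is self-contained (no Theorem~\ref{log1plusBin}) and yields a better constant ($3/32$ versus $(e-2)/(24e)$); in both arguments the ultimate source of the positive drift is the same, namely that $1-p$ is bounded away from $0$ because $p\approx 1/e$. Two cosmetic points: the dyadic far-tail sum is really $O\bigl(\nu^3(e/4)^\nu\bigr)$ rather than $O\bigl(\nu^2(e/4)^\nu\bigr)$ (the block length contributes another factor of $\nu$), which changes nothing since it is still exponentially small; and the Lagrange remainder is not applicable at $k=0$ where $\phi''$ blows up, but you correctly evaluated that term directly as $\nu$.
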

For reasons of space we omit the proof.

\begin{corollary}
\label{loseifN}
If the conditions $(\ref{eq:lambdaleqemu})$ and $N_L$ are satisfied with $L:=4\beta\mu+1$ where $\beta$ is the constant from Theorem~\ref{boundedTopLevel} and if $n$ is large enough, then there is a probability of at least $\frac{e^{-S}}{3}$ that the top level decreases after $L$ generations.
\end{corollary}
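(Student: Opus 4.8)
The plan is to combine the constant drift of $h$ from Theorem~\ref{boundedTopLevel} with the additive drift theorem (upper-bound version) to show that the sub-population on the top level dies out within $L = 4\beta\mu+1$ generations with constant probability, conditioned on $N_L$. First I would set up the process: condition on the event $N_L$, so that Lemma~\ref{lem:PrifN_l} applies in each of the $L$ generations and, crucially, no individual ever climbs above $f_{\mathrm{top}}$; hence the top level can only lose individuals or stay, and once $X_t = 0$ the top level has strictly decreased (since by condition~2 of $N_L$ every mutation of a top individual that keeps fitness $f_{\mathrm{top}}$ is the identity copy, and by condition~1 no lower individual reaches it — so $X_t=0$ at the end of a generation is preserved and the best fitness drops). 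The goal event ``the top level decreases after $L$ generations'' is therefore implied by ``$X_\tau = 0$ for some $\tau \le L$''.

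Next I would run a two-phase argument on $X_t$. As long as $X_t \ge S$, Theorem~\ref{boundedTopLevel} gives $E[h(P_t) - h(P_{t+1}) \mid P_t] \ge 1/\beta$. Since $h(P_t) = X_t(\ln\mu - \ln X_t + 2) \le \mu \cdot 2$ when $X_t$ is at its maximum... more carefully, $h$ is maximized over $X_t \in [1,\mu]$ at $X_t = \mu$ giving $h = 2\mu$, so $h(P_0) \le 2\mu$. Stopping $h$ at the first time $\sigma$ when $X_\sigma < S$ (equivalently when the drift bound may fail), the additive drift theorem yields $E[\sigma] \le \beta \, h(P_0) \le 2\beta\mu < L$. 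By Markov's inequality, $\Pr[\sigma \ge L] \le E[\sigma]/L \le 2\beta\mu/(4\beta\mu+1) < 1/2$, so with probability more than $1/2$ we reach a generation with $X_t < S$, i.e. $X_t \le S-1$, within $L$ generations. Then, in the very next generation, conditioned on $N_1$ (which holds under $N_L$), Lemma~\ref{lem:PrifN_l} gives $X_{t+1} \sim \min\{\Bin(\lambda, p_n), \mu\}$ with $\lambda p_n \le X_t \le S-1$; the probability that a $\Bin(\lambda,p_n)$ variable equals $0$ is $(1-p_n)^\lambda \ge e^{-\lambda p_n/(1-p_n)}$, and since $\lambda p_n \to X_t \le S-1$ and $p_n \to 0$, this is at least $e^{-(S-1)}(1-o(1)) \ge \tfrac{1}{2}e^{-(S-1)}$ for $n$ large. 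Hence $\Pr[X_{t+1}=0] \ge \tfrac12 e^{-(S-1)} \ge e^{-S}$ for $n$ large enough; combining with the probability $>1/2$ of reaching the low-$X$ phase in time (and leaving one spare generation, which is why $L = 4\beta\mu+1$ rather than $4\beta\mu$), the overall probability that the top level decreases within $L$ generations is at least $\tfrac12 \cdot \tfrac{2}{3} e^{-S} = \tfrac{e^{-S}}{3}$, where the slack factor $2/3$ absorbs the lower-order terms and the exact constants.

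The main obstacle I anticipate is the bookkeeping at the boundary between the two phases: the additive drift theorem only controls the time to reach $X_t < S$, but it does not by itself control the value of $X_t$ at that moment (it could be as large as $S-1$), nor does it guarantee that $X_t$ stays small — indeed under $N_L$ it could temporarily rise again toward $S$. The clean fix is to not wait for $X_t = 0$ directly, but to use the observation above: once $X_t \le S-1$ at the end of some generation $\tau < L$, the single subsequent generation $\tau+1 \le L$ has probability $\ge e^{-S}$ (for large $n$) of killing the top level outright, independently of the history given $P_\tau$ and $N_{\tau+1}$. One must be slightly careful that this "one more generation" fits inside the budget $L$, which it does precisely because $L = 4\beta\mu + 1$ and $E[\sigma] \le 2\beta\mu$ leaves comfortable room; and that the constant $\beta \le \frac{24e}{e-2}$ from Theorem~\ref{boundedTopLevel} is what makes $2\beta\mu/L < 1/2$ hold. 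A minor additional point is verifying $h(P_0) \le 2\mu$ and that $h \ge 0$ on the relevant state space (both immediate from $1 \le X_t \le \mu$ and $\ln\mu - \ln X_t + 2 \ge 2 > 0$), so that the additive drift theorem applies without sign issues.
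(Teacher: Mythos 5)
Your proposal is correct and follows essentially the same route as the paper: truncate $h$ at $X_t < S$, apply the additive drift theorem with $h(P_0)\le 2\mu$ to get expected hitting time $\le 2\beta\mu$, use Markov's inequality for a probability of at least $\tfrac12$ of reaching $X_t<S$ within $4\beta\mu$ generations, and then spend the one remaining generation of the budget $L=4\beta\mu+1$ on the event that all $\le S-1$ top-level individuals fail to reproduce, which via Lemma~\ref{lem:PrifN_l} has probability at least $\tfrac23 e^{-S}$ for large $n$, giving the product $\tfrac{e^{-S}}{3}$. The only cosmetic difference is your intermediate estimate $(1-p_n)^\lambda \ge e^{-\lambda p_n/(1-p_n)}$ versus the paper's direct limit $(1-\tfrac{S}{\lambda})^\lambda \to e^{-S}$; both yield the same constant.
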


\begin{proof}
In this proof, every probability and every expectancy are to be understood conditionally on $N_L$. Let $S$ be the constant from Theorem~\ref{boundedTopLevel} and $h':=h\mathds{1}_{X_t\geq S}$.
By Theorem~\ref{boundedTopLevel}, there is a constant drift $\frac{1}{\beta}>0$ such that, for all $s>0$,
\[E[h'(P_t)-h'(P_{t+1})\mid h'(P_t)=s]\geq \frac{1}{\beta}.\]
Therefore, the additive drift theorem shows that the expected number of generations before $h'=0$ (i.e. $X_t<S$) is at most $\beta E[h'(P_0)]\leq2\beta\mu$.

Consequently, by Markov's inequality there is a probability of at least $\frac{1}{2}$ that $X_t$ drops below $S$ after at most $4\beta\mu$ generations. Because of the condition $N_L$, we make sure that $X_t<S$ after that many generations with a probability of at least $\frac{1}{2}$. Then, by Lemma~\ref{lem:PrifN_l} there is a probability of
\[\left(1-p_n\right)^\lambda\geq \left(1-\frac{S}{\lambda}\right)^\lambda\]
that each of the individuals of the top level is lost in the next generation. Besides, because $S$ is a constant and because of condition $(\ref{eq:lambdaleqemu})$, $\lim\limits_{n\rightarrow\infty}\left(1-\frac{S}{\mu e}\right)^\lambda=e^{-S}$ so that, when $n$ is large enough,
\[\left(1-\frac{S}{\lambda}\right)^\lambda\geq \frac{2}{3}e^{-S}.\]
Overall, if $N_L$ holds there is a probability of at least $\frac{e^{-S}}{3}$ that the top level decreases after $L$ generations.
\end{proof}

From Lemma~\ref{lem:Pr(N_L)} and Corollary~\ref{loseifN} we deduce the following.

\begin{corollary}
\label{cor:PloseAfterL}
Recall $L=4\beta\mu+1$ where $\beta$ is the constant from Theorem~\ref{boundedTopLevel}. Suppose that the conditions $(\ref{eq:lambdaleqemu})$ are met and let $D=n^c$. Then, if $f_{\mathrm{top}}\geq n-D+1$ and if $n$ is large enough, the probability to lose the top level after $L$ generations is at least $\frac{e^{-S}}{4}.$
\end{corollary}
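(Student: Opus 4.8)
The plan is to combine Lemma~\ref{lem:Pr(N_L)} and Corollary~\ref{loseifN} by the law of total probability, and then to check that with the value $L = 4\beta\mu + 1$ the failure probability of the event $N_L$ is negligible under the assumption $\mu \le n^{1/2-c}$.

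Concretely, let $E$ denote the event that the top level decreases during the $L$ generations following the current one. Corollary~\ref{loseifN} states that, conditionally on $N_L$ (and given that the conditions~\eqref{eq:lambdaleqemu} hold and $n$ is large enough), we have $\Pr(E \mid N_L) \ge \frac{e^{-S}}{3}$. Hence $\Pr(E) \ge \Pr(E \mid N_L)\,\Pr(N_L) \ge \frac{e^{-S}}{3}\,\Pr(N_L)$, and it remains to lower-bound $\Pr(N_L)$. Since by hypothesis $f_{\mathrm{top}} \ge n - D + 1$ with $D = n^c$, Lemma~\ref{lem:Pr(N_L)} applies and gives $\Pr(N_L) \ge 1 - \frac{eL}{\sqrt n}$ once $n$ is large enough.

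Finally I would substitute $L = 4\beta\mu + 1$ and use the assumption $\mu \le n^{1/2-c}$ from~\eqref{eq:lambdaleqemu}, together with the fact that $\beta \le \frac{24e}{e-2}$ is a constant: this yields $\frac{eL}{\sqrt n} \le \frac{e(4\beta n^{1/2-c} + 1)}{\sqrt n} = O(n^{-c})$, which tends to $0$ as $n \to \infty$. Thus for $n$ large enough $\Pr(N_L) \ge \frac34$, and therefore $\Pr(E) \ge \frac{e^{-S}}{3}\cdot\frac34 = \frac{e^{-S}}{4}$, as claimed. There is no genuine obstacle in this step; the only thing to be careful about is that the value of $L$ fed into Lemma~\ref{lem:Pr(N_L)} is exactly the one used in Corollary~\ref{loseifN}, and that this $L$ is $o(\sqrt n)$ so that the error term from $N_L$ does not swamp the constant lower bound — both of which are immediate from $\mu \le n^{1/2-c}$ and $\beta$ being a constant.
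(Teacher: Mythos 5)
Your proposal is correct and follows essentially the same route as the paper: both invoke Lemma~\ref{lem:Pr(N_L)} with $L=4\beta\mu+1$ and $\mu\le n^{1/2-c}$ to get $\Pr(N_L)\ge\frac34$ for large $n$, and then multiply by the conditional bound $\frac{e^{-S}}{3}$ from Corollary~\ref{loseifN}. Nothing is missing.
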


\begin{proof}
By Lemma~\ref{lem:Pr(N_L)} we have
\[\Pr(N_L)\geq1-\frac{eL}{\sqrt{n}}=1-\frac{4e\beta\mu+e}{\sqrt{n}}\geq 1-\frac{4e\beta}{n^c}-\frac{e}{\sqrt{n}}.\]
Consequently, if $n$ is large enough, $\Pr(N_L)\geq\frac{3}{4}$. Finally we conclude by Corollary~\ref{loseifN}.
\end{proof}

Now we note $Y_t:=\min\{n-f_{\mathrm{top}}(t),n^c\}$. It represents the distance between the best individual and the optimum. We define

$\phi(0)=0$ and, for all $t\in\N$,
\[
\phi(t+1)=\left\{
    \begin{array}{ll}
        \phi(t)+L & \mbox{if } Y_{\phi(t)+L}\geq Y_{\phi(t)}, \\
        \inf\{y\geq \phi(t)\mid Y_y<Y_{\phi(t)}\} & \mbox{otherwise},
    \end{array}
\right.
\]
and finally, $Z_t:=Y_{\Phi(t)}$. In other words, we divide the process into phases of variable lengths so that if the top level does not increase during the $L$ next generations, the phase length is $L$ generations. Otherwise, the phase is stopped as soon as the top level exceeds its value at the beginning of the phase. This way, during phase $t$ (between $Z_t$ and $Z_{t+1}$), there can only be one generation after which the top level goes from some $f_1\leq f_{\mathrm{top}}(\phi(t))$ to some $f_2>f_{\mathrm{top}}(\phi(t))$. Here, $\phi(t)$ is the generation when phase $t$ begins and $Z_t$ is the value of $Y$ at the beginning of phase $t$.

In order to use the negative drift theorem on $Z_t$, let $b(n)=n^c$ and $a(n)=0$. First, we have the following lemma.

\begin{lemma}
\label{kleap}
Suppose that the conditions $(\ref{eq:lambdaleqemu})$ are met. Then, for all $k\geq1$,
\[\Pr(Z_t-Z_{t+1}=k\mid Z_t<b(n))\leq L\lambda\frac{n^{-k(1-c)}}{k!}.\]
\end{lemma}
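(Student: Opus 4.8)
The plan is to bound the probability of a downward jump of size $k$ in $Z$ during a single phase. First I would recall the structure of a phase: phase $t$ lasts at most $L$ generations, and within a phase the top level decreases at most once (at which moment the value of $Y$ can drop). So a drop $Z_t - Z_{t+1} = k$ means that, in one of the $\le L$ generations of the phase, the top level increased by at least $k$ (equivalently, there is a surviving offspring of fitness at least $f_{\mathrm{top}}+k$). By a union bound over the $\le L$ generations and the $\le \lambda$ offspring produced in each generation, it suffices to show that a single offspring has probability at most $n^{-k(1-c)}/k!$ of improving the current top level by $k$ or more.

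For that single-offspring estimate I would use Lemma~\ref{lem:mutlaw}. Conditioning on $Z_t < b(n) = n^c$, the parent $x$ whose mutation we examine satisfies $d(x) = n - f(x) \ge n - f_{\mathrm{top}}$, but more importantly we are only interested in offspring that land strictly above $f_{\mathrm{top}}$, and since $Y_t = \min\{n - f_{\mathrm{top}}, n^c\} < n^c$ we have $n - f_{\mathrm{top}} < n^c$, hence $d(x) < n^c + (\text{at most }1) = O(n^c)$ for parents in the top two levels (which, by Lemma~\ref{stochastic}, is all we need to consider, as in the proof of Lemma~\ref{lem:Pr(N_L)}). Then $\Pr(\delta_x \ge k) \le \sum_{j \ge k}\binom{d}{j}n^{-j} \le \binom{d}{k}n^{-k} \cdot (1 + o(1)) \le \frac{d^k}{k!}n^{-k}(1+o(1)) \le \frac{1}{k!}n^{ck}n^{-k}(1+o(1)) = \frac{n^{-k(1-c)}}{k!}(1+o(1))$. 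Folding the lower-order factor in (and noting $c < \tfrac12 < 1$ so the geometric tail sum is dominated by its first term), we get the bound $\frac{n^{-k(1-c)}}{k!}$ for one offspring; multiplying by $L\lambda$ from the union bound gives the claim.

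The main obstacle is being careful about what "the top level increases by $k$" means in terms of the mutation of a specific parent, and ensuring the relevant parent has $d = O(n^c)$ so that Lemma~\ref{lem:mutlaw} gives the stated power of $n$. The subtlety is that an improving offspring could in principle come from a parent that is \emph{not} in the top level (a large lucky jump from below), but since we condition on $Z_t < n^c$ the whole population sits within distance $n^c$ of the optimum (up to the usual two-level reduction via Lemma~\ref{stochastic}), so every parent has $d = O(n^c)$ and the estimate is uniform over the choice of parent. A second minor point is handling the case $Y$ is capped at $n^c$: if the true distance exceeds $n^c$ this is not the regime of the lemma's hypothesis $Z_t < b(n)$, so the conditioning takes care of it. Finally I would note that the factor $\frac1{k!}$ (rather than just $1$) is what makes the later application of the negative drift theorem go through, and it comes for free from $\binom{d}{k} \le d^k/k!$.
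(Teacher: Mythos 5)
Your argument is correct and follows essentially the same route as the paper's proof: a union bound over the at most $L$ generations of a phase and the $\lambda$ offspring per generation (using that the top level can exceed its phase-start value at most once per phase by construction), combined with the single-mutation estimate from Lemma~\ref{lem:mutlaw} using $d \le n^c$. The only cosmetic difference is that the paper bounds $\Pr(\delta_x = r+k) \le \binom{d+r}{r+k}n^{-(r+k)} \le \binom{d}{k}n^{-k}$ directly for a parent sitting $r$ levels below the top, whereas you reduce to top-level parents via domination and sum a geometric tail, picking up a harmless $(1+o(1))$ factor that the slack in $\binom{d}{k} \le d^k/k! \le n^{ck}/k!$ (together with $c<\tfrac12$) absorbs.
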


\begin{proof}
Let $x\in P_{\phi(t)}$. Suppose that $Z_t<b(n)$ so that $f_{\mathrm{top}}=f_{\mathrm{top}}(\phi(t))\geq n-n^c$. Let $r\geq0$ such that $f(x) = f_{\mathrm{top}}-r$ and let $d=n-f_{\mathrm{top}}\leq n^c$. According to Lemma~\ref{lem:mutlaw},
\begin{align*}
    \Pr(f(\mathcal{M}x)=f_{\mathrm{top}}+k)&=\Pr(\delta_x=r+k)
    \leq \dbinom{d+r}{r+k}\frac{1}{n^{r+k}}\\
    &=\dbinom{d}{k}\frac{1}{n^k}\frac{(d+r)(d+r-1)\cdots(d+1)}{n^r(k+r)\cdots(k+1)}\\
    &\leq \dbinom{d}{k}\frac{1}{n^k}\leq \frac{1}{k!}\left(\frac{d}{n}\right)^k\leq \frac{n^{-k(1-c)}}{k!}.
    \end{align*}
Consequently, due to Bernoulli's inequality, if $f_{\mathrm{top}}\geq n-n^c$ the probability to leap from any top level $f\leq f_{\mathrm{top}}$ to top level $f_{\mathrm{top}}+k$ in one generation is at most
\[\lambda\frac{n^{-k(1-c)}}{k!}.\]
Now, due to the definition of $Z$, if $Z_t-Z_{t+1}=k$ then there is one generation in the $L$ generations where the jump takes place, so that
\[\Pr(Z_t-Z_{t+1}=k\mid Z_t<b(n))\leq L\lambda\frac{n^{-k(1-c)}}{k!}.\]
\end{proof}

Thanks to Lemma~\ref{kleap} we deduce that the conditions of the negative drift theorem hold.

\begin{lemma}
\label{exponentialMoment}
Suppose that the conditions $(\ref{eq:lambdaleqemu})$ are met. Let $S$ and $\beta$ be the constants from Theorem~\ref{boundedTopLevel}, let $L=4\beta\mu+1$  and $\Lambda:=c\ln n-S-\ln(40\beta)-1$. Then, if $n$ is large enough
\[E[e^{\Lambda(Z_t-Z_{t+1})}\mid a(n)<Z_t<b(n)]\leq 1-\frac{e^{-S}}{12}.\]
\end{lemma}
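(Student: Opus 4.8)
The plan is to bound the exponential moment $E[e^{\Lambda(Z_t-Z_{t+1})}\mid a(n)<Z_t<b(n)]$ by splitting the expectation according to the value of $Z_t - Z_{t+1}$. Recall $Z_{t+1}$ is the value of $Y$ at the end of the phase; since $Y$ only moves by jumps of the top level, $Z_t - Z_{t+1}$ is at most the maximal fitness gain achieved in the $L$ generations of the phase (counted from any top level reached during the phase), and at least $-1$ if the top level decreases (because $Y$ is capped at $n^c$ and $Z_t < b(n) = n^c$ means we are below the cap, so losing the top level increases $Y$ by exactly one; here I use the conditioning $Z_t < b(n)$ to rule out the cap interfering). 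So I would write
\[
E[e^{\Lambda(Z_t-Z_{t+1})}\mid a(n)<Z_t<b(n)] = \sum_{k\le -1} e^{\Lambda k}\Pr(Z_t-Z_{t+1}=k\mid\cdots) + \Pr(Z_t-Z_{t+1}=0\mid\cdots) + \sum_{k\ge 1} e^{\Lambda k}\Pr(Z_t-Z_{t+1}=k\mid\cdots).
\]

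First I would handle the positive tail $\sum_{k\ge1} e^{\Lambda k}\Pr(Z_t - Z_{t+1}=k\mid\cdots)$. By Lemma~\ref{kleap}, this probability is at most $L\lambda n^{-k(1-c)}/k!$, so the positive tail is at most $L\lambda\sum_{k\ge1}\frac{(e^{\Lambda}n^{-(1-c)})^k}{k!} = L\lambda(e^{e^{\Lambda}n^{-(1-c)}}-1)$. Now plug in $\Lambda = c\ln n - S - \ln(40\beta) - 1$, so $e^\Lambda = \frac{n^c}{40\beta e^S e}$ and $e^\Lambda n^{-(1-c)} = \frac{n^{2c-1}}{40\beta e^S e}$. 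Since $c < 1/2$ this quantity is $o(1)$, and using $e^u - 1 \le 2u$ for small $u$, the positive tail is at most $L\lambda \cdot \frac{2n^{2c-1}}{40\beta e^S e} = \frac{(4\beta\mu+1)\lambda n^{2c-1}}{20\beta e^S e}$. With $\lambda \le e\mu$ and $\mu \le n^{1/2-c}$ this is $O(\mu^2 n^{2c-1}/e^S) = O(n^{-2c}/e^S)$, which is $o(e^{-S})$ — so the positive tail contributes a vanishing amount, certainly less than $\frac{e^{-S}}{24}$ for $n$ large.

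Next, the crucial part: the $k\le -1$ terms plus the $k=0$ term. Using $e^{\Lambda k}\le e^{-\Lambda}$ for all $k\le -1$, and that $e^{-\Lambda} = \frac{40\beta e^S e}{n^c} = o(1)$, the negative contribution $\sum_{k\le-1}e^{\Lambda k}\Pr(\cdots)$ is at most $e^{-\Lambda}\cdot 1 = o(1)$ — but I need this to be bounded by a constant fraction of $e^{-S}$, which it is not obviously small enough on its own. The point is to combine it with the $k=0$ term: $\Pr(Z_t - Z_{t+1}=0\mid\cdots) = 1 - \Pr(Z_t-Z_{t+1}\ge 1\mid\cdots) - \Pr(Z_t - Z_{t+1}\le -1\mid\cdots)$. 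Here I invoke Corollary~\ref{cor:PloseAfterL}: conditionally on $a(n) < Z_t < b(n)$ (so $f_{\mathrm{top}} \ge n - n^c + 1$), the probability to lose the top level in the phase is at least $\frac{e^{-S}}{4}$, and losing the top level forces $Z_t - Z_{t+1} \le -1$. Hence $\Pr(Z_t-Z_{t+1}\le -1\mid\cdots) \ge \frac{e^{-S}}{4}$, so the $k=0$ term is at most $1 - \frac{e^{-S}}{4}$. Collecting everything:
\[
E[e^{\Lambda(Z_t-Z_{t+1})}\mid a(n)<Z_t<b(n)] \le \left(1 - \frac{e^{-S}}{4}\right) + e^{-\Lambda}\sum_{k\le -1}\Pr(\cdots) + \frac{e^{-S}}{24} \le 1 - \frac{e^{-S}}{4} + o(1) + \frac{e^{-S}}{24}.
\]
The main obstacle is getting the constant right: I need $e^{-\Lambda}\sum_{k\le-1}\Pr(\cdots) + \frac{e^{-S}}{24} \le \frac{e^{-S}}{4} - \frac{e^{-S}}{12} = \frac{e^{-S}}{6}$, i.e. I need the negative-jump tail, weighted by $e^{-\Lambda} = O(n^{-c})$, to be $o(e^{-S})$. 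Since $\sum_{k\le -1}\Pr(\cdots)\le 1$ trivially, $e^{-\Lambda}\cdot 1 = O(n^{-c})$, but $e^{-S}$ is a constant, so this is indeed $o(1)$ relative to $e^{-S}$ only if $S$ is genuinely a constant independent of $n$ — which it is, by Theorem~\ref{boundedTopLevel}. Thus for $n$ large enough the whole expression is at most $1 - \frac{e^{-S}}{4} + \frac{e^{-S}}{24} + \frac{e^{-S}}{24} = 1 - \frac{e^{-S}}{6} \le 1 - \frac{e^{-S}}{12}$, as claimed. The delicate point to get exactly right in the writeup is the sign convention ($Z$ decreasing means the top level improved, so a ``jump'' $k\ge 1$ of $Z_t - Z_{t+1}$ corresponds to fitness gain; $k\le -1$ corresponds to the top level dropping) and verifying that under $Z_t < b(n)$ the cap $n^c$ never distorts the step distribution.
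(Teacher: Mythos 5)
Your decomposition into $\Sigma_-$, $\Sigma_0$, $\Sigma_+$, the use of Lemma~\ref{kleap} for the positive tail, the use of Corollary~\ref{cor:PloseAfterL} to bound the $k=0$ mass by $1-\frac{e^{-S}}{4}$, and the crude bound $e^{k\Lambda}\le e^{-\Lambda}=o(1)$ for the negative tail are exactly the paper's argument. However, there is a genuine arithmetic gap in your treatment of the positive tail. You claim $L\lambda\cdot e^{\Lambda}n^{-(1-c)}=O(\mu^2 n^{2c-1}/e^S)=O(n^{-2c}/e^S)=o(e^{-S})$, but with $\mu\le n^{1/2-c}$ we have $\mu^2 n^{2c-1}\le n^{1-2c}\cdot n^{2c-1}=n^0=1$, not $n^{-2c}$. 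In the worst case $\mu=\Theta(n^{1/2-c})$ the positive tail is therefore $\Theta(e^{-S})$, not $o(e^{-S})$ — this is precisely why $\Lambda$ is defined with the tuned factor $40\beta$ in it. Concretely, with your bound $e^u-1\le 2u$ you get $\Sigma_+\le \frac{8e\beta n^{1-2c}\cdot n^{2c-1}}{40\beta e^{S}e}=\frac{e^{-S}}{5}$, and then your budget reads $1-\frac{e^{-S}}{4}+\frac{e^{-S}}{5}+\frac{e^{-S}}{24}=1-\frac{e^{-S}}{120}$, which does not imply the claimed $1-\frac{e^{-S}}{12}$.

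The gap is repairable but requires sharper constants: the paper bounds $\Sigma_+\le 5e\beta\mu^2\frac{e^{\Lambda}}{n^{1-c}}\le\frac{5e\beta e^{\Lambda}}{n^c}=\frac{e^{-S}}{8}$ (using $\exp(u)-1\le(1+o(1))u$ rather than $2u$, absorbing the slack into $4e\beta\mapsto 5e\beta$), and then the accounting $\frac14-\frac18-\frac1{24}=\frac1{12}$ closes exactly. You should redo the positive-tail estimate keeping the constant $\frac{1}{40\beta e^{S}e}$ from $e^{\Lambda}$ explicit all the way through, rather than declaring the term asymptotically negligible — it is not.
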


\begin{proof}
To ease the notation, we denote by $\Pr$ and $E$ the conditional probability and expectation conditional on $a(n)<Z_t<b(n)$ respectively. By Lemma~\ref{kleap}, we have
\begin{align*}
    \Sigma_+:=\sum\limits_{k\geq1}e^{k\Lambda}\Pr(Z_t-Z_{t+1}=k)&\leq\sum\limits_{k\geq1}\lambda L\frac{e^{k\Lambda} n^{-k(1-c)}}{k!}\\
    &\leq \lambda L\left(\exp\left(\frac{e^\Lambda}{n^{1-c}}\right)-1\right).
\end{align*}
Note that $\frac{e^\Lambda}{n^{1-c}}=O\left(\frac{1}{n^{1-2c}}\right)$ so because $c<\frac{1}{2}$, if $n$ is large enough, we have
\[\Sigma_+\leq 5e\beta\mu^2 \frac{e^\Lambda}{n^{1-c}}\leq \frac{5e\beta e^\Lambda}{n^c}=\frac{e^{-S}}{8}.\]
Besides, by Corollary~\ref{cor:PloseAfterL},
\[\Sigma_0:=\Pr(Z_t=Z_{t+1})\leq 1-\frac{e^{-S}}{4}.\]
Finally,
\begin{align*}
    \Sigma_-&:=\sum\limits_{k<0}e^{k\Lambda}\Pr(Z_t-Z_{t-1}=k)\\
    &\leq \sum\limits_{k<0}e^{-\Lambda}\Pr(Z_t-Z_{t-1}=k)\leq e^{-\Lambda}.
\end{align*}
If $n$ is large enough, $e^{-\Lambda}\leq\frac{e^{-S}}{24}$ thus
\[E[e^{\Lambda(Z_t-Z_{t+1})}]=\Sigma_-+\Sigma_0+\Sigma_+\leq 1-\frac{e^{-S}}{12}.\]
\end{proof}

Due to this result, we can apply the negative drift theorem and conclude.

\begin{proof}[Proof of Theorem~\ref{half}]
Let $T(n):=\inf\{t\geq0,Z_t=0\}$. Note that the expected runtime of the algorithm is at least $E[T(n)]$. Now, let $B(n):=\exp\left(\frac{\Lambda n^c}{2}\right)$. According to Lemma~\ref{exponentialMoment} and by the negative drift theorem, we have
\[\Pr(T(n)\leq B(n)\mid Z_0\geq n^c)\leq 12e^S\exp\left(-\frac{1}{2}\Lambda n^c\right).\]
As $\Lambda=\Theta(\ln n)$, the expected runtime is super-polynomial.
\end{proof}

\section{Polynomial Runtime on the Threshold for the Large Population Sizes}
\label{sec:poly-large-mu}

In this section we reveal a tighter threshold for the parent and offspring population sizes of the \ea that guarantees a polynomial runtime for the optimization of \onemax. We consider $\lambda \geq e\mu$ and $\mu = \omega(n^{\frac 23}\log^4(n))$.
Such relatively large values of $\mu$ give us a high concentration of several random variables such as the number of the individuals on the top level. This concentration turns out to be enough for even small drifts to play a significant role.

In this subsection we define level $i$ as the set of all bit strings of length $n$ with exactly $i$ one-bits. We denote by $X_t(f)$ the number of individuals in $P_t$ of fitness exactly $f$ and by $Y_t(f)$ the number of individuals in $P_t$ that have a fitness strictly greater than $f$.

We say that the \emph{current level} is $f$ at generation $t$ if there exists $t_0 < t$ such that $X_{t_0}(f) + Y_{t_0}(f) \ge \frac{\mu}{2}$ and for all $\tau \in [t_0..t]$ we have $X_{\tau}(f) + Y_{\tau}(f) \ge \frac{\mu}{4}$ and $Y_{\tau}(f) < \frac{\mu}{2}$.
In other words, it is the lowest fitness level such that once there were at least $\frac{\mu}{2}$ individuals on this level or above, and since then this number of individuals has not fallen below $\frac{\mu}{4}$.

The current level $f$ can change in one of the two events. (i) There are less than $\frac{\mu}{4}$ individuals with fitness at least $f$ in the population (then we say that \emph{the algorithm loses a level}) or (ii) there are at least $\frac{\mu}{2}$ individuals with fitness more than $f$ in the population (then we say that \emph{the algorithm gains a level}).

For brevity we define $X_t \coloneqq X_t(f)$ and $Y_t \coloneqq Y_t(f)$, if the current level is $f$. The main result of this section is the following theorem.

\begin{theorem}\label{thm:poly-large-mu}
  If $\lambda \geq e\mu$ and $\mu \ge n^{\frac 23}\ln^4(n)$ and $\frac{\lambda}{\mu}$ is at most polynomial in $n$
  then the expected number of generations of the \ea on the \onemax function is at most $O(n\log(n))$.
\end{theorem}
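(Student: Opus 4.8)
The plan is to track the evolution of the "current level" $f$ defined above and show that in an expected $O(n)$ generations, either the algorithm gains a level or the optimum is found; since there are only $n$ levels, summing gives $O(n\log n)$ generations overall (the $\log n$ factor absorbing union bounds / failure probabilities over the $n$ levels and the polynomial bounds on $\mu$, $\lambda$). The heart of the argument is a coupled analysis of the pair $(X_t, Y_t)$ — the number of individuals exactly on the current level $f$ and strictly above it — while the current level stays fixed. First I would record that, as long as the current level is $f$ and $X_t + Y_t \ge \frac\mu4$, a uniformly random parent lies on level $\ge f$ with probability $\ge \frac14$, and conditioned on being exactly on level $f$ it produces an offspring of fitness $\ge f$ with probability $\ge \frac1e(1-o(1))$ (using Lemma~\ref{lem:Pdelta_x=0} and the fact that $d = n - f$ is not too small only matters for the climbers, which I handle separately; near the optimum $\Pr(\delta_x = 0) \to \frac1e$). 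Hence $X_{t+1} + Y_{t+1}$ stochastically dominates $\min\{\mu, \Bin(\lambda, \frac{X_t + Y_t}{e\mu})\}$ up to lower-order corrections, which by $\lambda \ge e\mu$ is a process with essentially no downward drift when $X_t + Y_t$ is $\Theta(\mu)$.

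The two key sub-claims I would isolate are: (a) \emph{the level $f$ is not lost too quickly}, and (b) \emph{a climber appears and stabilizes before the level is lost}. For (a) I would apply Lemma~\ref{lem:variation} to $X_t + Y_t$: starting from $\ge \frac\mu2$, the probability that it ever drops below $\frac\mu4$ within $t$ generations is $O(t\mu / (\mu/4)^2) = O(t/\mu)$, so for $t = O(n)$ and $\mu = \omega(n^{2/3}\log^4 n) = \omega(n)$ this is $o(1)$. For (b), I would argue that from a sub-population of size $\Theta(\mu)$ on level $\ge f$, in each generation the expected number of offspring on level $\ge f+1$ is $\Omega(\frac{d}{n}\cdot \mu) = \Omega(\mu/n)$ when $d = n - f = \Omega(1)$ (and when $d$ is small, i.e.\ $f$ close to $n$, we are already near the optimum and a direct argument finishes). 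Once a few such climbers exist, the sub-population $Y_t$ on levels $> f$ itself receives a positive drift: the stabilizing mechanism is that the level-$f$ sub-population, being large and near-critical, keeps feeding level $f+1$ via the $\Omega(d/n)$ upward mutation probability, so $Y_t$ behaves like $\min\{\mu, \Bin(\lambda, \frac{Y_t + \Delta_t}{e\mu})\}$ with $\Delta_t = \Omega(\mu d / n)$ bounded below by $\Delta_{\min} = \Omega(\mu/n)$. Lemma~\ref{lem:dirty-trick} then gives that $Y_t$ reaches $\frac\mu2$ — i.e.\ the algorithm gains the level — in expected time $O(X'/\Delta_{\min}) = O(\mu \cdot n/\mu) = O(n)$, provided $X' = \frac\mu2 \ge \max\{18\ln\frac{2\lambda}{\Delta_{\min}}, 48\}$, which holds since $\mu$ is polynomially large and $\Delta_{\min}$ polynomially small.

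Assembling these: condition on the event (probability $1 - o(1)$ by (a) and Lemma~\ref{lem:variation}) that the level-$f$ sub-population stays $\ge \frac\mu4$ for $\Theta(n)$ generations; on this event Lemma~\ref{lem:dirty-trick} applied to $Y_t$ forces a level gain in expected $O(n)$ generations; if the conditioning event fails we simply restart the analysis (the $X_t + Y_t \ge \frac\mu2$ threshold re-triggers once enough individuals climb, and losing a level costs only resetting to a lower $f$, which can happen at most $n$ times net "for free" by a standard amortization since the level is non-decreasing in the long run). A careful accounting shows the total expected number of generations is $O(n)$ per level gained and $O(n)$ levels, but since setbacks (losing a level) can occur, one pays an extra $O(\log n)$ factor in the tail bounds to control the number of setbacks, giving $O(n\log n)$ generations. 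The main obstacle I anticipate is making the "stabilization" rigorous — precisely, showing that the drift $\Delta_t$ fed into $Y_t$ is genuinely bounded below by $\Delta_{\min}$ \emph{throughout} the relevant time window and not just in expectation at a single step. This requires simultaneously controlling $X_t$ (so that there are enough level-$f$ parents to generate climbers) and $Y_t$ in a coupled fashion, and handling the boundary case where $d = n - f$ is so small that the $\Omega(d/n)$ upward-mutation rate is weaker than the generic bound — there one needs a separate, more direct endgame argument showing the optimum is hit in $O(n\log n)$ generations once $f \ge n - O(\log n)$.
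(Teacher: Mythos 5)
Your overall architecture (tracking the current level, the pair $(X_t,Y_t)$, and invoking Lemmas~\ref{lem:variation} and~\ref{lem:dirty-trick}) matches the paper's, but there are two genuine gaps. The first is in your claim (a): you write $\mu = \omega(n^{2/3}\log^4 n) = \omega(n)$, but the hypothesis $\mu \ge n^{2/3}\ln^4(n)$ does not imply $\mu = \omega(n)$ --- the interesting regime is precisely $\mu = o(n)$. There the naive application of Lemma~\ref{lem:variation} over a window of $t = \Theta(n)$ generations gives a failure bound of order $t/\mu = n^{1/3}/\ln^4(n) \to \infty$, so an unbiased random walk on the level-$f$ sub-population \emph{would} plausibly die out before the level is gained, and your argument collapses exactly where the theorem is non-trivial. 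The paper's Lemma~\ref{lem:large-mu-stay-forever} closes this hole with a cycle argument whose key ingredient is the \emph{downward} feedback that your proposal omits: once $Y_t = \omega(\sqrt{\mu})$ climbers sit above level $f$, each generation produces an influx of order $Y_t$ offspring back onto level $f$ (an upper-level parent drops exactly one level with probability about $\tfrac1e$), and this influx dominates the $O(\sqrt{\mu})$ random-walk steps of $X_t$ and refills level $f$ to $\tfrac\mu2$ before $Y_t$ itself collapses. You describe only the upward feeding of $Y_t$ by $X_t$; that explains gaining a level, not retaining it, and retaining it is the hard part for $\mu = o(n)$.

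The second gap is the accounting. Your per-level bound is $O(n)$ (you take $\Delta_{\min}=\Omega(\mu/n)$), and $O(n)$ per level over $\Theta(n)$ levels gives $O(n^2)$; attributing the reduction to $O(n\log n)$ to ``union bounds over levels'' or ``controlling setbacks'' does not work. The paper instead keeps the factor $n-f$: while $X_t\ge\tfrac\mu4$ one has $\Delta_{\min}\ge\frac{(n-f)\mu}{4n}$, so Lemma~\ref{lem:dirty-trick} gives an expected $\frac{8n}{n-f}$ generations per level (Lemma~\ref{lem:large-mu-phase-2}), and the harmonic sum $\sum_{f>n/3}\frac{8n}{n-f}=O(n\log n)$ is the true source of the $\log n$. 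Level losses are then handled crudely: each gain-or-lose attempt ends in a loss with probability $O(1/n)$, so a full pessimistic restart occurs an expected $O(1)$ times; no amortization over a ``non-decreasing level'' and no separate endgame for $f\ge n-O(\log n)$ are needed once the drift is kept as $\Omega((n-f)\mu/n)$.
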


To prove Theorem~\ref{thm:poly-large-mu}, we split the runtime into two phases. On the first phase, the current level is at most $\frac{n}{3}$.
On the second phase the the current level $f$ is greater than $\frac {n}{3}$. This allows us to show that the algorithm gains a level in expected number of  $O(\frac{n}{n - f})$ generations. At the same time there is only a small probability that the algorithm looses a level before it gains one. We first analyze these two phases separately and then we come up with the proof of Theorem~\ref{thm:poly-large-mu}.

\begin{lemma}\label{lem:large-mu-phase-1}
  The expected number of generations that the \ea spends on first phase is $O(n)$.
\end{lemma}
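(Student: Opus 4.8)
\textbf{Proof proposal for Lemma~\ref{lem:large-mu-phase-1}.}

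The plan is to show that while the current level $f$ is at most $\frac n3$, the sub-population on the highest fitness level enjoys a genuine multiplicative drift, so that the number of top individuals quickly reaches $\frac\mu2$ and a new level is gained. Concretely, fix a generation $t$ with current level $f \le \frac n3$ and let $Z_t := X_t + Y_t$ be the number of individuals of fitness at least $f$. By definition of the current level we have $Z_t \ge \frac\mu4$. Each of the $\lambda$ offspring is created from a uniformly chosen parent; by Lemma~\ref{stochastic} (stochastic domination) it suffices to pessimistically assume every parent of fitness $\ge f$ is \emph{exactly} on level $f$, and then the probability that a given offspring has fitness $\ge f$ is at least $\frac{Z_t}{\mu}\Pr(\delta_x \ge 0)$. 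Since $f \le \frac n3$, the parent has at least $\frac{2n}{3}$ zero-bits, and a short computation with Lemma~\ref{lem:Pdelta_x=0} (or just the observation that flipping a single zero-bit and nothing else already gives an improvement with probability $\frac{2n/3}{n}(1-\frac1n)^{n-1} \ge \frac{2}{3e}$) shows $\Pr(\delta_x \ge 0) \ge \frac1e + \Omega(1) \ge \frac{1+\eps_0}{e}$ for an absolute constant $\eps_0 > 0$. Hence $E[Z_{t+1}] \ge \min\{\mu,\ \frac{\lambda}{e\mu}(1+\eps_0) Z_t\} \ge \min\{\mu, (1+\eps_0) Z_t\}$ using $\lambda \ge e\mu$.

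Next I would turn this expected multiplicative growth into a bound on the number of generations until $Z_\tau \ge \frac\mu2$ (which triggers a level gain). The clean way is to invoke Lemma~\ref{lem:dirty-trick}: here the ``drift'' term $\Delta_t$ can be taken to be $\eps_0 Z_t \ge \eps_0 \cdot \frac\mu4$ as long as $Z_t \ge \frac\mu4$, and because $\mu = \omega(n^{2/3}\log^4 n)$ is polynomially large, $\Delta_{\min} = \frac{\eps_0 \mu}{4}$ easily exceeds the $\max\{18\ln\frac{2\lambda}{\Delta_{\min}},48\}$ threshold and the target value $X' = \frac\mu2$ is of the allowed form. Lemma~\ref{lem:dirty-trick} then gives that $Z$ reaches $\frac\mu2$ in expected $O(1)$ generations once $Z \ge \frac\mu4$. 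Actually even simpler: multiplicative drift from $\frac\mu4$ to $\frac\mu2$ takes $O(\log 2) = O(1)$ expected generations, but one must be slightly careful that $Z$ might temporarily dip — this is where a statement like Lemma~\ref{lem:variation} or the structure of Lemma~\ref{lem:dirty-trick}, which tolerates the $\min\{\mu,\cdot\}$ cap and only needs a lower bound on the additive drift, is convenient. Either way, a level gain happens in expected $O(1)$ generations.

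Finally I would sum over levels. Each level gain moves the current level strictly up, and there are at most $n/3$ levels below $\frac n3$ to traverse, each costing $O(1)$ expected generations; in addition one must account for the very first ``entry'' into the phase where $Z_0$ could be as small as a constant — but by the same drift argument it reaches $\frac\mu4$ (and then $\frac\mu2$) in $O(\log\mu) = O(\log n)$ generations, which is negligible against the $O(n)$ budget and happens only once. A union bound / Wald-type argument over the (at most) $n/3$ phases, each with $O(1)$ expected length, yields a total of $O(n)$ expected generations, as claimed.

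\textbf{Main obstacle.} The delicate point is controlling the possibility that the top sub-population shrinks rather than grows during the short window before a level is gained: the process $Z_t$ is only a super-multiplicative submartingale in expectation, with fluctuations of order $\sqrt{Z_t}$ per step, and the current level is \emph{lost} (not merely unchanged) if $Z_t$ ever drops below $\frac\mu4$. Making rigorous that, starting from $Z_t \ge \frac\mu4$ with $+\Omega(1)$ multiplicative drift and $\mu$ polynomially large, the walk hits $\frac\mu2$ before dropping to $\frac\mu4$ with constant probability — and hence in $O(1)$ expected generations after $O(1)$ restarts — is the technical heart; this is exactly what Lemma~\ref{lem:dirty-trick} (which is robust to the $\min\{\mu,\cdot\}$ truncation and needs only an additive drift lower bound $\Delta_{\min}$, here $\Theta(\mu)$) is designed to handle, so I would lean on it rather than redo a bespoke martingale concentration argument.
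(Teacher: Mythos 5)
Your high-level plan matches the paper's (for $f\le \frac n3$ a level is gained in $O(1)$ expected generations because a level-$f$ parent produces a strictly better offspring with probability at least $\frac{2}{3e}$; summing over at most $\frac n3$ levels and accounting for rare restarts gives $O(n)$), but there is a genuine gap in how you formalize the per-level step: you apply the drift argument to the wrong quantity. You track $Z_t=X_t+Y_t$, the number of individuals of fitness \emph{at least} $f$, and assert that $Z_\tau\ge\frac\mu2$ triggers a level gain. By the paper's definition it does not: a level gain requires $Y_\tau\ge\frac\mu2$, i.e.\ at least $\frac\mu2$ individuals of fitness \emph{strictly greater} than $f$, and moreover $X_{t_0}+Y_{t_0}\ge\frac\mu2$ already holds at the moment $f$ becomes the current level, so your multiplicative-drift computation on $Z$ establishes something that is true at time zero and yields no progress toward gaining a level. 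The $\Pr(\delta_x\ge 0)\ge\frac{1+\eps_0}{e}$ estimate only shows that offspring stay at or above $f$ more often than $\frac1e$ of the time; it says nothing about how fast the strictly-better sub-population $Y_t$ fills up.

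Redirecting the argument to $Y_t$ is not a one-line fix either: the natural domination is $Y_{t+1}\succeq\min\{\mu,\Bin(\lambda,\frac{Y_t}{e\mu}+\frac{2X_t}{3e\mu})\}$, so the additive drift term is $\Delta_t=\frac{2X_t}{3}$, and the current-level definition only guarantees $X_t+Y_t\ge\frac\mu4$, not $X_t=\Theta(\mu)$; one has to argue separately about the case where $X_t$ is small because $Y_t$ is already large. The paper sidesteps all of this with an explicit two-generation computation: starting from $X_0\ge\frac\mu2$, Chernoff bounds give $Y_1\gtrsim\frac\mu3$ and $X_1\gtrsim X_0$ with probability $1-e^{-\Omega(\mu^{1/3})}$, and then $Y_2\succeq\min\{\mu,\Bin(\lambda,\frac{2X_1}{3e\mu}+\frac{Y_1}{e\mu})\}$ exceeds $\frac\mu2$ with the same kind of concentration, so each level is gained within two generations except with probability $3e^{-\mu^{1/3}/6}$; a union bound over the $\frac{2n}{3}$ generations and a restart argument then give the $O(n)$ bound. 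Your use of Lemma~\ref{lem:dirty-trick} and Lemma~\ref{lem:variation} is the right toolkit for the \emph{second} phase (where the improvement probability degrades to $\Theta(\frac{n-f}{n})$), but for the first phase you need to aim the concentration/drift machinery at $Y_t$ rather than at $X_t+Y_t$.
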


We omit the proof for reasons of space here.

To analyze the expected runtime of the second phase, we regard in details the behaviour of the algorithm on the current level $f$. The algorithm is not likely to lose the current level in a polynomial time. At the same time, it creates enough offspring with fitness at least $f + 1$ to fill the upper level in expected number of $O(\frac{n}{n - f})$ generations. The first observation leads us to the following lemma.

\begin{lemma}\label{lem:large-mu-stay-forever}
Let the current level at generation $0$ be $f > \frac n3$ and $\mu = n^{\frac 23} h(n)$, where $h(n) \ge \ln^4(n)$.
If $X_0 \ge \frac \mu2$ and $\lambda \ge e\mu$ and $\frac{\lambda}{\mu}$ is at most polynomial in $n$ and $n$ is large enough then for any $t \in \N $ we have $\Pr[X_t \geq \frac{\mu}{4}] \geq (1 - \frac{2}{n^3})^t$. 
\end{lemma}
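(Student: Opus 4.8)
The plan is to compare the process $X_t$ (number of individuals on the current level $f$) with the auxiliary process studied in Lemma~\ref{lem:variation}, namely a process that, conditionally on its current value $X_{t-1}$, is distributed as $\min\{\mu, \Bin(\lambda, \frac{X_{t-1}}{e\mu})\}$. The point is that when $f > \frac n3$, an individual on level $f$, when chosen as a parent, produces an offspring that is again on level $f$ with probability at least $\frac 1e$ (using Lemma~\ref{lem:Pdelta_x=0} and the fact that $d = n - f < \frac{2n}{3}$, so the ``stay'' probability is bounded below by roughly $(1-\frac1n)^n \ge \frac1e$ up to lower order terms; one must be a little careful, but this is exactly the kind of estimate the preliminaries are set up to give). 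Hence the number of offspring landing on level $f$ or above stochastically dominates $\Bin(\lambda, \frac{X_{t-1}}{e\mu})$, and after selection (which keeps the $\mu$ best) the number on level $f$ is at least $\min\{\mu,\Bin(\lambda,\frac{X_{t-1}}{e\mu})\}$ minus the number of individuals selected from strictly higher levels; since the current level has not been gained, $Y_t < \frac\mu2$, which we can fold into the argument. First I would make this domination precise, perhaps by working with $X_t + Y_t$ (total mass at level $\ge f$) instead of $X_t$ alone, which is cleaner because it is monotone under ``stay or improve''.

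Next I would apply Lemma~\ref{lem:variation} with $X_0 \ge \frac\mu2$ and $\Delta = \frac\mu4$: it gives that the probability that the auxiliary process ever drops below $X_0 - \Delta \ge \frac\mu4$ within $t$ steps is at most $\frac{t X_0}{\Delta^2} \le \frac{t \mu/2}{(\mu/4)^2} = \frac{8t}{\mu}$. That bound is too weak for what we want (we want $(1 - \frac{2}{n^3})^t$, i.e. a per-step failure probability of $O(n^{-3})$), so the real work is to get a much sharper per-generation concentration statement. The right approach is a one-step argument: condition on $X_{t-1} \ge \frac\mu4$ (and $X_{t-1} + Y_{t-1} \ge \frac\mu4$) and show directly that $\Pr[X_t < \frac\mu4] \le \frac{2}{n^3}$. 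Given $X_{t-1} \ge \frac\mu4$, the number of offspring on level $\ge f$ dominates $\Bin(\lambda, \frac{1}{4e})$, whose expectation is $\frac{\lambda}{4e} \ge \frac\mu4$; in fact since $\lambda \ge e\mu$ it is at least $\frac\mu4$, and we want to land at $\ge \frac\mu4$ with overwhelming probability. A Chernoff bound on $\Bin(\lambda, \frac{X_{t-1}}{e\mu})$ with $X_{t-1} \ge \frac\mu4$ gives deviation probability $\exp(-\Omega(\mu))$; and since $\mu = n^{2/3}h(n)$ with $h(n) \ge \ln^4 n$, we have $\exp(-\Omega(\mu)) = \exp(-\Omega(n^{2/3}\ln^4 n)) \le n^{-3}$ easily, in fact far smaller. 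The factor $2$ and the need for ``$n$ large enough'' absorb the lower-order corrections (the parent-selection subtlety with $Y_{t-1}$, the exact constant in the stay-probability, the gap between $\frac1e$ and the true value).

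Then I would close the induction: by a union bound over the $t$ generations, $\Pr[\exists \tau \le t : X_\tau < \frac\mu4] \le t \cdot \frac{2}{n^3}$, but to get the stated product form $(1 - \frac2{n^3})^t$ rather than $1 - \frac{2t}{n^3}$ I would instead argue inductively on $t$: let $E_t = \{X_\tau \ge \frac\mu4 \text{ for all } \tau \le t\}$; then $\Pr[E_t] = \Pr[E_{t-1}] \cdot \Pr[X_t \ge \frac\mu4 \mid E_{t-1}] \ge \Pr[E_{t-1}] (1 - \frac2{n^3})$, since on $E_{t-1}$ the one-step bound applies; unrolling gives $\Pr[X_t \ge \frac\mu4] \ge \Pr[E_t] \ge (1-\frac2{n^3})^t$. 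The main obstacle I anticipate is the one-step domination with the selection step: after generating $\lambda$ offspring, selection keeps the $\mu$ fittest, and individuals on levels strictly above $f$ are kept preferentially; I must verify that on the current level (where $Y_{t-1} < \frac\mu2$ and the number of improving offspring is itself controlled) this cannot push $X_t$ below $\frac\mu4$ — essentially because even $\min\{\mu, \Bin(\lambda, \frac{X_{t-1}}{e\mu})\}$ already counts only offspring that are on level $\ge f$ after selection, so treating $X_t + Y_t$ as the relevant quantity and only at the very end using $Y_t < \frac\mu2$ (which holds as long as the level is not gained — and if it is gained, we are done even faster) sidesteps the difficulty. The rest is Chernoff bookkeeping that the size of $\mu$ makes comfortable.
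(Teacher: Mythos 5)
Your reduction to the process of Lemma~\ref{lem:variation} and your observation that its $\frac{tX_0}{\Delta^2}$ bound is too weak are both on target, but the step you substitute for it --- the one-step Chernoff bound --- is where the argument breaks, and it breaks irreparably. Conditioning on $X_{t-1}\ge\frac{\mu}{4}$ gives $E[X_t\mid X_{t-1}]\ge X_{t-1}\ge\frac{\mu}{4}$, but a binomial falls below its mean with probability about $\frac12$; Chernoff controls deviations below $(1-\delta)E[X_t]$ only for $\delta>0$, and when $X_{t-1}$ is within $O(\sqrt{\mu})$ of $\frac{\mu}{4}$ the relevant $\delta$ is $O(\mu^{-1/2})$, so the bound degenerates to a constant rather than $\exp(-\Omega(\mu))$. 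Concretely, at the threshold $\lambda=e\mu$ the process $\min\{\mu,\Bin(\lambda,\frac{X_{t-1}}{e\mu})\}$ is a driftless random walk with step size $\Theta(\sqrt{X_{t-1}})$; after $O(\mu)$ generations it typically wanders to within $O(\sqrt{\mu})$ of $\frac{\mu}{4}$, a configuration not excluded by your event $E_{t-1}$, and there the one-step failure probability is $\Theta(1)$, so the induction $\Pr[E_t]\ge\Pr[E_{t-1}](1-\frac{2}{n^3})$ collapses. Passing to $X_t+Y_t$ does not rescue this: that sum obeys the same recursion up to lower-order terms, since the only sources of positive drift are fitness improvements (expected contribution $O(\frac{(n-f)\mu}{n})$ per generation, which is $o(\sqrt{\mu})$ for $f$ near $n$) and the surplus $\frac{\lambda}{e\mu}-1$, which may be exactly zero.

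The paper's proof is an order of magnitude more involved precisely because of this obstacle. It partitions the run into cycles, each starting from $X\ge\frac{\mu}{2}$ so that there is a buffer of $\frac{\mu}{4}$ above the loss threshold, and shows that before the unbiased walk can consume more than $2\Delta_\mu=2n^{2/3}h^{3/4}(n)\ll\frac{\mu}{4}$ of that buffer (this is where Lemma~\ref{lem:variation} is actually used), the population builds up $Y_t\ge\mu_0=n^{1/3}h(n)=\omega(\sqrt{\mu})$ individuals on higher levels, via the drift lemma (Lemma~\ref{lem:dirty-trick}). These climbers then act as a pump: each, when selected, produces an offspring of fitness exactly $f$ with probability roughly $\frac1e$, giving $X_t$ a genuine positive drift of order $\mu_0=\omega(\sqrt{\mu})$ per generation that restores $X_t\ge\frac{\mu}{2}$ before $Y_t$ can itself die out. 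A level loss then requires either a single jump of size $\Delta_\mu=\omega(\sqrt{\mu})$, against which Chernoff does give a superpolynomially small probability, or $\frac{\mu}{8\Delta_\mu}=\frac{h^{1/4}(n)}{8}$ consecutive failed cycles, whose probability is likewise at most $n^{-3}$. This stabilization of level $f$ by the sub-population above it is the central mechanism of the whole section and is entirely absent from your plan; without it there is no way to beat the constant per-step escape probability of a driftless walk sitting at its boundary.
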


Here we omit the strict proof for reasons of space, but present only the following sketch. $X_t$ performs an unbiased random walk with steps of size $O(\sqrt{X_t})$. For this reason the expected number of generations before $X_t \le \frac{\mu}{4}$ is linear in $\mu$. However, while $X_t \ge \frac{\mu}{4}$ we have a positive drift of order $\Theta(\frac{\mu}{n})$ for $Y_t$, that makes us reach $Y_t = \omega(\sqrt{\mu})$ in $o(\mu)$ iterations with high probability.

Once $Y_t = \omega(\sqrt{\mu})$, it is not likely to decrease by a factor more than $2$ in $\sqrt{\mu}$ iterations, since it preforms a random walk of the same manner as $X_t$ did. At the same time such great $Y_t$ creates an influx of individuals of fitness $f$ that is of greater order than the steps made by $X_t$.
This is enough for $X_t$ to become at least $\frac{\mu}{2}$ again before $Y_t$ becomes too small. This regular refilling of level $f$ does not let $X_t$ fall below $\frac{\mu}{4}$ for long enough.

Next observation is that before the algorithm loses a level, $Y_t$ has decent positive drift. This gives us the following lemma.

\begin{lemma}\label{lem:large-mu-phase-2}
If $\lambda \ge e\mu$ and $\frac{\lambda}{\mu}$ is at most polynomial in $n$ and $\mu = n^{\frac 23} h(n)$ where $h(n)\geq\ln^4(n)$, then the expected runtime before the \ea either loses or gains a level is at most $\frac{8n}{n - f}$ generations. The probability that this results in a level loss is at most $\frac{10}{n}$.
\end{lemma}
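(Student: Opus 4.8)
The plan is to analyze the behaviour of $Y_t$, the number of individuals with fitness strictly above the current level $f$, while the algorithm has not yet lost or gained a level. By Lemma~\ref{lem:large-mu-stay-forever}, with probability at least $(1 - \frac{2}{n^3})^t$ we still have $X_t \ge \frac{\mu}{4}$ at generation $t$, so over any polynomial horizon the level is not lost except with probability $O(1/n^2)$; this takes care of the ``loses a level'' part once we show that gaining a level happens within a polynomial number of generations. So the core task is to show that $Y_t$ reaches $\frac{\mu}{2}$ (a level gain) in expected $O(\frac{n}{n-f})$ generations.

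First I would set up the one-generation dynamics of $Y_t$ conditioned on the current level being $f$ and on $X_t \ge \frac{\mu}{4}$. Each of the $\lambda$ offspring is produced from a uniformly chosen parent; a parent on level $f$ produces an offspring of fitness $>f$ with probability $\Theta(\frac{n-f}{n})$ (roughly $\frac{n-f}{en}$ from flipping a single zero-bit), and a parent already above level $f$ produces an offspring above level $f$ with probability close to $1$ (at least, say, $\frac{1}{e}$ just from copying). Hence the number $\tilde Y_{t+1}$ of offspring with fitness $>f$ stochastically dominates $\Bin(\lambda, \frac{Y_t + c(n-f)/n \cdot X_t}{e\mu})$ for a suitable constant, and since $X_t \ge \frac{\mu}{4}$ this gives an additive ``influx'' term $\Delta_t = \Theta(\frac{n-f}{n}\,\mu) = \Omega(\mu (n-f)/n)$ into the $\Bin(\lambda, (Y_t + \Delta_t)/(e\mu))$ form. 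After selection $Y_{t+1} = \min\{\mu, \tilde Y_{t+1}\}$ (truncated at $\mu$, but we only care about reaching $\mu/2$). This is exactly the shape required to invoke Lemma~\ref{lem:dirty-trick}: with $\Delta_{\min} = \Theta(\mu(n-f)/n)$, which since $f > n/3$ and $\mu \ge n^{2/3}\ln^4 n$ is comfortably larger than the threshold $\max\{18\ln\frac{2\lambda}{\Delta_{\min}},48\}$ (here $\lambda/\mu$ polynomial and $n-f \le 2n/3$ keep $\ln\frac{2\lambda}{\Delta_{\min}}$ at $O(\log n)$, while $\Delta_{\min} = \Omega(n^{2/3}\ln^4 n \cdot (n-f)/n)$ dominates it), and with target $X' = \frac{\mu}{2}$. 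Lemma~\ref{lem:dirty-trick} then yields that $Y_t$ reaches $\frac{\mu}{2}$ in expected $O(\frac{X'}{\Delta_{\min}}) = O(\frac{\mu}{\mu(n-f)/n}) = O(\frac{n}{n-f})$ generations, possibly after first lower-bounding $Y_0$ (which may be $0$) — the $\max\{24, \dots\}$ form of the bound handles $Y_0 = 0$.

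Then I would combine the two pieces. Let $T$ be the first generation at which the algorithm loses or gains a level. On the event that $X_\tau \ge \frac{\mu}{4}$ for all $\tau \le T$, the drift argument above gives $E[T] = O(\frac{n}{n-f})$ and the run ends with a level gain. The failure event — $X_\tau$ dropping below $\frac{\mu}{4}$ before the gain — has probability at most $\sum_{\tau \le T} \frac{2}{n^3}$ by Lemma~\ref{lem:large-mu-stay-forever} and a union bound; since $T = O(\frac{n}{n-f}) = O(n)$ with high probability (and one can truncate at, say, $n^2$ generations, beyond which a restart of the argument costs only a lower-order correction by Markov), this failure probability is $O(1/n)$. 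So the probability that the phase ends in a level loss is at most $\frac{10}{n}$ for $n$ large, and the expected number of generations is at most $\frac{8n}{n-f}$ after tracking the constants through Lemma~\ref{lem:dirty-trick}.

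The main obstacle I anticipate is making the coupling for the one-generation increment of $Y_t$ precise: one must argue that $\tilde Y_{t+1}$ genuinely dominates a binomial of the stated form, which requires that the per-offspring probability of landing above level $f$ is at least $(Y_t + \Delta_t)/(e\mu)$ \emph{simultaneously} accounting for parents on level $f$ and parents above it, and that this domination survives the $\min\{\cdot,\mu\}$ truncation — here one uses Lemma~\ref{stochastic} (better parents give stochastically better offspring) together with the fact that the relevant single-bit-flip probability $(n-f)\cdot\frac1n\cdot(1-\frac1n)^{n-1} \ge \frac{n-f}{en}$ is a genuine lower bound. A secondary technical point is verifying the threshold hypothesis $X' \ge \max\{18\ln\frac{2\lambda}{\Delta_{\min}}, 48\}$ and $X' \le \frac{\mu}{2}$ of Lemma~\ref{lem:dirty-trick} uniformly over $f \in (n/3, n)$: the worst case is $f$ very close to $n$, where $\Delta_{\min}$ shrinks, but since we only need $\Delta_{\min} \in\ ]0,\lambda[$ and $X' = \mu/2$ is fixed, and $\Delta_{\min} = \Omega(\mu/n) = \Omega(n^{-1/3}\ln^4 n)$ is still $\omega(1)$, the hypotheses hold for $n$ large — this is where the assumption $\mu \ge n^{2/3}\ln^4 n$ is used.
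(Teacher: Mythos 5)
Your proposal follows essentially the same route as the paper: the domination $Y_{t+1}\succeq\min\{\mu,\Bin(\lambda,\frac{Y_t}{e\mu}+\frac{(n-f)X_t}{en\mu})\}$ with $\Delta_{\min}=\frac{(n-f)\mu}{4n}$ fed into Lemma~\ref{lem:dirty-trick} with $X'=\frac{\mu}{2}$ for the $\frac{8n}{n-f}$ bound, then Lemma~\ref{lem:large-mu-stay-forever} over a truncation horizon of $n^2$ generations combined with Markov's inequality for the $\frac{10}{n}$ loss probability. One cosmetic slip: $\Delta_{\min}=\Omega(n^{-1/3}\ln^4 n)$ is $o(1)$, not $\omega(1)$, but this is harmless since Lemma~\ref{lem:dirty-trick} only needs $\Delta_{\min}>0$ together with $X'\ge 18\ln\frac{2\lambda}{\Delta_{\min}}=O(\log n)$, which you verify correctly.
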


\begin{proof}
  Note that $Y_{t + 1} \succeq \min\left\{\mu, \Bin(\lambda, \frac{Y_t}{e\mu} + \frac{(n - f)X_t}{en\mu})\right\}$. Before the algorithm loses a level we have $\frac{X_t}{n} \ge \frac{\mu}{4n}$.
  By Lemma~\ref{lem:dirty-trick}, denoting $\Delta_t \coloneqq \frac{(n - f)X_t}{n} \ge \frac{(n - f)\mu}{4n}$ and $X' = \frac{\mu}{2}$, we have that the expected runtime before $Y_t \ge X'$ is at most $\frac{4X'}{\Delta_{\min}} = \frac{8n}{n - f}$.

  The probability that the algorithm gains a level is at least the probability that before generation $\tau \coloneqq n^2$ the algorithm has not lost a level multiplied 
  by the probability that it has gained a level before this generation. By Lemma~\ref{lem:large-mu-stay-forever} this is at least

  \[
  \left(1 - \frac{2}{n^3}\right)^\tau = \left(1 - \frac{2}{n^3}\right)^{n^2} \ge \exp\left(-\frac{2}{n}\right) \ge 1 - \frac{2}{n}.
  \]

  By Markov's inequality the probability that the algorithm does not gain a level in $\tau$ generations is at most $\frac{8n}{(n - f)n^2} \le \frac{8}{n}$.

  Therefore, the probability that the algorithm loses a level before it gains one is at most $1 - (1 - \frac{2}{n})(1 - \frac{8}{n}) \le \frac{10}{n}.$
\end{proof}

Now we are ready to prove Theorem~\ref{thm:poly-large-mu}.

\begin{proof}[Proof (Theorem~\ref{thm:poly-large-mu})]
If the algorithm does not lose a level, then the expected number $T'$ of generations before it finds the optimum is at most the expected number of generations spent in the first phase plus the expected number of generations spent in each level of the second phase. By Lemmas~\ref{lem:large-mu-phase-1} and~\ref{lem:large-mu-phase-2} we have
\begin{align*}
  E[T'] \le O(n) + \sum_{f = \frac n3}^{n - 1} \frac{8n}{n - f} = O(n\log(n)).
\end{align*}

By Lemma~\ref{lem:large-mu-phase-2} the probability not to lose a level before reaching the optimum is not greater than $(1 - \frac{8}{n})^{\frac n3} \ge e^{-3}$, if $n$ is large enough. Since we pessimistically assume that in the event of a level loss, the algorithm goes back to level zero, losing a level is equivalent to a restart of the algorithm. However, the expected number of such restarts is not greater than $e^3$, so the total expected number of generations of the \ea on the \onemax function is $O(n \log(n)).$
\end{proof}

\section{Conclusion}

In this work, we have analyzed how the \ea optimizes the \onemax function when the population sizes are chosen close to the efficiency threshold $\lambda \approx e \mu$. This regime is interesting in that there is no clear negative drift, which strongly prevents approaching the global optimum, and in that there is no clear positive drift, which destroys the ability of comma selection to leave local optima (by creating with high probability a copy of the parent population).

Due to the technical challenges in this regime, this first analysis is not fully conclusive, and in fact, we observe that now also the absolute population size plays a role (more than just the need to be at least polynomial). Our results show in particular that close to the threshold, a polynomial runtime is still possible if the population size is not too small (but $n^{2/3 +\eps}$ is enough).

This raises the question (and hope) whether in this regime the \ea can overcome premature convergence when optimizing multi-modal optimization problems. Our upper bound proof suggests that in this regime the population is not quickly concentrated on the best-so-far fitness level, but is spread over more than one level. This could ease leaving such a local optimum. Since the analysis of the \ea on multi-modal problems is again a topic little understood, we cannot answer this question easily, but suggest this as an interesting problem for future research.

\section*{Acknowledgements}
Denis Antipov was supported by the Government of Russian Federation (Grant 08-08).
This work was supported by the Paris Ile-de-France Region.

\appendix
\newpage

\section{Appendix}

In this auxiliary material we present some famous results that we refer to as well as the proofs of some of our statements that had to be omitted for reasons of space.

\begin{theorem}[Chernoff Bounds]
\label{chernoff2}
Let $X_1,\cdots,X_n$ be independent random variables taking values in $[0,1]$. If $X = \sum\limits_{i=1}^n{X_i}$ then for all $\delta\in[0,1],$
\[\Pr(X\leq (1-\delta)E[X]) \leq \exp\left(-\frac{\delta^2}{2}E[X]\right),\]
\[\Pr(X\geq (1+\delta)E[X]) \leq \exp\left(-\frac{\delta^2}{3}E[X]\right).\]
\end{theorem}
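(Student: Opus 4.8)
The plan is to establish both tail bounds by the exponential moment method (the classical Chernoff technique), treating the two inequalities separately but with parallel arguments. Throughout, write $m := E[X] = \sum_{i=1}^n E[X_i]$, and note that the single place where the hypothesis $X_i \in [0,1]$ enters is a convexity estimate on the moment generating function of each summand.

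For the upper tail, fix $t > 0$ and apply Markov's inequality to the nonnegative variable $e^{tX}$:
\[
\Pr(X \geq (1+\delta)m) = \Pr\left(e^{tX} \geq e^{t(1+\delta)m}\right) \leq e^{-t(1+\delta)m}\, E\left[e^{tX}\right].
\]
By independence, $E[e^{tX}] = \prod_{i=1}^n E[e^{tX_i}]$. Since $X_i \in [0,1]$ and $x \mapsto e^{tx}$ is convex, I would use the pointwise bound $e^{tX_i} \leq 1 + X_i(e^t - 1)$, so that $E[e^{tX_i}] \leq 1 + E[X_i](e^t - 1) \leq \exp\left(E[X_i](e^t - 1)\right)$ via $1 + y \leq e^y$. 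Multiplying gives $E[e^{tX}] \leq \exp\left(m(e^t - 1)\right)$, and therefore
\[
\Pr(X \geq (1+\delta)m) \leq \exp\left(m\left(e^t - 1 - t(1+\delta)\right)\right).
\]
Choosing $t = \ln(1+\delta) \geq 0$ to minimize the exponent yields the multiplicative form $\left(e^{\delta}/(1+\delta)^{1+\delta}\right)^m$.

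It then remains to prove the purely analytic inequality $(1+\delta)\ln(1+\delta) - \delta \geq \delta^2/3$ for $\delta \in [0,1]$, which converts the multiplicative bound into $\exp(-\delta^2 m/3)$. I would set $g(\delta) := (1+\delta)\ln(1+\delta) - \delta - \delta^2/3$, observe $g(0)=0$ and $g'(\delta) = \ln(1+\delta) - 2\delta/3$ with $g'(0)=0$, and check that $g'$ stays nonnegative on $[0,1]$ (it increases up to $\delta = 1/2$ and still satisfies $g'(1) = \ln 2 - 2/3 > 0$), so $g$ is nondecreasing and hence $g \geq 0$. The lower tail is handled identically using $e^{-tX}$ with $t > 0$: Markov gives $\Pr(X \leq (1-\delta)m) \leq e^{t(1-\delta)m}\prod_i E[e^{-tX_i}]$, the same convexity estimate gives $E[e^{-tX_i}] \leq \exp\left(E[X_i](e^{-t}-1)\right)$, and the optimal choice $t = -\ln(1-\delta)$ produces $\left(e^{-\delta}/(1-\delta)^{1-\delta}\right)^m$. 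The companion analytic step is $(1-\delta)\ln(1-\delta) + \delta - \delta^2/2 \geq 0$ on $[0,1)$; its derivative $-\ln(1-\delta) - \delta$ is nonnegative because $\ln(1-\delta) \leq -\delta$, so the function is nondecreasing from $0$, giving the factor $\exp(-\delta^2 m/2)$. The only genuinely delicate parts are these two elementary calculus inequalities and confirming that $\delta \in [0,1]$ is exactly the range on which they hold; the exponential-moment machinery itself is routine once the convexity bound $e^{tx} \leq 1 + x(e^t - 1)$ on $[0,1]$ is established.
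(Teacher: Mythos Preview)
Your argument is correct: the exponential moment method together with the convexity bound $e^{tX_i}\le 1+X_i(e^t-1)$ for $X_i\in[0,1]$ gives the multiplicative Chernoff forms, and your two calculus lemmas $(1+\delta)\ln(1+\delta)-\delta\ge\delta^2/3$ and $(1-\delta)\ln(1-\delta)+\delta\ge\delta^2/2$ on $[0,1]$ are verified exactly as you sketch (the first via the sign analysis of $g'(\delta)=\ln(1+\delta)-2\delta/3$, the second via $-\ln(1-\delta)\ge\delta$).

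The paper, however, does not prove this statement at all; it simply cites Hoeffding~\cite{Hoe63}. So there is no proof in the paper to compare against. Your write-up is the standard self-contained derivation and is perfectly appropriate if one wants the argument spelled out rather than deferred to the literature.
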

See \cite{Hoe63} for the proof of these bounds.

\begin{theorem}[Markov's Inequality]
  For any non-negative random variable $X$ and any positive $a \in \R$ we have \[\Pr[X \ge a] \le \frac{E[X]}{a}.\]
\end{theorem}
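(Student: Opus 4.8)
The plan is to prove this by the standard indicator-function argument, which reduces the inequality to a pointwise comparison of random variables followed by a single application of the monotonicity of expectation. First I would introduce the indicator random variable $\mathds{1}_{X \ge a}$, which equals $1$ on the event $\{X \ge a\}$ and $0$ otherwise, and observe the crucial pointwise bound
\[
a \cdot \mathds{1}_{X \ge a} \le X.
\]
This inequality holds deterministically for every outcome: on the event $\{X \ge a\}$ the left-hand side equals $a$ while the right-hand side is $X \ge a$; on the complementary event $\{X < a\}$ the left-hand side is $0$ while the right-hand side is $X \ge 0$. The only place where a hypothesis is genuinely used is this second case, which relies on the assumption that $X$ is non-negative, and the scaling by the fixed positive constant $a$ uses $a > 0$.

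Next I would take expectations of both sides, using the monotonicity of the expectation operator (if $U \le V$ pointwise then $E[U] \le E[V]$) together with the fact that $E[\mathds{1}_{X \ge a}] = \Pr[X \ge a]$. This yields
\[
a \cdot \Pr[X \ge a] = E\!\left[a \cdot \mathds{1}_{X \ge a}\right] \le E[X].
\]
Finally, since $a > 0$, I would divide through by $a$ to obtain the claimed bound $\Pr[X \ge a] \le E[X]/a$, completing the argument.

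There is no real obstacle here, as this is an elementary foundational inequality; the entire content is the pointwise estimate $a \cdot \mathds{1}_{X \ge a} \le X$. The only point meriting care is ensuring the two hypotheses are invoked exactly where needed: non-negativity of $X$ guarantees the pointwise bound on the event $\{X < a\}$, and positivity of $a$ licenses both the scaling and the final division. If one preferred, the same conclusion could be reached without indicators by splitting the expectation as $E[X] = E[X \mathds{1}_{X \ge a}] + E[X \mathds{1}_{X < a}] \ge E[X \mathds{1}_{X \ge a}] \ge a \Pr[X \ge a]$, where the first inequality discards the non-negative term $E[X \mathds{1}_{X<a}]$ and the second bounds $X$ below by $a$ on $\{X \ge a\}$; but the single-line indicator version above is the cleanest route.
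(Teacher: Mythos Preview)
Your proof is correct and is the standard textbook argument for Markov's inequality. The paper itself does not prove this statement: it is listed in the appendix among ``famous results that we refer to'' and is simply stated without proof, so there is no alternative approach to compare against.
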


\begin{theorem}[Additive Drift Theorem]
\label{thm:addDrift}
Let $(Z_t)_\N$ be a sequence of a non-negative random variables over a finite state space $\mathcal{S}$ which contains $0$. Let $T:=\inf\{t\geq0 \mid Z_t=0\}$.

\begin{enumerate}
    \item Suppose there exists a constant $\delta>0$ such that, for all $t\in\N$ and for all $s\in\mathcal{S}\backslash\{0\}$,
\[E[Z_{t}-Z_{t+1}\mid Z_t=s]\geq\delta.\]
Then
\[E[T]\leq\frac{E[X_0]}{\delta}.\]
    \item Suppose there exists a constant $\delta>0$ such that, for all $t\in\N$ and for all $s\in\mathcal{S}\backslash\{0\}$,
\[E[Z_{t}-Z_{t+1}\mid Z_t=s]\leq\delta.\]
Then
\[E[T]\geq\frac{E[X_0]}{\delta}.\]
\end{enumerate}
\end{theorem}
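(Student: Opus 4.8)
The plan is to prove both parts by the stopped-supermartingale (optional-stopping) method applied to a drift-corrected version of $Z$. Write $t\wedge T := \min\{t,T\}$ for the process frozen at the first hitting time of $0$, and set $M_t := Z_{t\wedge T} + \delta\,(t\wedge T)$. Since $\mathcal{S}$ is finite, $Z$ is bounded, say $Z_t\le M^\ast := \max\mathcal{S}<\infty$, and this boundedness is what lets me interchange limits and expectations at the end. Note $E[M_0]=E[Z_0]$ because $0\wedge T=0$.

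First I would handle part~1. Assuming $E[Z_t-Z_{t+1}\mid Z_t=s]\ge\delta$ for all $s\in\mathcal{S}\setminus\{0\}$, I claim $(M_t)$ is a supermartingale for the natural filtration $(\mathcal{F}_t)$. On $\{T\le t\}\in\mathcal{F}_t$ we have $M_{t+1}=M_t$ by construction; on $\{T>t\}\in\mathcal{F}_t$ we have $Z_t=s\neq 0$, so $E[M_{t+1}\mid\mathcal{F}_t]=E[Z_{t+1}\mid\mathcal{F}_t]+\delta(t+1)\le(s-\delta)+\delta(t+1)=s+\delta t=M_t$. Hence $E[M_t]\le E[M_0]=E[Z_0]$ for all $t$. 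Using $Z\ge 0$ gives $\delta\,E[t\wedge T]\le E[M_t]\le E[Z_0]$, so $E[t\wedge T]\le E[Z_0]/\delta$ for every $t$; letting $t\to\infty$, monotone convergence yields $E[T]\le E[Z_0]/\delta$ (in particular $T<\infty$ almost surely).

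For part~2, if $\Pr[T=\infty]>0$ the claimed bound is immediate, so assume $T<\infty$ a.s. Under $E[Z_t-Z_{t+1}\mid Z_t=s]\le\delta$, the identical step computation shows that $(M_t)$ is now a submartingale, whence $E[Z_{t\wedge T}]+\delta\,E[t\wedge T]=E[M_t]\ge E[Z_0]$. Since $Z_{t\wedge T}\le M^\ast$ and $Z_{t\wedge T}\to Z_T=0$ almost surely, dominated convergence gives $E[Z_{t\wedge T}]\to 0$, while monotone convergence gives $E[t\wedge T]\to E[T]$; passing to the limit yields $\delta\,E[T]\ge E[Z_0]$, i.e.\ $E[T]\ge E[Z_0]/\delta$.

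I do not expect a genuine obstacle: the whole argument is the stopped-supermartingale trick, and the only care needed is the exchange of limit and expectation together with the a.s.\ finiteness of $T$, both of which are dispatched by the finiteness of $\mathcal{S}$ (which bounds $Z$) and monotone/dominated convergence. Equivalently one could argue directly with $h(s):=E[T\mid Z_0=s]$, using $h(0)=0$ and $h(s)=1+E[h(Z_1)\mid Z_0=s]$ for $s\neq0$, and checking that $s\mapsto s/\delta$ is a super- resp.\ sub-solution of this recurrence; this is the same proof in disguise, and again finiteness of $\mathcal{S}$ ensures $h$ is well defined.
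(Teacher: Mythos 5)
Your argument is correct: it is the standard stopped-supermartingale (optional stopping) proof of the additive drift theorem, with the exchanges of limit and expectation justified by the finiteness of $\mathcal{S}$ and the separate treatment of $\Pr[T=\infty]>0$ in the lower-bound part. The paper does not prove this statement but cites it from Lengler's survey (adapting He and Yao), whose proofs follow essentially the same route; the only points worth noting are that the drift hypothesis should be read as conditioning on the full history $\mathcal{F}_t$ (as in those sources) for your supermartingale step to be literal, and that $E[X_0]$ in the statement is a typo for $E[Z_0]$, which you correctly interpreted.
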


This statement of the additive drift theorem is taken from~\cite[Theorem 1]{Len18} that is an adaptation of the original version from~\cite{HeY01}

\begin{theorem}[Negative Drift Theorem]
\label{negativeDrift}
Let $(Y_t)_{t\in\N}$ be some real random variables and $n$ be some parameter. Let $a(n),b(n)$ be two reals such that $a(n)<b(n)$. Let
\[T(n):=\inf\{t\geq0\mid Y_t\leq a(n)\}.\]
If there exists $\Lambda(n)>0$ and $p(n)\geq1$ such that, for all $t\geq0$,
\[E[e^{\Lambda(n)(Y_t-Y_{t+1})}\mid a(n)<Y_t<b(n)]\leq 1-\frac{1}{p(n)},\]
then, for all $L(n)>0$, we have
\[\Pr(T(n)\leq L(n)\mid Y_0\geq b(n))\leq L(n)D(n)p(n)e^{-\Lambda(n)(b(n)-a(n))}\]
where
\[D(n)=\max\left\{1,E\left[e^{-\Lambda(n)(Y_{t+1}-b(n))}\mid Y_t\geq b(n)\right]\right\}.\]
\end{theorem}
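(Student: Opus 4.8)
The plan is to prove Theorem~\ref{negativeDrift} by the standard ``exponential potential'' argument underlying Hajek-type negative-drift results~\cite{Hajek82,OW12,Len18}: the hypothesis controls the exponential moment of the (possibly upward) increments $Y_t-Y_{t+1}$ inside the strip $(a(n),b(n))$, and I would convert this into the statement that a suitable non-negative potential of the stopped process increases only very slowly in expectation, and then read off the hitting-time bound from Markov's inequality.

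Concretely, I would work with the potential $\phi(y):=e^{\Lambda(n)(a(n)-y)}$, which is non-increasing in $y$, equals $1$ at $y=a(n)$, and satisfies $\phi(y)\le e^{-\Lambda(n)(b(n)-a(n))}$ whenever $y\ge b(n)$. Write $T:=T(n)$ and consider the stopped process $\widetilde Y_t:=Y_{t\wedge T}$. The heart of the argument is a one-step estimate for $E[\phi(\widetilde Y_{t+1})-\phi(\widetilde Y_t)\mid\mathcal F_t]$, split into three cases: on $\{t\ge T\}$ the increment is $0$; on $\{t<T,\ a(n)<Y_t<b(n)\}$ one factors out $e^{-\Lambda(n)Y_t}$ and uses the hypothesis $E[e^{\Lambda(n)(Y_t-Y_{t+1})}\mid\cdot]\le 1-1/p(n)\le 1$ to get $E[\phi(Y_{t+1})\mid\mathcal F_t]\le\phi(Y_t)$, i.e., no increase; on $\{t<T,\ Y_t\ge b(n)\}$ one factors out $e^{-\Lambda(n)b(n)}$ and uses precisely the definition of $D(n)$ to get $E[\phi(Y_{t+1})\mid\mathcal F_t]\le D(n)\,e^{-\Lambda(n)(b(n)-a(n))}$. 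Combining the cases yields $E[\phi(\widetilde Y_{t+1})]-E[\phi(\widetilde Y_t)]\le D(n)\,e^{-\Lambda(n)(b(n)-a(n))}\,\Pr[t<T,\ Y_t\ge b(n)]\le D(n)\,e^{-\Lambda(n)(b(n)-a(n))}$.

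Summing over $t=0,\dots,L(n)-1$ and using $\phi(Y_0)\le e^{-\Lambda(n)(b(n)-a(n))}$ (valid because we condition on $Y_0\ge b(n)$) gives $E[\phi(\widetilde Y_{L(n)})]\le (1+L(n)D(n))\,e^{-\Lambda(n)(b(n)-a(n))}$. Finally, on $\{T\le L(n)\}$ we have $\widetilde Y_{L(n)}=Y_T\le a(n)$, hence $\phi(\widetilde Y_{L(n)})\ge 1$, so $\mathds{1}[T\le L(n)]\le\phi(\widetilde Y_{L(n)})$ and Markov's inequality gives $\Pr[T\le L(n)\mid Y_0\ge b(n)]\le E[\phi(\widetilde Y_{L(n)})\mid Y_0\ge b(n)]\le (1+L(n)D(n))\,e^{-\Lambda(n)(b(n)-a(n))}$, which is at most $L(n)D(n)p(n)\,e^{-\Lambda(n)(b(n)-a(n))}$ after the routine bookkeeping $D(n)\ge1$, $p(n)>1$. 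If one prefers the tighter bound $L(n)D(n)\,e^{-\Lambda(n)(b(n)-a(n))}$ without this small loss, an alternative is to decompose according to the last generation $<T$ at which $Y\ge b(n)$ and apply optional stopping to the supermartingale $\phi(Y_t)(1-1/p(n))^{-t}$ killed at the first exit from the strip.

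The main obstacle is that the hypotheses condition only on the \emph{events} $\{a(n)<Y_t<b(n)\}$ and $\{Y_t\ge b(n)\}$, not on the full history $\mathcal F_t$, so the per-history inequalities $E[\phi(Y_{t+1})\mid\mathcal F_t]\le(1-1/p(n))\phi(Y_t)$ and $E[\phi(Y_{t+1})\mid\mathcal F_t]\le D(n)e^{-\Lambda(n)(b(n)-a(n))}$ used above do not literally follow, and in particular the integrability of $\phi(\widetilde Y_t)$ is not automatic from the event-conditioned moment bound alone. I would handle this by taking the (standard) reading in which $(Y_t)$ is a Markov chain, or more generally in which the state is augmented so that it is, in which case event- and history-conditioning coincide and all of the above steps are legitimate; for fully general adapted processes one instead follows Hajek's original argument~\cite{Hajek82}, which produces exactly the extra factor $p(n)$ in the statement. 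Apart from this point, the remaining work --- the three-case one-step estimate, the telescoping sum, and the final Markov step --- is routine.
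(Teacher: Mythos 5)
The paper does not prove this theorem: it is imported verbatim from Oliveto and Witt's erratum~\cite{OW12}, itself an adaptation of Hajek~\cite{Hajek82}, so there is no in-paper proof to compare against. Your argument is the standard exponential-potential proof of such results and is structurally sound: the three-case one-step estimate for $\phi(y)=e^{\Lambda(n)(a(n)-y)}$ applied to the stopped process, the telescoping sum, and the final Markov step are all correct, and the caveat you raise about event-conditioning versus filtration-conditioning is a genuine and well-known imprecision of the statement itself, which your Markov/state-augmentation reading resolves in the standard way (and which is harmless for the application in Lemma~\ref{exponentialMoment}, where the moment bounds hold uniformly over histories).

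The one place where your write-up does not actually deliver the stated bound is the final ``routine bookkeeping'': $(1+LD)e^{-\Lambda(b-a)}\le LDp\,e^{-\Lambda(b-a)}$ is equivalent to $1\le LD(p-1)$, which fails whenever $p<1+1/(LD)$ (e.g.\ $L=D=1$ and $p=1.1$ give $2>1.1$), so $D\ge 1$ and $p>1$ do not suffice. This is easily repaired inside your own argument: on $\{Y_t\ge b(n)\}$ you in fact have the \emph{absolute} bound $E[\phi(Y_{t+1})\mid\mathcal F_t]\le D(n)e^{-\Lambda(n)(b(n)-a(n))}$, not merely an increment bound, so applying it at $t=0$ (where $Y_0\ge b(n)$ holds surely) gives $E[\phi(\tilde Y_1)]\le D(n)e^{-\Lambda(n)(b(n)-a(n))}$ outright, and telescoping over the remaining $L-1$ steps yields $E[\phi(\tilde Y_L)]\le L\,D(n)\,e^{-\Lambda(n)(b(n)-a(n))}$, which is at most the claimed bound; here one may assume $L\ge 1$ since $T$ is integer-valued and $\Pr[T=0\mid Y_0\ge b(n)]=0$. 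It is also worth noting that by discarding the factor $1-1/p(n)$ inside the strip you forgo the geometric decay that Hajek's original decomposition over the last excursion above $b(n)$ exploits; that decay is where the factor $p(n)$ in the official statement comes from, and your route, once corrected as above, actually proves the slightly stronger bound $L(n)D(n)e^{-\Lambda(n)(b(n)-a(n))}$.
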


This statement of the negative drift theorem is taken from~\cite{OW12} where it was adapted from the original in~\cite{Hajek82}.

\begin{theorem}[Doob's Decomposition]
  \label{thm:doob}
  For any integrable process $(X_n)_\N$, there exists a martingale $(M_n)_\N$ and a predictable integrable process $(A_n)_\N$ such that $A_0=M_0=0$ and, for all $n\in\N$, $X_n=X_0+M_n+A_n$. This decomposition is almost surely unique.
\end{theorem}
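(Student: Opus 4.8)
The plan is to exhibit the decomposition explicitly and then prove it is the only one. Fix the filtration $(\mathcal F_n)_\N$ with respect to which the martingale part is to be taken (for the canonical statement this is the natural filtration $\mathcal F_n=\sigma(X_0,\dots,X_n)$, to which $(X_n)$ is adapted), and recall that here \emph{predictable} means that $A_n$ is $\mathcal F_{n-1}$-measurable. The only reasonable candidate for the predictable part is the accumulated conditional drift
\[
A_n:=\sum_{k=1}^{n}E[X_k-X_{k-1}\mid\mathcal F_{k-1}],
\]
with the empty-sum convention giving $A_0=0$, and then one simply sets $M_n:=X_n-X_0-A_n$. With these definitions $A_0=M_0=0$ and $X_n=X_0+M_n+A_n$ hold for every $n\in\N$ by construction, so it only remains to check the three structural properties.

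First I would verify that $(A_n)$ is predictable and integrable: each summand $E[X_k-X_{k-1}\mid\mathcal F_{k-1}]$ is $\mathcal F_{k-1}$-measurable and hence $\mathcal F_{n-1}$-measurable for all $k\le n$, so $A_n$ is $\mathcal F_{n-1}$-measurable; moreover each summand is integrable because $X_{k-1},X_k$ are, and a finite sum of integrable random variables is integrable. Then I would check the martingale property of $(M_n)$: it is integrable and adapted, and since $A_{n+1}-A_n=E[X_{n+1}-X_n\mid\mathcal F_n]$ is $\mathcal F_n$-measurable, we get $E[M_{n+1}-M_n\mid\mathcal F_n]=E[X_{n+1}-X_n\mid\mathcal F_n]-(A_{n+1}-A_n)=0$, i.e.\ $E[M_{n+1}\mid\mathcal F_n]=M_n$. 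This proves existence.

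For uniqueness, suppose $X_n=X_0+M_n+A_n=X_0+M'_n+A'_n$ are two admissible decompositions and set $N_n:=M_n-M'_n=A'_n-A_n$. Then $(N_n)$ is at the same time a martingale null at time $0$ and a predictable process. Predictability gives that $N_{n+1}$ is $\mathcal F_n$-measurable, so the martingale property yields $N_n=E[N_{n+1}\mid\mathcal F_n]=N_{n+1}$ almost surely for every $n$; since $N_0=0$, an induction gives $N_n=0$ a.s.\ for all $n$, hence $M_n=M'_n$ and $A_n=A'_n$ almost surely.

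I do not expect a genuine obstacle: this is the classical Doob decomposition and the argument is essentially forced once the candidate $A_n$ is written down. The only points that require a little care are bookkeeping ones — pinning down that \emph{predictable} refers to $\mathcal F_{n-1}$-measurability (so that the increment $A_{n+1}-A_n$ is exactly the one-step conditional drift of $X$), checking that integrability survives the finitely many conditional expectations, and, for the uniqueness half, isolating the small fact that a process which is simultaneously a martingale and predictable and vanishes at $0$ must vanish identically.
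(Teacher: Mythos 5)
Your proof is correct and is precisely the classical argument (explicit construction of $A_n$ as the accumulated conditional drift, followed by the observation that a predictable martingale vanishing at $0$ is identically zero); the paper itself does not prove this theorem but simply cites Doob's textbook, where this is the proof given. No gaps.
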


See \cite{Doob53} for a statement and a proof of this result.

\begin{lemma}
\label{logconcave}
Let $x>-1$. Then, $\ln(1+x)\leq x$.
\end{lemma}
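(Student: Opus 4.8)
The plan is to reduce the claim to an elementary one-variable calculus fact. I would define $g : \,]-1,+\infty[\,\to\R$ by $g(x) = x - \ln(1+x)$ and prove that $g(x)\ge 0$ for all $x > -1$, which is exactly the asserted inequality. The key steps, in order, are: (i) observe that $g$ is differentiable on its domain with $g'(x) = 1 - \frac{1}{1+x} = \frac{x}{1+x}$; (ii) note that $1+x > 0$ throughout the domain, so the sign of $g'(x)$ coincides with the sign of $x$, whence $g$ is decreasing on $\,]-1,0]$ and increasing on $[0,+\infty[\,$; (iii) conclude that $g$ attains its global minimum at $x = 0$, where $g(0) = -\ln 1 = 0$, so that $g(x)\ge g(0) = 0$ everywhere, with equality only at $x=0$.

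An alternative route I could take, bypassing derivatives altogether, is to invoke concavity of $\ln$ directly: the graph of $\ln$ lies below its tangent line at the point $(1,0)$, and that tangent is the map $t\mapsto t-1$; substituting $t = 1+x$ — legitimate precisely because the hypothesis ensures $1+x > 0$ — yields $\ln(1+x)\le x$. Yet another one-line option is to start from the standard inequality $e^y\ge 1+y$ (valid for all real $y$) and take logarithms after checking that $1+x>0$.

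I do not expect a genuine obstacle here, as the statement is entirely standard. The only point requiring a modicum of care is staying inside the domain $x>-1$, so that $\ln(1+x)$ and $\frac{1}{1+x}$ are well defined; this is exactly what the hypothesis provides, and it is also why the lemma is phrased with the restriction $x > -1$ rather than for all real $x$.
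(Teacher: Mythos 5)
Your proof is correct: the first-derivative argument for $g(x) = x - \ln(1+x)$, with the minimum at $x=0$, is the canonical way to establish this inequality, and your attention to the domain restriction $x>-1$ is exactly right. The paper itself states this lemma without proof, treating it as a standard fact, so there is nothing to compare against; any of your three routes would serve.
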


%
%
%

\begin{lemma}[Bernoulli's inequality]
\label{Bernoulli}
Let $x\geq-1$ and $m\in\N$. Then
\[(1+x)^m\geq1+mx.\]
\end{lemma}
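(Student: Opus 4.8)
The plan is to prove this by induction on $m \in \N$. The base case $m = 0$ is immediate: $(1+x)^0 = 1$, and the right-hand side is $1 + 0 \cdot x = 1$, so equality holds. (Equivalently one could start the induction at $m = 1$, where again both sides equal $1 + x$.)

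For the inductive step, suppose $(1+x)^m \ge 1 + mx$ for some $m \in \N$. The key observation is that the hypothesis $x \ge -1$ guarantees $1 + x \ge 0$, so multiplying both sides of the inductive hypothesis by the non-negative quantity $1 + x$ preserves the inequality:
\[
(1+x)^{m+1} = (1+x)^m (1+x) \ge (1 + mx)(1+x).
\]
Then I would expand the right-hand side as $(1+mx)(1+x) = 1 + (m+1)x + mx^2$ and drop the term $mx^2 \ge 0$ to obtain $(1+x)^{m+1} \ge 1 + (m+1)x$, which closes the induction.

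There is essentially no obstacle here; the only point requiring care is the direction of the inequality when multiplying by $1+x$, which is precisely where the assumption $x \ge -1$ is used (and without it the statement is false, e.g.\ $x = -2$, $m = 3$). So the proof is a two-line induction, and I would present it exactly as above.
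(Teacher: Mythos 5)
Your induction is correct and is the standard proof of Bernoulli's inequality: the base case is trivial, and the inductive step correctly uses $x\geq-1$ to justify multiplying by the non-negative factor $1+x$ before discarding $mx^2\geq0$. The paper states this lemma as a classical fact without proof, so there is nothing to compare against; your argument is exactly the canonical one and can stand as is.
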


\begin{proof}[Proof of Theorem~\ref{log1plusBin}]
We have
\[E[\ln(1+X)]=\ln(1+np)+E\left[\ln\left(1+\frac{X-np}{1+np}\right)\right].\]
By Chernoff bounds, for all $\delta\in[0,1]$ we have
\[\Pr(X\leq(1-\delta)np)\leq \exp\left(-\frac{\delta^2}{2}np\right).\]
Let $\delta\in]0,1[$ and $D:=\left(X\geq(1-\delta)np\right)$. Let $Y:=\frac{X-np}{1+np}$.

We have
\[E[\ln(1+Y)]=E[\mathds{1}_D\ln(1+Y)]+E[(1-\mathds{1}_D)\ln(1+Y)].\]

When $D$ is satisfied, we can use a Taylor expansion of the logarithm.
\[R:x\mapsto\ln(1+x)-x+\frac{x^2}{2}\]
is monotonically increasing on $]-1,+\infty[$ and, if $D$ is satisfied, we have $Y_n\geq -\delta$, so

\[E[\mathds{1}_D\ln(1+Y)]\geq E\left[\mathds{1}_D\left(Y-\frac{1}{2}Y^2+R(-\delta)\right)\right].\]
Due to the definition of $D$, the multiplication by $\mathds{1}_D$ is only cutting negative values, thus
\begin{align*}
    E[\mathds{1}_D\ln(1+Y)]&\geq E[Y]-\frac{1}{2}E[Y^2]+R(-\delta)\\
    &=-\frac{np(1-p)}{2(1+np)^2}+R(-\delta).
\end{align*}
By the Taylor-Laplace theorem with integral remainder we have
$R(-\delta)\geq -\frac{\delta^3}{3(1-\delta)^3}$.
With
$\delta\coloneq\frac{(1-p)^{\frac{1}{3}}}{(1-p)^{\frac{1}{3}}+(np)^{\frac{1}{3}}},$
we have
\[E[\mathds{1}_D\ln(1+Y)]\geq -\frac{np(1-p)}{2(1+np)^2}-\frac{1-p}{3np}.\]

Since $x\mapsto \ln(1+x)$ is monotonically increasing, we have
\[E[(1-\mathds{1}_D)\ln(1+Y)]\geq \ln\left(1-\frac{np}{1+np}\right)\exp\left(-\frac{\delta^2}{2}np\right).\]
This logarithm grows in $np$ against a super-polynomial decrease in $np$. So, there exists a constant $S_{min}$ such that if $np\geq S_{min}$, we have
\[E[(1-\mathds{1}_D)\ln(1+Y)]\geq -\frac{1-p}{12np}.\]
Consequently
\[E[1+X]\geq \ln(1+np)-\frac{11}{12}\frac{(1-p)}{np}.\]
\end{proof}

\begin{proof}[Proof of Lemma~\ref{lem:variation}]
We define the process $M_\tau$ as follows. $M_0 = X_0$ and, for $\tau \geq 0$ and $M_{\tau+1}\sim\Bin\left(\lambda,\frac{M_\tau}{\lambda}\right)$ if $M_\tau > X_0 - \Delta$ and $M_{\tau+1} = M_\tau$ otherwise.
By Doob's decomposition theorem (Theorem~\ref{thm:doob}), there exists a martingale $(N_\tau)_\N$ and a predictable process $(A_\tau)_\N$ such that, for all $\tau\in\N$, $(M_\tau)^2=(M_0)^2+N_\tau+A_\tau$. Note that $M$ is a martingale. Consequently, for all $\tau\geq0$, we have
\begin{align*}
    E[(M_{\tau+1}-M_\tau)^2\mid \mathcal{F}_\tau] &= E[(M_{\tau+1})^2-2M_{\tau+1}M_\tau+(M_\tau)^2\mid \mathcal{F}_\tau]\\
    &= E[(M_{\tau+1})^2-(M_\tau)^2\mid \mathcal{F}_\tau]=A_{\tau+1}-A_\tau.
\end{align*}
We sum these equalities to obtain
\[\sum\limits_{k=0}^{\tau-1}E[(M_{k+1}-M_k)^2\mid \mathcal{F}_k]= A_\tau=(M_\tau)^2-(M_0)^2-N_\tau.\]
Now as $N$ is a martingale with $N_0=0$ we have $E[N_\tau]=E[N_0]=0$, therefore
\[E[(M_\tau)^2]= E[(M_0)^2]+\sum\limits_{k=0}^{\tau-1}E[(M_{k+1}-M_k)^2].\]
Finally, $M$ is a martingale so $E[M_\tau]^2= E[M_0]^2 = E[(M_0)^2]$, since $X_0$ is determined. Thus we have
\begin{align*}
    \Var(M_\tau)&\leq \sum\limits_{k=0}^{\tau-1}E[(M_{k+1}-M_k)^2]=\sum\limits_{k=0}^{\tau-1}E[E[(M_{k+1}-M_k)^2\mid \mathcal{F}_k]]\\
    &=\sum\limits_{k=0}^{\tau-1}E[\Var[M_{k + 1} \mid \mathcal{F}_k]] \leq \sum\limits_{k=0}^{\tau-1}E\left[M_k\left(1-\frac{M_k}{\lambda}\right)\right] \\
    &\leq \sum\limits_{k=0}^{\tau-1}E[M_k]= \tau X_0.
\end{align*}
By Chebyshev's inequality,
\[\Pr[M_\tau\leq X_0-\Delta\mid X_0=k] \leq \frac{\tau k}{\Delta^2}.\]
Therefore as long as $M_\tau\in\{X_0 - \Delta + 1,\cdots,\mu\}$ we have $M_\tau\preceq X_\tau$. Let $\tau_1=\inf\{t\in\N\mid M_t>\mu\}$.
By Theorem~\ref{thm:coupling}, there exists a coupling $(\tilde X,\tilde M)$ such that for all $\tau < \tau_1$, we have $\tilde M_\tau \leq \tilde X_\tau$. Therefore if $\Tilde{M}_\tau$ exceeds $\mu$ we can wait until $\tilde X_\tau \leq X_0$ and restart the argument with $t=\tau_1$ and $\tau'=\tau-\tau_1\leq\tau$. Finally we have
\[\Pr[\exists t \in [0..\tau]: X_t\leq X_0-\Delta\mid X_0=k]\leq \frac{\tau k}{\Delta^2}.\]

\end{proof}

\begin{proof}[Proof of Lemma~\ref{lem:dirty-trick}]
Although the drift of $X_t$ towards $X'$ is at least $\Delta_t$, we cannot apply the additive drift theorem from the box, since this drift partially comes from the fact that $X_t$ is surely larger than $X'$.

To overcome this problem we define the potential function $\Phi(X)$ for all $X \in \N$ as follows.
\begin{align*}
  \Phi(X) =
  \begin{cases}
    0, \text{ if } X \ge X', \\
    2X' - X, \text{ else.}
  \end{cases}
\end{align*}

To ease the notation we introduce another random process $\tilde X_t \sim \Bin(\lambda, \frac{X_t + \Delta_t}{e\mu})$.
Note that $E[\tilde X_t] =  X_{t - 1} + \Delta_{t - 1}$. We also define $p_t(i) \coloneqq \Pr[X_t = i \mid X_{t - 1}]$ and $q_t(i) \coloneqq \Pr[\tilde X_t = i \mid X_{t - 1}]$. Note that if $i < \mu$, then $p_t(i) = q_t(i)$ and $p_t(\mu) = \sum_{i = \mu}^\lambda q_t(i)$.
Hence, we estimate the expected difference in the potential function after one step of the process as follows.

\begin{align*}
E[\Phi(X_t) - \Phi(X_{t + 1}) &\mid X_t] = \sum_{i = 0}^{X' - 1} p_{t + 1}(i) (i - X_{t}) \\
                               &+ \sum_{i = X'}^{\mu} p_{t + 1}(i) (2X' - X_{t})  \\
                            &\geq \sum_{i = 0}^{2X'} q_{t + 1}(i) i              + \sum_{i = 2X' + 1}^{\lambda} q_{t + 1}(i) 2X' - X_t \\
                            &\geq \sum_{i = 0}^{\lambda} q_{t + 1}(i) i             + \sum_{i = 2X' + 1}^{\lambda} q_{t + 1}(i) (2X' - i) - X_t \\
                            &\geq E[\tilde X_{t + 1}] - X_t - \sum_{i = 2X' + 1}^{\lambda} q_{t + 1}(i) i\\
                            &\geq \Delta_t - \lambda \Pr[\tilde X_{t + 1} > 2X'].
\end{align*}

If $\Delta_t \leq \frac{X'}{2}$, then by Chernoff bounds we have

\begin{align*}
  \Pr[\tilde X_{t + 1} \geq 2X'] &= \Pr\left[\tilde X_{t + 1} \geq \left(1 + \frac{2X'}{X_t + \Delta_t} - 1\right)(X_t + \Delta_t)\right] \\
                              &\leq \exp\left( -\left(\frac{2X'}{X_t + \Delta_t} - 1\right)^2 \frac{X_t + \Delta_t}{3}\right) \\
                              &\leq \exp\left( -\frac{(X' - \Delta)^2}{3(X' + \Delta)}\right) \leq \exp\left( -\frac{X'}{18}\right).
\end{align*}
Since by the lemma conditions we have $X' \geq 18\ln\frac{2\lambda}{\Delta_{\min}}$, we have $\Pr[\tilde X_{t + 1} > 2X'] \leq \frac{\Delta_{\min}}{2\lambda}$. Hence we obtain

\[
E[\Phi(X_t) - \Phi(X_{t + 1}) \mid X_t] \geq \Delta_t - \lambda \frac{\Delta_{\min}}{2\lambda} \geq \frac{1}{2}\Delta_{\min}.
\]

Otherwise, if $\Delta_t \geq \frac{X'}{2} > 1$, we have $\frac{(X_t + \Delta_t)}{e\mu} > \frac 1\lambda$ and thus by Theorem~\ref{exexp} we have $\Pr[X_{t + 1} \geq X_t + \Delta_t] \ge \tfrac{1}{4}$. At the same time by Chernoff bounds we have $\Pr[X_{t + 1} < X_t] \le \exp(-\frac{X'}{6})$.
 Hence, the drift of the potential function is at least $\frac{\min\{X', \Delta_{\min}\}}{4} - X'\exp(-\frac{X'}{6}) \ge \frac{X'}{8} - 3 \ge \frac{X'}{12}$.

Finally, applying the additive drift theorem (Theorem~\ref{thm:addDrift}) we have
\begin{align*}
E[T(X')] &\leq \frac{\Phi(X_0)}{\min\{\frac{1}{2}\Delta_{\min}, \frac{1}{12}X'\}} \leq \max\left\{\frac{4X' - 2X_0}{\Delta_{\min}}, 24\right\}.
\end{align*}
\end{proof}

\begin{proof}[Proof of Theorem~\ref{potentialCap}]
Let $x$ be an individual such that $f(x)\geq f_0$ and $d=n-f(x)$. We argue that
\[
g(\mathcal{M}x) = g(\mathcal{M}x)\mathds{1}_{(\delta_x>0)} + g(\mathcal{M}x)\mathds{1}_{(\delta_x<0)} + g(\mathcal{M}x)\mathds{1}_{(\delta_x=0)}
\]
and analyze each term separately.

\textbf{Positive $\delta_x$.} We have
\[E[g(\mathcal{M}x)\mathds{1}_{(\delta_x>0)}]=g(x)\sum\limits_{y=1}^d\Pr(\delta_x=y)\tau^y.\]
By Lemma~\ref{lem:mutlaw},
\begin{align*}
    \sum\limits_{y=1}^d\Pr(\delta_x=y)\tau^y&\leq\sum\limits_{y=1}^d\dbinom{d}{y}\left(\frac{1}{n}\right)^{y}\tau^y\\
    &=\left(1+\frac{\tau}{n}\right)^{d}-1\leq\exp\left(\frac{d\tau}{n}\right)-1.
\end{align*}
As $f(x)\geq f_0\geq\alpha n$ we have $d\leq(1-\alpha)n$. By the definition of $\alpha$ (Definition~\ref{potentialG}),
\[E[g(\mathcal{M}x)\mathds{1}_{(\delta_x>0)}]\leq \left(\exp\left(\frac{d\tau}{n}\right)-1\right)g(x)\leq \frac{g(x)}{\tau}.\]
\textbf{Negative $\delta_x$.} We have
\begin{align*}
   E[g(\mathcal{M}x)\mathds{1}_{(\delta_x<0)}]&=g(x)\sum\limits_{y=f_0-f(x)}^{-1}\tau^y\Pr(\delta_x=y)\\
   &\leq\frac{g(x)}{\tau}\sum\limits_{y<0}\Pr(\delta_x=y)
   \leq\frac{g(x)}{\tau}.
\end{align*}
\textbf{Zero $\delta_x$.} By Lemma~\ref{lem:Pdelta_x=0} we have
\[E[g(\mathcal{M}x)\mathds{1}_{(\delta_x=0)}]=g(x)\sum\limits_{k=0}^d\dbinom{d}{k}\dbinom{n-d}{k}\left(\frac{1}{n}\right)^{2k}\left(1-\frac{1}{n}\right)^{n-2k}\]
\[=\left(1-\frac{1}{n}\right)^ng(x)+g(x)\sum\limits_{k=1}^d\dbinom{d}{k}\dbinom{n-d}{k}\left(\frac{1}{n}\right)^{2k}\left(1-\frac{1}{n}\right)^{n-2k}.\]
Hence
\[E[g(\mathcal{M}x)\mathds{1}_{(\delta_x=0)}]-\left(1-\frac{1}{n}\right)^ng(x)\leq g(x)\sum\limits_{k=1}^d\frac{d^k}{k!}\frac{(n-d)^k}{k!}\left(\frac{1}{n}\right)^{2k}.\]
Now, we have $\alpha>\frac 34$, thus $f(x)\geq f_0>\frac{n}{2}$ and $d=n-f(x)\leq n-f_0<\frac{n}{2}$. Since the polynomial $X(n-X)$ is monotonically increasing on $[0,\frac{n}{2}]$, we have $d(n-d)\leq (n-f_0)f_0$.
Thus $\frac{d(n-d)}{n^2}\leq\frac{n-f_0}{n}\leq 1-\alpha$, which leads to
\[E[g(\mathcal{M}x)\mathds{1}_{(\delta_x=0)}]-\left(1-\frac{1}{n}\right)^ng(x)\leq(1-\alpha)g(x)\sum\limits_{k=1}^\infty\frac{1}{k!^2}.\]
Let $b=\sum\limits_{k=1}^\infty\frac{1}{k!^2}$. By the analysis of the function $\tau\mapsto\ln\left(1+\frac{1}{\tau}\right)$, we have $(1-\alpha)b\leq \frac{2}{\tau}$ when $\tau>1$. By Lemma~\ref{logconcave}, $\left(1-\frac{1}{n}\right)^n\leq\frac{1}{e}$, thus
\[E[g(\mathcal{M}x)\mathds{1}_{(\delta_x=0)}]\leq \left(\frac{1}{e}+\frac{2}{\tau}\right)g(x).\]
Finally, we obtain
\begin{align*}
    E[g(\mathcal{M}x)]&=E\left[g(\mathcal{M}x)\mathds{1}_{(\delta_x>0)}\right]+ E\left[g(\mathcal{M}x)\mathds{1}_{(\delta_x<0)}\right]\\& + E\left[g(\mathcal{M}x)\mathds{1}_{(\delta_x=0)}\right]\leq \frac{1}{e}(1+\varepsilon)g(x).
\end{align*}
\end{proof}

\begin{proof}[Proof of Lemma~\ref{betterNeglected}]
Let $x$ be an individual such that $f(x)<f_0$. Then $\mathcal{M}x$ has potential zero unless it gains at least the difference between $f(x)$ and $f_0$, which is at least $1$. Using the law of total probability and Lemma~\ref{lem:mutlaw}, we have
\begin{align*}
    E[g(\mathcal{M}x)]&=\sum\limits_{k=f_0-f(x)}^{n-f(x)}\Pr(\delta_x=k)\tau^{k-f_0+f(x)}\\
    &\leq \frac{1}{\tau^{f_0-f(x)}}\sum\limits_{k=f_0-f(x)}^{n-f(x)}\dbinom{n-f(x)}{k}\left(\frac{\tau}{n}\right)^k\\
    &\leq \frac{\tau^{n-f_0}}{\tau^{n-f(x)}}\left(1+\frac{\tau}{n}\right)^{n-f(x)}=\tau^{n-f_0}\left(\frac{1+\frac{\tau}{n}}{\tau}\right)^{n-f(x)}.
\end{align*}
As $\tau>2$, we have $1+\frac{\tau}{n}<\tau$ for all $n$, hence this formula is monotonically increasing when $f(x)$ increases. Therefore, as $f(x)<f_0$, we have
\[E[g(\mathcal{M}x)]\leq\left(1+\frac{\tau}{n}\right)^{n-f_0}\leq\left(1+\frac{\tau}{n}\right)^{(1-\alpha)n}.\]
Due to the definition of $\alpha$ (Definition~\ref{potentialG}), by Lemma~\ref{logconcave} we deduce
\[E[g(\mathcal{M}x)]\leq 1+\frac{1}{\tau}\leq 2.\]
\end{proof}

\begin{proof}[Proof of Lemma~\ref{Ez0}]
If $U$ is the uniform random variable over $\E$, then
\[E[Z_0] \geq \tau^{n-f_0}-E[g(P_0)] = \tau^{n-f_0}-\mu E[g(U)].\]
Now, for all $x\in\E$, if $f(x)\leq \frac{3}{4}n$, then $f(x)<f_0$ and $g(x)=0$. Since $f(U)\sim\Bin(n,\frac{1}{2})$, with a Chernoff bound we have
\[\Pr\left(f(U)\geq \frac{3}{4}n\right)\leq \exp\left(-\frac{n}{12}\right).\]
Consequently $E[g(U)]\leq \tau^{n-f_0}\exp\left(-\frac{n}{12}\right)$, thus
\[E[Z_0] \geq \tau^{n-f_0}\left[1-\mu\exp\left(-\frac{n}{12}\right)\right].\]
Assuming that $\mu$ is sub-exponential, if $n$ is large enough then
\[E[Z_0]\geq \frac{1}{2}\tau^{n-f_0}.\]
\end{proof}

\begin{proof}[Proof of Theorem~\ref{boundedTopLevel}]
Assume that $X_t=s$ for some $s\in\N$. Let $(p_n)_\N$ be the sequence from Lemma~\ref{lem:PrifN_l}. By Lemma~\ref{lem:PrifN_l}, there exists $Y_1,\cdots, Y_\lambda\sim\Ber(p_n)$ i.i.d. such that $X_{t+1}\leq\sum\limits_{k=1}^\lambda Y_k:=B$. As the function $h:x\mapsto x(\ln\mu-\ln x+2)$ is monotonically increasing over $[0,\lambda]$, we have
\begin{align*}
    E[h(P_t)-h(P_{t+1})&\mid h(P_t)=h(s)]\geq E[h(s)-h(B)]\\
    &=(\ln\mu+2)E[s-B]+E[B\ln B-s\ln s].
\end{align*}
By Lemma~\ref{lem:PrifN_l} we have $E[B]=\lambda p_n\leq s$, thus
\[E[h(P_t)-h(P_{t+1})\mid h(P_t)=h(s)]\geq E[B\ln B-s\ln s].\]

Let $S_{min}$ be the constant from Theorem~\ref{log1plusBin}.

By Lemma~\ref{lem:PrifN_l} we have $\lambda p_n\underset{n\to+\infty}{\longrightarrow}s$ uniformly. Therefore if $n$ is large enough and if $s\geq S_{min}+1$, we have $(\lambda-1) p_n\geq S_{min}$. We use Theorem~\ref{log1plusBin} on $\sum\limits_{i=1}^{\lambda-1}Y_i\sim\Bin(\lambda-1,p_n)$ and deduce
\begin{align*}
    E[B\ln B] &= \sum\limits_{k=1}^\lambda E[Y_k\ln B]
    =p_n\sum\limits_{k=1}^\lambda E\left[\ln\left(1+\sum\limits_{i=1}^{\lambda-1}Y_i\right)\right]\\
    &\geq\lambda p_n\left(\ln(1+(\lambda-1)p_n)-\frac{11}{12}\frac{1-p_n}{(\lambda-1)p_n}\right).
\end{align*}
Let $\varepsilon=\frac{e-2}{36e}$ and let $S=\max\left\{S_{min}+1,\ceil{\frac{1}{\varepsilon}}+1\right\}$ so that, if $s\geq S$ and if $n$ is large enough, $\frac{(1-p_n)^2}{\lambda p_n}\leq \varepsilon$. In addition, by the uniform convergence, if $n$ is large enough, $s\ln s\leq \lambda p_n\ln(\lambda p_n)+\varepsilon$ and we have
\[E[B\ln B-s\ln s]\]
\begin{align*}
    &\geq \lambda p_n\left(\ln\left(1+\frac{1-p_n}{\lambda p_n}\right)-\frac{11(1-p_n)}{12\lambda p_n}\right)-\varepsilon\\
    &\geq \frac{1}{12}(1-p_n)-\frac{1}{2}\frac{(1-p_n)^2}{\lambda p_n}-\varepsilon\geq \frac{1-p_n}{12}-\tfrac{3}{2}\varepsilon.
\end{align*}
Now recall that $s\leq\mu$, so by the conditions (\ref{eq:lambdaleqemu}) we have $p_n\leq \frac{s}{\lambda}\leq\frac{\mu}{\lambda}\underset{n\to+\infty}{\longrightarrow}\frac{1}{e}$, therefore, if $n$ is large enough we have $1-p_n\geq1-\frac{2}{e}$. Consequently, if $s\geq S$, due to the choice of $\varepsilon$ we have
\[E[h(P_t)-h(P_{t+1})\mid P_t] \geq \frac{e-2}{24e}.\]
\end{proof}

\begin{proof}[Proof of Lemma~\ref{lem:large-mu-phase-1}]

%

Let the current level be $f \le \frac{n}{3}$ at generation $t = 0$. Then we have $X_0 \ge \frac{\mu}{2}$.

We have $Y_1 \succeq \min\left\{\mu, \Bin(\lambda, \frac{2X_0}{3e\mu}) \right\}$, since we create an offspring with fitness at least $f + 1$ when we select an offspring with witness $f$ (with probability $\frac{X_0}{\mu}$) and flip only one wrong bit in it (with probability at least $\frac{2}{3e}$).
By Chernoff bounds we have the probability that $Y_1 < \frac{\mu}{3} (1 - \mu^{-\frac 13})$ is at most $\exp(-\frac{\mu^{\frac 13}}{6})$.

At the same time we have $X_1 \succeq \min\{\mu, \Bin(\lambda, \frac{X_0}{e\mu}) \}$ and by Chernoff bounds we have the probability that $X_1 < X_0(1 - \mu^{-\frac 13})$ is at most $\exp(-\frac{\mu^{\frac 13}}{4})$.

Given that $Y_1 \ge \frac{\mu}{3} (1 - \mu^{-\frac 13})$ and $X_1 \ge \frac{\mu}{2} (1 - \mu^{-\frac 13})$ we have $Y_2 \succeq \min\{\mu, \Bin(\lambda, \frac{2X_1}{3e\mu} + \frac{Y_1}{e\mu}) \}$. Therefore, by Chernoff bounds we have

\begin{align*}
  \Pr\left[Y_2 < \frac{\mu}{2}\right] &\le \Pr\left[Y_2 < \left(Y_1 + \frac{2X_1}{3}\right)(1 - \mu^{-\frac 13})\right] \\
                                      &\le \exp\left(-\frac{\frac{2\mu}{3}(1 - \mu^{-\frac 13})}{2\mu^{\frac 23}} \right) \le \exp\left(-\frac{\mu^{\frac 13}}{3} \right),
\end{align*}
if $\mu \ge 3$. By union bound we have the probability that the algorithm does not gain a level in two iterations is at most $\exp(-\frac{\mu^{\frac 13}}{6}) + \exp(-\frac{\mu^{\frac 13}}{4}) + \exp(-\frac{\mu^{\frac 13}}{3}) \le 3 \exp(-\frac{\mu^{\frac 13}}{6})$.

The probability that the algorithm reaches current level $\frac{n}{3}$ in not more than $\frac{2n}{3}$ iterations is at least $(1 - 3\exp(-\frac{\mu^{\frac 13}}{6}))^{2\frac n3} = 1 - o(1)$. If the algorithm does not reach current level $\frac{n}{3}$ in this number of iterations, in the worst case its current level is zero, and it starts another attempt. The probability that the algorithm needs another attempt is $o(1)$, so if $n$ is large enough the expected number of such attempts is at most $2$. This results that the expected number of generations of the first phase is at most $\frac{4n}{3} = O(n)$.

\end{proof}

\begin{proof}[Proof of Lemma~\ref{lem:large-mu-stay-forever}]
We split the runtime of the algorithm while its current level is $f$ into cycles. Each cycle can be either \emph{successful}, \emph{unsuccessful} or \emph{totally unsuccessful}. After a \emph{successful} cycle the algorithm has at least $\frac{\mu}{2}$ individuals of fitness exactly $f$ in the population. After an \emph{unsuccessful} cycle that started with $m$ individuals of fitness $f$ in the population, there are at least $m - 2\Delta_\mu$ individuals of fitness $f$ in the population, where $\Delta_\mu \coloneqq n^{\frac 23}h^{\frac 34}(n)$.
An uninterrupted series of $\frac{\mu}{8\Delta_\mu} = \frac{h^{\frac 14}(n)}{8}$ unsuccessful cycles results into a level loss.
A \emph{totally unsuccessful} cycle is a cycle that is neither successful, nor unsuccessful. After a totally unsuccessful cycle we pessimistically assume that the algorithm loses a level.
We aim to show that the event of a level loss is not likely to happen for long enough.

Each cycle is split into two phases. Consider some cycle that starts at generation $\tau_0$. To shorten the notation assume that $\tau_0 = 0$, however it does not mean that we regard only the first cycle. The first phase of the cycle terminates after $\tau_1$ generations, that is, at the first generation such that either $X_{\tau_1} < X_0 - \Delta_\mu$ or $Y_{\tau_1} \ge \mu_0$, where $\mu_0 \coloneqq n^{\frac 13}h(n)$.
If at the end of the first phase we have $X_{\tau_1} < X_0 - \Delta_\mu$ then the cycle is terminated and we consider it as either unsuccessful or totally unsuccessful if $X_{\tau_1} < X_0 - 2\Delta_\mu$. Otherwise, the cycle enters the second phase, which starts with at least $\mu_0$ individuals of fitness greater than $f$.

The second phase (and the cycle as well) ends after $\tau_2$ more generations, where $\tau_2$ is the first integer such that $X_{\tau_1 + \tau_2} \ge \frac{\mu}{2}$ or $X_{\tau_1 + \tau_2} < X_{\tau_1}$ or $Y_{\tau_1 + \tau_2} < \frac{\mu_0}{2}$.
If $X_{\tau_1 + \tau_2} \ge \frac{\mu}{2}$, then the cycle is successful, otherwise it is either unsuccessful or totally unsuccessful if $X_{\tau_1 + \tau_2} < X_{\tau_1}$.

%

Now we consider each phase in details to show that the probability that the cycle is successful is high.

\textbf{The First Phase.}
The probability that the cycle is neither unsuccessful nor totally unsuccessful after the first phase is at least the probability that for $\tau_1^* \coloneqq n^{\frac 23}h^{\frac 14}(n)$ the number $Y_t$ of individuals on the upper levels reaches $\mu_0$ in less than $\tau_1^*$ generations while the number $X_t$ of individuals on the current level $X_t$ does not go below $X_0 - \Delta_\mu$
until generation $\tau_1^*$.
By Lemma~\ref{lem:variation} we have the probability that $X_t \le X_0 - \Delta_\mu$ for some $t \le \tau_1^*$ is at most $\frac{\tau_1^* X_0}{\Delta_\mu^2} \le \frac{1}{h^{\frac 14}(n)}$.


To estimate the expected runtime before $Y_t$ becomes at least $\mu_0$ we apply Lemma~\ref{lem:dirty-trick} to $Y_t$. For this reason we note that $Y_{t + 1} \succeq \min\{\mu, \Bin(\lambda, \frac{Y_t}{e\mu} + \frac{X_t}{e n \mu})\}$,
since we can obtain individuals in the upper levels either by copying one of $Y_t$ individuals with probability at least $\frac{1}{e}$ or by creating a superior offspring from one of $X_t$ individuals with probability at least $\frac{(n - f)}{en} \ge \frac{1}{en}$.
Since during the cycle we have $X_t$ at least $\frac{\mu}{4}$, we have $\Delta_{\min} \ge \frac{\mu}{4n}$. By taking $X' = \mu_0$, that is, at least $\max\{48, 18\ln\frac{\lambda}{\Delta_{\min}}$ if $n$ is large enough, and applying Lemma~\ref{lem:dirty-trick} we obtain that the expected runtime before $Y_t$ exceeds $\mu_0$ for the first time is at most $\max\{\frac{4\mu_0 n}{\mu}, 24\} = 4n^{\frac 23}$, if $n$ is large enough.
By Markov's inequality the probability that this time exceeds $\tau_1^*$ is at most
\[
\frac{4n^{\frac 23}}{\tau_1^*} = \frac{4}{h^{\frac 14}(n)}.
\]
So as an intermediate result we have the probability that the cycle is neither unsuccessful, nor totally unsuccessful after the first phase is at least $1 -  \frac{1}{h^{\frac 14}(n)} - \frac{4}{h^{\frac 14}(n)} = 1 - \frac{5}{h^{\frac 14}(n)}$.

\textbf{The Second Phase.} Proceeding to the second phase of the cycle, we notice that it starts with $X_{\tau_1} \ge \frac{\mu}{4}$ and $Y_{\tau_1} \ge \mu_0$ and the cycle is successful if after the second phase we have $X_{\tau_1 + \tau_2} \ge \frac{\mu}{2}$.


The probability that the cycle succeeds after the second phase is at least the probability that for $\tau_2^* := n^{1/3}h^{1/3}(n)$ we have $Y_t \ge \frac{\mu_0}{2}$ for all $t \in [\tau_1..\tau_1 + \tau_2^*]$ and there exists some $t \le \tau_2^*$ such that $X_{\tau_1 + t} \ge \frac{\mu}{2}$ and $X_t$ does not decrease for $t \in [\tau_1, \tau_1 + \tau_2^*]$.

By  Lemma~\ref{lem:variation} the probability that $Y_t < \frac{\mu_0}{2}$ for some $t < \tau_1 + \tau_2^*$ is at most  $\frac{4\tau_2^* Y_{\tau_1}}{\mu_0^2} \le \frac{4}{h^{1/2}(n)}$.

By the application of Lemma~\ref{lem:dirty-trick} to $X_{t + 1} \sim \Bin(\lambda, \frac{X_t}{e\mu} + \frac{fY_t}{e\mu})$ we have the expected runtime before $X_t$ becomes more than $\frac{\mu}{2}$ is at most $\frac{4\mu/2 - 2\mu/4}{\frac{\mu_0}{6}} = 9n^{\frac 13}$.
By Markov's inequality we have that the probability that $X_t$ does not become greater than $\frac{\mu}{2}$ in less than $\tau_2^*$ generations is at most $\frac{9n^{\frac 13}}{\tau_2^*} = \frac{9}{h^{\frac 12}(n)}$.

Finally, by Chernoff bounds the probability that $X_t$ decreases in one generation during the second phase is at most $\exp(-\frac{Y_t^2}{9E[X_t]}) \le \exp(-\frac{\mu_0^2}{36\mu}) \le \exp(-\frac{h(n)}{36})$. The probability that $X_t$ does not decrease during $\tau_2^*$ generations is at least $(1 - \exp(-\frac{h(n)}{36}))^{n^{\frac 13}h^{\frac 12}(n)}$.
Since we have $h(n) \ge \ln^4(n)$, if $n$ is large enough this probability is at least $1 - \frac{1}{h^{\frac 12}(n)}$.

Summing up, the probability that the cycle is not successful after the second phase is at most $\frac{4}{h^{\frac 12}(n)} + \frac{9}{h^{\frac 12}(n)} + \frac{1}{h^{\frac 12}(n)} + \le \frac{15}{h^{\frac 12}(n)}$. If we also take into account the probability not to be unsuccessful after the first phase, we estimate the probability of a successful cycle $p_s$ as

\begin{align*}
p_s \ge 1 - \frac{5}{h^{\frac 14}(n)} - \frac{15}{h^{\frac 12}(n)} \ge 1 - \frac{6}{h^{\frac 14}(n)},
\end{align*}
if $n$ is large enough.

\textbf{Losing a Level.} The cycle is totally unsuccessful in two cases. The first case is when in the end of the first phase we have $X_{\tau_1} < X_0 - 2\Delta_\mu$. Notice that for this to happen we need $X_{\tau_1 - 1} - X_{\tau_1} > \Delta_\mu$, and by Chernoff bounds the probability of this is at most $\exp(-n^{\frac 23}h^{\frac 12}(n))$. The second case is when in the end of the second phase we have $X_{\tau_1 + \tau_2} < X_{\tau_1} - \Delta_\mu$.
The probability of this event is at most the probability that at the last generation of the second phase $X_t$ decreased, that is, at most $\exp(-\frac{h(n)}{9})$. Hence, the probability $p_{tu}$ of a totally unsuccessful cycle is at most

\begin{align*}
p_{tu} &\le \exp(-n^{\frac 23}h^{\frac 12}(n)) + \exp(-\frac{h(n)}{9})\\& \le \exp(-h^{\frac 12}(n)) \le \exp(-\ln^2(n)) \le \frac{1}{n^3},
\end{align*}
if $n > e^3$, since we assume that $h(n) \ge \ln^4(n)$.

Except an unsuccessful cycle we have only one possible way to lose a level, that is a series of $\frac{\mu}{8\Delta_\mu}$ unsuccessful cycles in a row.
The probability $p_{us}$ of such series is at most
\begin{align*}
 p_{us} \le (1 - p_s)^\frac{\mu}{8\Delta_\mu} \le \left(\frac{6}{h^{\frac 14}(n)}\right)^{\frac{h^{\frac 14}(n)}{8}} \le \frac{1}{n^3},
\end{align*}
if $n$ is large enough.

Since each cycle takes at least one generation, the probability not to lose a level after $t$ generations is at most $(1 - p_{tu} - p_{us})^t \ge (1 - \frac{2}{n^3})^t$.

\end{proof}

\end{document}